\definecolor{asparagus}{rgb}{0.53, 0.66, 0.42}
\definecolor{sacramentostategreen}{rgb}{0.0, 0.3, 0.15}
\definecolor{teal}{rgb}{0.0, 0.5, 0.5}
\definecolor{forestgreen}{rgb}{0.13, 0.6, 0.13}
\newtheorem{theorem}{Theorem}[section]
\newtheorem{definition}[theorem]{Definition}
\newtheorem{model}{Model}
\newtheorem{lemma}[theorem]{Lemma}
\newtheorem{question}{Question}
\newtheorem*{problem*}{Problem}
\theoremstyle{remark}
\Crefname{theorem}{Theorem}{Theorems}
\Crefname{claim}{Claim}{Claims}
\Crefname{lemma}{Lemma}{Lemmas}
\Crefname{proposition}{Proposition}{Propositions}\usepackage[symbol]{footmisc}
\Crefname{corollary}{Corollary}{Corollaries}
\Crefname{definition}{Definition}{Definitions}
\newcommand{\E}{\mathbb{E}}
\newcommand{\R}{\mathbb{R}}
\newcommand{\cD}{\mathcal{D}}
\newcommand{\cL}{\mathcal{L}}
\newcommand{\mA}{\mathbf{A}}
\newcommand{\mD}{\mathbf{D}}
\newcommand{\mE}{\mathbf{E}}
\newcommand{\mI}{\mathbf{I}}
\newcommand{\mJ}{\mathbf{J}}
\newcommand{\mL}{\mathbf{L}}
\newcommand{\mM}{\mathbf{M}}
\newcommand{\mN}{\mathbf{N}}
\newcommand{\mP}{\mathbf{P}}
\newcommand{\va}{\bm{a}}
\newcommand{\vd}{\bm{d}}
\newcommand{\ve}{\bm{e}}
\newcommand{\vl}{\bm{l}}
\newcommand{\vu}{\bm{u}}
\newcommand{\vw}{\bm{w}}
\def\eps{\varepsilon}
\def\eps{\varepsilon}
\algnewcommand{\LeftComment}[1]{\State \textcolor{gray}{\(\triangleright\) \emph{#1}}}
\newcommand{\onev}{\mathbbm{1}}
\newcommand{\inparen}[1]{\left(#1\right)}             %\inparen{x+y}  is (x+y)
\newcommand{\inbraces}[1]{\left\{#1\right\}}           %\inbrace{x+y}  is {x+y}
\newcommand{\insquare}[1]{\left[#1\right]}             %\insquare{x+y}  is [x+y]
\newcommand{\indicator}[1]{\mathbbm{1}\inbraces{#1}}
\newcommand{\abs}[1]{\ensuremath{\left\lvert #1 \right\rvert}}
\newcommand{\norm}[1]{\ensuremath{\left\lVert #1 \right\rVert}}
\newcommand{\opnorm}[1]{\ensuremath{\left\lVert #1 \right\rVert_{\mathrm{op}}}}
\newcommand{\maxnorm}[1]{\ensuremath{\left\lVert #1 \right\rVert_{\infty}}}
\newcommand{\fnorm}[1]{\ensuremath{\left\lVert #1 \right\rVert_{F}}}
\newcommand{\ip}[1]{\left\langle #1 \right\rangle}
\let\nfrac=\nicefrac
\renewcommand{\Pr}{\mathsf{Pr}}
\newcommand{\exv}[1]{\E\insquare{#1}}
\newcommand{\expv}[1]{\mathsf{exp}\inparen{#1}}
\newcommand{\prv}[1]{\Pr\insquare{#1}}
\newcommand{\logv}[1]{\log\inparen{#1}}
\newcommand{\diag}[1]{\mathsf{diag}\inparen{#1}}
\newcommand{\signv}[1]{\mathsf{sign}\inparen{#1}}
\newcommand{\customlabel}[2]{%
   \protected@write \@auxout {}{\string \newlabel {#1}{{#2}{\thepage}{#2}{#1}{}} }%
   \hypertarget{#1}{#2}
}
\newcounter{casenum}
\newcounter{subcasenum}
\newcounter{casenump}
\newcommand{\casep}[2]{
    \ifthenelse{\equal{\value{casenump}}{0}}{
    \vskip.5\baselineskip\par\noindent
    }{}
    {\it Case \arabic{casenump}:} {\it #1}
    \vskip0.1\baselineskip
    \begin{addmargin}[1.5em]{1em}
    #2
    \end{addmargin}
    \addtocounter{casenump}{1}
}
\newcounter{subcasenump}
\newcommand{\vdin}{\vd_{\mathsf{in}}}
\newcommand{\vdout}{\vd_{\mathsf{out}}}
\newcommand{\din}{d_{\mathsf{in}}}
\newcommand{\ddet}{\vd_{\mathsf{det}}}
\newcommand{\rand}{\mathsf{rand}}
\newcommand{\Astar}{\mA^{\star}}
\newcommand{\Dstar}{\mD^{\star}}
\newcommand{\Lstar}{\mL^{\star}}
\newcommand{\cLstar}{\cL^{\star}}
\newcommand{\Lhat}{\widehat{\mL}}
\newcommand{\pbar}{\overline{p}}
\newcommand{\utwostar}{\vu_2^{\star}}
\newcommand{\utwohat}{\widehat{\vu_2}}
\newcommand{\astar}{\va^{\star}}
\definecolor{ForestGreen}{RGB}{34, 139, 34}
\definecolor{forest}{RGB}{0,155,85}
\newcommand{\fixout}{\bgroup\markoverwith{\textcolor{forest}{\rule[0.5ex]{2pt}{0.4pt}}}\ULon}
\newif\ifnotes
\newcommand{\nnote}[1]{\@bsphack\ifnotes{$\ll$\textsf{\color{blue} Naren: { #1}}$\gg$}\fi\@esphack}
\newcommand{\mnote}[1]{\@bsphack\ifnotes{$\ll$\textsf{\color{magenta} Michael: { #1}}$\gg$}\fi\@esphack}
\newcommand{\knote}[1]{\@bsphack\ifnotes{$\ll$\textsf{\color{green} Kshiteej: { #1}}$\gg$}\fi\@esphack}
\newcommand{\dnote}[1]
{\@bsphack\ifnotes{$\ll$\textsf{\color{purple} Davide: { #1}}$\gg$}\fi\@esphack}
\newcommand{\wnote}[1]
{\@bsphack\ifnotes{$\ll$\textsf{\color{cyan} Weronika: { #1}}$\gg$}\fi\@esphack}
\newcommand{\anote}[1]
{\@bsphack\ifnotes{$\ll$\textsf{\color{brown} Agastya: { #1}}$\gg$}\fi\@esphack}
\newcommand{\smallpar}[1]{\textbf{#1}\quad}
\numberwithin{equation}{section}
\begin{document}
\title{On the Robustness of Spectral Algorithms for Semirandom Stochastic Block Models}

\author{
Aditya Bhaskara
\thanks{University of Utah. Email: \url{bhaskaraaditya@gmail.com}.}
\and
Agastya Vibhuti Jha
\thanks{University of Chicago. Email: \url{agastyavjha.28@gmail.com}.}
\and
Michael Kapralov
\thanks{École Polytechnique Fédérale de Lausanne. Email: \url{michael.kapralov@epfl.ch}.}
\and
Naren Sarayu Manoj
\thanks{Toyota Technological Institute at Chicago. Email: \url{nsm@ttic.edu}.}
\and 
Davide Mazzali\thanks{École Polytechnique Fédérale de Lausanne. Email: \url{davide.mazzali@epfl.ch}.}
\and
Weronika Wrzos-Kaminska
\thanks{École Polytechnique Fédérale de Lausanne. Email: \url{weronika.wrzos-kaminska@epfl.ch}.}
}
\date{\today}

\maketitle

\begin{abstract}
In a graph bisection problem, we are given a graph $G$ with two equally-sized unlabeled communities, and the goal is to recover the vertices in these communities. A popular heuristic, known as spectral clustering, is to output an estimated community assignment based on the eigenvector corresponding to the second smallest eigenvalue of the Laplacian of $G$. Spectral algorithms can be shown to provably recover the cluster structure for graphs generated from certain probabilistic models, such as the Stochastic Block Model (SBM). However, spectral clustering is known to be non-robust to model mis-specification. Techniques based on semidefinite programming have been shown to be more robust, but they incur significant computational overheads. 

In this work, we study the robustness of spectral algorithms against semirandom adversaries. Informally, a semirandom adversary is allowed to ``helpfully'' change the specification of the model in a way that is consistent with the ground-truth solution. Our semirandom adversaries in particular are allowed to add edges inside clusters or increase the probability that an edge appears inside a cluster. Semirandom adversaries are a useful tool to determine the extent to which an algorithm has overfit to statistical assumptions on the input.

On the positive side, we identify classes of semirandom adversaries under which spectral bisection using the \emph{unnormalized} Laplacian is strongly consistent, i.e., it exactly recovers the planted partitioning. On the negative side, we show that in these classes spectral bisection with the \emph{normalized} Laplacian outputs a partitioning that makes a classification mistake on a constant fraction of the vertices. Finally, we demonstrate numerical experiments that complement our theoretical findings.

\end{abstract}

\section{Introduction}
\label{sec:sbm_intro}

Graph partitioning or clustering is a fundamental unsupervised learning primitive. In a graph partitioning problem, one seeks to identify clusters of vertices that are highly internally connected and sparsely connected to the outside. This task is of particular significance when the given graph presents a latent community structure. In this setting, the goal is to recover the communities as accurately as possible. Various statistical models that attempt to capture this situation have been proposed and studied in the literature. Perhaps the most popular of these is the Symmetric Stochastic Block Model (SSBM) \cite{hll83}.

Following the notation of previous works \cite{afwz17,dls20}, in this paper we describe an SSBM with specifications $n, P_1, P_2, p, q$, where $n$ is an even positive integer, $P_1$ and $P_2$ are a partitioning of the vertex set $V=\inbraces{1,\dots,n}$ into subsets of equal size, and $p$ and $q$ are probabilities. Without loss of generality, we may assume that the partitions $P_1$ and $P_2$ consist of vertices $1,\dots,n/2$ and $n/2+1,\dots,n$, respectively. Hence, with a mild abuse of notation, we write an SSBM with parameters $n, p, q$ only and write it as $\mathsf{SSBM}(n,p,q)$. Now, let $\mathsf{SSBM}(n,p,q)$ be a distribution over random undirected graphs $G=(V,E)$ where each edge $(v,w) \in P_1 \times P_1$ and $(v,w) \in P_2 \times P_2$ (which we refer to as ``internal edges'') appears independently with probability $p$, and each edge $(v,w) \in P_1 \times P_2$ (which we refer to as ``crossing edges'') appears independently with probability $q$. When $p \gg q$, there should be many more internal edges than crossing edges. Hence, we expect the community structure to become more evident as $p$ tends away from $q$. 

In such scenarios, our general algorithmic goal is to efficiently identify $P_1$ and $P_2$ when given $G$ without any community labels. This task is hereafter referred to as the \textit{graph bisection problem}. In this work, we will be interested in \textit{exact recovery}, also known as \textit{strong consistency}, in which we want an algorithm that, with probability at least $1-1/n$ over the randomness of the instance, exactly returns the partition $\{P_1,P_2\}$ for all $n$ sufficiently large. Other approximate notions of recovery (such as almost exact, partial, and weak recovery) are also well-studied but are beyond the scope of this work.

Although the $\mathsf{SSBM}(n,p,q)$ distribution over graphs is a useful starting point for algorithm design and has led to a deep theory about when recovery is possible and of what nature \cite{abbe_survey}, it may not be representative of all scenarios in which we should expect our algorithms to succeed. To remedy this, researchers have proposed several different random graph models that may be more reflective of properties satisfied by real-world networks. These include the geometric block model~\cite{gnw24}, the Gaussian mixture block model \cite{ls24}, and others.

In this paper, we take a different perspective to graph generation by considering various \textit{semirandom models}. At a high level, a semirandom model for a statistical problem interpolates between an average-case input (for example produced by a model such as the SSBM) and a worst-case input, in a way that still allows for a meaningful notion of ground-truth solution. In our context of graph bisection, this can be achieved by an adversary adding internal edges or by the distribution of internal edges itself being nonhomogeneous (i.e., every internal edge $(v,w)$ appears independently with probability $p_{vw} \ge p$, where the $p_{vw}$ may be chosen adversarially for each internal edge). Researchers have studied similar semirandom models for graph bisection  \cite{fk01,mmv12,mpw16,moitra_sbm_2021, cdm24} and other statistical problems such as classification under Massart noise \cite{mn07}, detecting a planted clique in a random graph \cite{fk01, csv17, mmt20, BKS23}, sparse recovery \cite{kllst23}, and top-$K$ ranking \cite{ycom24}.

These modeling modifications are not necessarily meant to capture a real-world data generation process. Rather, they are a useful testbed with which we can determine whether commonly used algorithms have overfit to statistical assumptions present in the model.  In particular, observe that these changes in model specification are ostensibly helpful, in that increasing the number of internal edges should only enhance the community structure. Perhaps surprisingly, it is known that a number of natural algorithms that succeed in the SSBM setting no longer work under such helpful modifications \cite{moitra_sbm_2021}. Therefore, it is natural to ask which algorithms for graph bisection are robust in semirandom models.

At this point, the performance of approaches based on convex programming is well-understood in various semirandom models \cite{fk01,mmv12,mpw16,moitra_sbm_2021,cdm24}. However, in practice, it is impractical to run such an algorithm due to computational costs. Another class of algorithms, that we call \textit{spectral algorithms}, is more widely used in practice. Loosely speaking, a spectral algorithm constructs a matrix $\mM$ that is a function of the graph $G$ and outputs a clustering arising from the embedding of the vertices determined by the eigenvectors of $\mM$. Popular choices of matrices include the unnormalized Laplacian $\mL_G$ and the normalized Laplacian $\cL_G$ (we will formally define and intuit these notions in the sequel) \cite{vl07}. This is because structural properties of both $\mL_G$ and $\cL_G$ imply that the second smallest eigenvalue of each, denoted as $\lambda_2(\mL_G)$ and $\lambda_2(\cL_G)$, serves as a continuous proxy for connectivity, and the corresponding eigenvector, $\vu_2(\mL_G)$ and $\vu_2(\cL_G)$, has entries whose signs reveal a lot of information about the underlying community structure. This motivates \Cref{alg:spectral_general}. It can be run, for example, with $\mathsf{Matrix}(G) \coloneqq \mL_G$ or $\mathsf{Matrix}(G) \coloneqq \cL_G$. Following this discussion, we arrive at the question we study in this paper.

\begin{algorithm}[t]
\caption{\textsf{SpectralBisection}: given $G=(V,E)$, outputs a bipartition of $V$}
\label{alg:spectral_general}
\begin{algorithmic}[1]
\Procedure{$\mathsf{SpectralBisection}$}{$G$} \Comment{$G=(V,E)$ is the input graph}
\State $\mM \gets \mathsf{Matrix}(G)$ \Comment{$\mM \in \mathbb{R}^{V \times V}$ is a matrix with real eigenvalues}
\State $((\lambda_i, \vu_i))_{i = 1}^n \gets $ eigenvalue-eigenvector pairs of $\mM$ with $\lambda_1 \le \dots \le \lambda_n$ \Comment{$n = |V|$}
\State $S \gets \{v \in V: \, \vu_2[v] < 0\}$
\State \Return $\{S,V\setminus S\}$
\EndProcedure
\end{algorithmic}
\end{algorithm}

\begin{question}
    \textit{Under which semirandom models do the Laplacian-based spectral algorithms, using the second eigenvector of $\mL_G$ or $\cL_G$, exactly recover the ground-truth communities $P_1$ and $P_2$?}
\end{question}

\smallpar{Main contributions.} Our results show a surprising difference in the robustness of spectral bisection when considering the normalized versus the unnormalized Laplacian. We summarize our results below:
\begin{itemize}
    \item Consider a nonhomogeneous symmetric stochastic block model with parameters $q < p < \pbar$, where every internal edge appears independently with probability $p_{uv} \in [p,\pbar]$ and every crossing edge appears independently with probability $q$. We show that under an appropriate spectral gap condition, the spectral algorithm with the unnormalized Laplacian exactly recovers the communities $P_1$ and $P_2$. Moreover, this holds even if an adversary plants $\ll np$ internal edges per vertex prior to the edge sampling phase.
    \item Consider a stronger semirandom model where the subgraphs on the two communities $P_1$ and $P_2$ are adversarially chosen and the crossing edges are sampled independently with probability $q$. We show that if the graph is sufficiently dense and satisfies a spectral gap condition, then the spectral algorithm with the unnormalized Laplacian exactly recovers the communities~$P_1$~and~$P_2$.  
    \item We show that there is a family of instances from a nonhomogeneous symmetric stochastic block model in which the spectral algorithm achieves exact recovery with the unnormalized Laplacian, but incurs a constant error rate with the normalized Laplacian. This is surprising because it contradicts conventional wisdom that normalized spectral clustering should be favored over unnormalized spectral clustering \cite{vl07}.
\end{itemize}

We also numerically complement our findings via experiments on various parameter settings.

\smallpar{Outline.} The rest of this paper is organized as follows. In \Cref{sec:results}, we more formally define our semirandom models, the Laplacians $\mL$ and $\cL$, and formally state our results. In \Cref{sec:sbm_paper_overview}, we give sketches of the proofs of our results. In \Cref{sec:experiments}, we show results from numerical trials suggested by our theory. In \Cref{sec:concentration,sec:leave_one_out} we prove important auxiliary lemmas we need for our results. In \Cref{sec:strong_consistency}, we prove our robustness results for the unnormalized Laplacian. In \Cref{sec:inconsistency}, we prove our inconsistency result for the normalized Laplacian.
In \Cref{sec:experiments_more}, we give additional numerical trials and discussion.

\section{Models and main results}
\label{sec:results}

In this paper, we study unnormalized and normalized spectral clustering in several semirandom SSBMs. These models permit a richer family of graphs than the SSBM alone.

\smallpar{Matrices related to graphs.} Throughout this paper, all graphs are to be interpreted as being undirected, and we assume that the vertices of an $n$-vertex graph coincide with  the set $\{1,\dots, n\}$. With this in mind, we begin with defining various matrices associated with graphs, building up to the unnormalized and normalized Laplacians, which are central to the family of algorithms we analyze (\Cref{alg:spectral_general}).

\begin{definition}[Adjacency matrix]
\label{def:adjacency}
Let $G=(V,E)$ be a graph. The adjacency matrix $\mA_G \in \mathbb{R}^{V \times V}$ of $G$ is the matrix with entries defined as $\mA_G[v,w]=\indicator{(v,w) \in E}$.
\end{definition}

\begin{definition}[Degree matrix]
\label{def:degree_matrix}
Let $G=(V,E)$ be a graph.  The degree matrix $\mD_G \in \mathbb{R}^{V \times V}$ of $G$ is the diagonal matrix with entries defined as $\mD_G[v,v]=\vd_G[v]$, where $\vd_G[v]$ is the degree of $v$. 
\end{definition}

\begin{definition}[Unnormalized Laplacian]
\label{def:unnormalized_laplacian}
Let $G=(V,E)$ be a graph. The unnormalized Laplacian $\mL_G \in \mathbb{R}^{V \times V}$  of $G$ is the matrix defined as \smash{$\mL_G \coloneqq \mD_G - \mA_G = \sum_{(v,w) \in E} (\ve_v-\ve_w)(\ve_v-\ve_w)^\top$}, where $\ve_i$ denotes the $i$-th standard basis vector.
\end{definition}

\begin{definition}[Normalized Laplacians]
\label{def:normalized_laplacian}
Let $G=(V,E)$ be a graph. The symmetric normalized Laplacian $\cL_{G,\mathsf{sym}} \in \mathbb{R}^{V \times V}$ and the random walk Laplacian $\cL_{G,\mathsf{rw}} \in \mathbb{R}^{V \times V}$ of $G$ are defined as
\begin{align*}
    \cL_{G, \mathsf{sym}} &\coloneqq \mI-\mD_G^{-1/2}\mA_G\mD_G^{-1/2} \, , & \cL_{G, \mathsf{rw}} &\coloneqq \mI - \mD_G^{-1}\mA_G.
\end{align*}
\end{definition}

For all notions above, when the graph $G$ is clear from context, we omit the subscript $G$. Furthermore, when we discuss normalized Laplacians, we intend its symmetric version $\cL_{\mathsf{sym}}$ unless otherwise stated. So, we omit this subscript as well and simply write $\cL$.

Next, we define the spectral bisection algorithms. We will discuss some intuition for why these algorithms are reasonable heuristics in \Cref{sec:sbm_paper_overview}.

\begin{definition}[Unnormalized and normalized spectral bisection]
\label{def:spectral_bisection}
Let $G=(V,E)$ be a graph, and let its unnormalized and normalized Laplacians be $\mL$ and $\cL$, respectively. We refer to the algorithm resulting from running \Cref{alg:spectral_general} on $G$ with $\mathsf{Matrix}(G) \coloneqq \mL_G$ as \textup{unnormalized spectral bisection}. We refer to the algorithm resulting from running \Cref{alg:spectral_general} on $G$ with $\mathsf{Matrix}(G) = \cL_G$ as \textup{normalized spectral bisection}.
\end{definition}

Our goal is to understand when the above algorithms, applied to a graph with a latent community structure, achieve \textit{exact recovery} or \textit{strong consistency}, defined as follows.

\begin{definition}
\label{def:exact_recovery}
Let $\{P_1,P_2\}$ be a partitioning of $V=\{1,\dots,n\}$, and let $\mathcal{D} \coloneqq \mathcal{D}(\{P_1,P_2\})$ be a distribution over $n$-vertex graphs $G=(V,E)$. We say that an algorithm is \textup{strongly consistent} or achieves \textup{exact recovery} on $\cD$ if given a graph $G \sim \cD$ it outputs the correct partitioning $\inbraces{P_1,P_2}$ with probability at least~$1-1/n$ over the randomness of $G$.
\end{definition}

\subsection{Nonhomogeneous symmetric stochastic block model} Our first model is a family of nonhomogeneous symmetric stochastic block models, defined below.

\begin{model}[Nonhomogeneous symmetric stochastic block model]
\label{def:nonhomogeneous}
Let $n$ be an even positive integer, $V = \inbraces{1,\dots,n}$, $\{P_1,P_2\}$ be a partitioning of $V$ into two equally-sized subsets, and $q < p \le \pbar$ be probabilities. Let $\cD$ be any probability distribution over graphs $G=(V,E)$ such that for every $(v,w) \in P_1 \times P_1$ and $(v,w) \in P_2 \times P_2$, the edge $(v,w)$ appears in $E$ independently with some probability $p_{vw} \in [p,\pbar]$, and for every $(v,w) \in P_1 \times P_2$, the edge $(v,w)$ appears in $E$ independently with probability $q$. We call such $\cD$ a nonhomogeneous symmetric stochastic block model (which we will abbreviate as NSSBM). We call the set of all such $\cD$ the family of nonhomogeneous stochastic block models with parameters $p, \pbar, q$, written as $\mathsf{NSSBM}(n,p,\pbar,q)$.
\end{model}

To visualize \Cref{def:nonhomogeneous}, consider the expected adjacency matrix of some NSSBM distribution. We then have the relations 
\begin{align*}
    \left[\begin{array}{ c | c }
    p \cdot \mJ_{n/2} & q \cdot \mJ_{n/2} \\
    \hline
    q \cdot \mJ_{n/2} & p \cdot \mJ_{n/2}
  \end{array}\right] \le \left[\begin{array}{ c | c }
    \mP_{P_1} & q \cdot \mJ_{n/2} \\
    \hline
    q \cdot \mJ_{n/2} & \mP_{P_2}
  \end{array}\right] \le \left[\begin{array}{ c | c }
    \pbar \cdot \mJ_{n/2} & q \cdot \mJ_{n/2} \\
    \hline
    q \cdot \mJ_{n/2} & \pbar \cdot \mJ_{n/2}
  \end{array}\right] \, ,
\end{align*}
where the leftmost matrix denotes the expected adjacency matrix of $\mathsf{SSBM}(n,p,q)$, the rightmost matrix denotes the expected adjacency matrix of $\mathsf{SSBM}(n,\pbar,q)$, $\mJ_k$ denotes the $k \times k$ all-ones matrix, and $\mP_{P_1}$ and $\mP_{P_2}$ denote the edge probability matrices for edges internal to $P_1$ and $P_2$, respectively.

The above also shows that the rank of the expected adjacency matrix for $\mathsf{SSBM}(n,p,q)$ is $2$. However, the rank for the expected adjacency matrix for some NSSBM distribution may be as large as $\Omega(n)$. Perhaps surprisingly, this will turn out to be unimportant for our entrywise eigenvector perturbation analysis. In particular, the tools we use were originally designed for low-rank signal matrices or spiked low-rank signal matrices \cite{afwz17,dls20,bv24}, but we will see that they can be adapted to the signal matrices we consider.

The NSSBM family generalizes the symmetric stochastic block model described in the previous section -- this is attained by setting $p_{vw} = p$ for all internal edges $(v,w)$. However, it can also encode biases for certain graph properties. For instance, a distribution from the NSSBM family may encode the idea that certain subsets of $P_1$ are expected to be denser than $P_1$ as a whole.

With this definition in hand, we are ready to formally state our first technical result in \Cref{mainthm:nonhomogeneous}.

\begin{restatable}{mainthm}{nonhomogeneous}
\label{mainthm:nonhomogeneous}
Let $p,\pbar,q$ be probabilities such that $q < p \le \pbar$ and such that $\alpha \coloneqq \pbar/(p-q)$ is an arbitrary constant. Let $\cD \in \mathsf{NSSBM}(n,p,\pbar,q)$. Let $n \ge N(\alpha)$ where the function $N(\alpha)$ only depends on $\alpha$. There exists a universal constant $C>0$ such that if
\begin{align*}
    n(p-q) \ge C\inparen{\sqrt{n\pbar\log n}+\log n}, \tag{gap condition}
\end{align*}
then unnormalized spectral bisection is strongly consistent on $\cD$.
\end{restatable}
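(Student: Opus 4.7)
The strategy is to show that the second eigenvector of $\mL_G$ has the same sign pattern as the signed indicator $\bm{\chi} \in \pmone^V$ of the partition ($+1$ on $P_1$, $-1$ on $P_2$) at every vertex. The first step is to pin down the spectrum of $\Lstar := \E\insquare{\mL_G}$. The key structural observation is the decomposition $\Lstar = \Lstar_{\mathsf{hom}} + \Lstar_{\mathsf{extra}}$, where $\Lstar_{\mathsf{hom}}$ is the expected Laplacian of the homogeneous $\mathsf{SSBM}(n,p,q)$ and $\Lstar_{\mathsf{extra}}$ is the expected Laplacian of the auxiliary graph with edge probabilities $p_{vw} - p \ge 0$ on internal pairs and $0$ on crossing pairs. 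Since $\Lstar_{\mathsf{extra}}$ has no crossing edges, both $\onev_{P_1}$ and $\onev_{P_2}$ lie in its kernel, and hence so do $\onev$ and $\bm{\chi}$. A direct computation on $\Lstar_{\mathsf{hom}} = \tfrac{n(p+q)}{2}\mI - \tfrac{p+q}{2}\onev\onev^\top - \tfrac{p-q}{2}\bm{\chi}\bm{\chi}^\top$ reveals the spectrum $0,\, nq,\, \tfrac{n(p+q)}{2},\ldots,\tfrac{n(p+q)}{2}$ with $\bm{\chi}$ spanning the $nq$-eigenspace. Combining, $\bm{\chi}/\sqrt{n}$ is an \emph{exact} eigenvector of $\Lstar$ with eigenvalue $nq$, and by Weyl's inequality together with $\Lstar_{\mathsf{extra}} \succeq 0$ we get $\lambda_3(\Lstar) \ge \tfrac{n(p+q)}{2}$, so the eigengap above $\lambda_2(\Lstar)$ is at least $\tfrac{n(p-q)}{2}$.

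The next step is to establish operator-norm concentration of $\mL_G$ around $\Lstar$. Splitting $\mL_G - \Lstar = (\mD_G - \E\insquare{\mD_G}) - (\mA_G - \E\insquare{\mA_G})$, the diagonal part is controlled by a Bernstein bound on each vertex degree together with a union bound over $V$; the symmetric off-diagonal part is controlled by an inhomogeneous random-matrix concentration inequality in the style of Bandeira--Van Handel. These auxiliary bounds, which I would collect in \Cref{sec:concentration}, yield $\smallnorm{\mL_G - \Lstar}_{\mathsf{op}} = O(\sqrt{n\pbar\log n} + \log n)$ with probability at least $1-1/n$. Under the gap condition this perturbation is $o(n(p-q))$, so Davis--Kahan gives $\smallnorm{\vu_2(\mL_G) - \bm{\chi}/\sqrt{n}}_2 = o(1)$ after fixing a sign for $\vu_2$. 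This $\ell_2$ guarantee is, however, insufficient for exact recovery: we need $\smallnorm{\vu_2(\mL_G) - \bm{\chi}/\sqrt{n}}_\infty = o(1/\sqrt{n})$ so that every coordinate has the correct sign.

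To upgrade the $\ell_2$ bound to an $\ell_\infty$ one, the plan is to deploy a leave-one-out argument in the spirit of Abbe--Fan--Wang--Zhong: for each vertex $v$, introduce an auxiliary matrix $\mL_G^{(v)}$ obtained by replacing the row and column indexed by $v$ with their expectations, so that the second eigenvector $\vu_2^{(v)}$ of $\mL_G^{(v)}$ is \emph{independent} of all edges incident to $v$. A triangle inequality then reduces the task to (i) an $\ell_2$ perturbation bound for the small-rank perturbation induced by reinstating $v$'s row and column (a second application of Davis--Kahan), and (ii) a concentration bound on the inner product of the fixed vector $\vu_2^{(v)}$ with the random $v$-th row of $\mL_G - \Lstar$, obtainable from a Bernstein-type inequality. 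A union bound over $v \in V$ then yields the desired entrywise guarantee and completes the proof. The main obstacle, which I would address in \Cref{sec:leave_one_out}, is adapting this leave-one-out machinery to a setting in which $\Lstar$ is \emph{not} low-rank: beyond the rank-two ``signal'' carried by $\onev$ and $\bm{\chi}$, it carries a highly degenerate block of size $n-2$ at $\lambda \ge \tfrac{n(p+q)}{2}$ together with the nonnegative corrections coming from $\Lstar_{\mathsf{extra}}$, so one must track interactions of $\vu_2$ with an $(n-2)$-dimensional orthogonal complement rather than the usual two-dimensional signal subspace. It is precisely at this point that the assumption $\alpha = \pbar/(p-q) = O(1)$ enters, preventing $\Lstar_{\mathsf{extra}}$ from distorting the bulk spectrum enough to shrink the relative eigengap between $\lambda_2(\Lstar)$ and $\lambda_3(\Lstar)$.
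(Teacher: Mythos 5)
Your architecture matches the paper's: the decomposition of $\Lstar$ into a homogeneous part plus a PSD internal-edge part, the identification of $\utwostar$ as an exact eigenvector with eigenvalue $nq$ and eigengap $\ge n(p+q)/2 - nq$, operator-norm concentration of $\mL$, Davis--Kahan, and an AFWZ-style leave-one-out upgrade to an entrywise guarantee are all the steps the paper takes. However, two of your quantitative targets fail precisely in the regime the theorem covers. First, plain Davis--Kahan with $\opnorm{\mL-\Lstar}$ in the numerator gives only $\norm{\vu_2-\utwostar}_2 \lesssim (\sqrt{n\pbar\log n}+\log n)/(n(p-q)) \le 1/C$, a constant rather than $o(1)$; and when this is later paired with Cauchy--Schwarz against $\norm{\astar_v}_2 \approx \pbar\sqrt{n}$ it produces a term of order $\alpha(p-q)\sqrt{n}/C$, which exceeds the budget $(\vdin[v]-\vdout[v])/\sqrt{n} \approx \sqrt{n}(p-q)/2$ as soon as $\alpha$ is a large constant. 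The fix — and the conceptual heart of the robustness claim — is to run Davis--Kahan with $\norm{(\mL-\Lstar)\utwostar}_2$ in the numerator: every internal edge incidence vector is orthogonal to $\utwostar$, so this quantity sees only crossing-edge fluctuations and is $O(\sqrt{nq}+(nq\log n)^{1/4}+\sqrt{\log n})$, independent of $\pbar$, which yields $\norm{\vu_2-\utwostar}_2 = O(1/\sqrt{\log n})$. You note the orthogonality at the level of $\Lstar$ but never exploit it for the sampled $\mL$; without it the dependence on $\alpha$ does not close.

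Second, the target $\maxnorm{\vu_2-\utwostar}=o(1/\sqrt{n})$ is false under the gap condition with a fixed universal constant $C$. From the eigenvector equation $\vu_2[v]=\ip{\va_v,\vu_2}/(\vd[v]-\lambda_2)$, the leading term is $(\vdin[v]-\vdout[v])/(\sqrt{n}(\vd[v]-\lambda_2)) = (1+\Theta(1/C))/\sqrt{n}$, because the fluctuations $\abs{2\vdout[v]-\lambda_2} = \Theta(\sqrt{nq\log n}+\log n)$ are a constant fraction of $n(p-q)$ when the gap condition is tight; hence some coordinates of $\vu_2$ sit at distance $\Theta(1/\sqrt{n})$ from $\pm 1/\sqrt{n}$. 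One must therefore aim only for sign agreement, comparing $\vu_2$ to the intermediate estimator $(\mD-\lambda_2\mI)^{-1}\mA\utwostar$ rather than to $\utwostar$ itself: the sufficient condition becomes $\vd[v]>\lambda_2$, $\vdin[v]>\vdout[v]$, and $\abs{\ip{\va_v,\utwostar-\vu_2}}\le(\vdin[v]-\vdout[v])/\sqrt{n}$, which your leave-one-out machinery can then verify. Relatedly, the leave-one-out step is not a free union bound: the row-concentration inequality is applied to $\vw=\vu_2^{(v)}-\utwostar$, whose sup-norm is controlled in terms of $\maxnorm{\vu_2}$, so the argument becomes a self-referential inequality for $\maxnorm{\vu_2}$ that must be solved by a bootstrap to obtain $\maxnorm{\vu_2}=O(1/\sqrt{n})$. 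With these repairs the plan goes through and coincides with the paper's proof.
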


We prove \Cref{mainthm:nonhomogeneous} in \Cref{sec:proof_nonhomogeneous}. In fact, we show a somewhat stronger statement -- in addition to the process described above, we also allow the adversary to, before sampling the graph, set a small number of the $p_{vw}$ to $1$ (at most $n\pbar/\log \log n$ edges per vertex). We detail this further in \Cref{sec:proof_nonhomogeneous}.

We now remark on the tightness of our gap condition in \Cref{mainthm:nonhomogeneous}. A work of \citet{abh16} identifies an exact information-theoretic threshold above which exact recovery with high probability is possible and below which no algorithm can be strongly consistent. In particular, the threshold states that for any $p$ and $q$ satisfying \smash{$\sqrt{p}-\sqrt{q} > \sqrt{2\log n / n}$}, exact recovery is possible, and when $p$ and $q$ do not satisfy this, exact recovery is information-theoretically impossible. Furthermore, \citet{fk01} prove that the information-theoretic threshold does not change in a somewhat stronger semirandom model that includes the NSSBM family. Additionally, \citet{dls20} show that unnormalized spectral bisection is strongly consistent all the way to this threshold in the special case where the graph is drawn from $\mathsf{SSBM}(n,p,q)$. By contrast, our gap condition holds in the same critical degree regime as in the information-theoretic threshold (namely, $p =\Theta
(\log n /n$)) but our constant is not optimal. We incur this constant loss because for the sake of presentation, we opt for a cleaner argument that can handle the nonhomogeneity and generalizes more readily across degree regimes. To our knowledge, none of these features are present in prior work analyzing spectral methods in an SSBM setting \cite{afwz17,dls20}.

\subsection{Deterministic clusters model}

Given \Cref{mainthm:nonhomogeneous}, it is natural to ask what happens if we allow the adversary full control over the structure of the graphs in $P_1$ and $P_2$ instead of simply allowing the adversary to perturb the edge probabilities.
In this section, we answer this question. We first describe a more adversarial semirandom model than the NSSBM family. We call this model the \textit{deterministic clusters} model, defined as follows.

\begin{model}[Deterministic clusters model]
\label{def:det_clusters}
Let $n$ be an even positive integer, $V = \inbraces{1,\dots,n}$, $\{P_1,P_2\}$ be a partitioning of $V$ into two equally-sized subsets, $q$ be a probability, and $\din$ be an integer degree lower bound.  Consider a graph $G=(V,E)$ generated according to the following process.
\begin{enumerate}
    \item The adversary chooses arbitrarily graphs $G[P_1]$ and $G[P_2]$ with minimum degree $\din$;
    \item Nature samples every edge $(v,w) \in P_1 \times P_2$ to be in $E$ independently with probability $q$.
    \item The adversary arbitrarily adds edges $(v,w) \in P_1 \times P_1$ and $(v,w) \in P_2 \times P_2$ to $E$ after observing the edges sampled by nature.
\end{enumerate}
We call a distribution $\mathcal{D}$ of graphs generated according to the above process a deterministic clusters model (DCM). We call the set of all such $\mathcal{D}$ the family of deterministic clusters models with parameters $\din$ and $q$, written as $\mathsf{DCM}(n, \din, q)$.
\end{model}

The DCM graph generation process is heavily motivated by the one studied by \citet{mmv12}. This model is much more flexible than the SSBM and NSSBM settings in that the graphs the adversary draws on $P_1$ and $P_2$ are allowed to look very far from random graphs. This means the DCM is a particularly good benchmark for algorithms to ensure they are not implicitly using properties of random graphs that might not hold in the worst case. 

Within the DCM setting, we have \Cref{mainthm:sqrtngap}.

\begin{restatable}{mainthm}{sqrtngap}
\label{mainthm:sqrtngap}
Let $q$ be a probability and $\din$ be an integer, and let $\cD \in \mathsf{DCM}(n, \din, q)$. For $G \sim \cD$, let $\widehat{\mL}$ denote the expectation of $\mL$ after step (2) but before step (3) in \cref{def:det_clusters}. There exists constants $C_1,C_2,C_3>0$ such that for all $n$ sufficiently large, if
\begin{equation*}
    \din \ge C_1 \cdot \inparen{\frac{nq}{2}+\sqrt{n}} \quad \text{and} \quad \lambda_3(\widehat{\mL})-\lambda_2(\widehat{\mL}) \ge \sqrt{n}+C_2nq+C_3\inparen{\sqrt{nq\log n}+\log n} \, ,
\end{equation*}
then unnormalized spectral bisection is strongly consistent on $\cD$.
\end{restatable}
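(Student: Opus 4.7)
The plan is to reduce \Cref{mainthm:sqrtngap} to an entrywise eigenvector perturbation analysis by absorbing the adversary's extra edges into a cleanly structured signal matrix, and then to apply a leave-one-out argument much like the one used for \Cref{mainthm:nonhomogeneous}.

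\emph{Signal-noise decomposition.} Write $\mL = \widehat{\mL} + \Delta + \mE$, where $\Delta \succeq 0$ is the Laplacian of the edges added by the adversary in step~(3) and $\mE \coloneqq \mL_{\text{cross}} - q\,\mL_K$ is the centered Laplacian of the random crossing edges, with $\mL_K$ denoting the Laplacian of the complete bipartite graph between $P_1$ and $P_2$. Set $\vu \coloneqq \frac{1}{\sqrt{n}}(\onev_{P_1} - \onev_{P_2})$. A direct calculation shows $\widehat{\mL}\vu = qn\,\vu$, and since every edge $(x,y)$ contributing to $\Delta$ satisfies $\vu[x]=\vu[y]$, we get $\Delta\vu = 0$. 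Hence, defining $\mM \coloneqq \widehat{\mL} + \Delta$, the pair $(qn,\vu)$ is an eigenvalue-eigenvector pair of $\mM$. Moreover, since $\Delta \succeq 0$ and $\Delta$ vanishes on $\mathsf{span}\{\onev,\vu\}$, Courant-Fischer yields $\lambda_2(\mM) = qn$ and $\lambda_3(\mM) \geq \lambda_3(\widehat{\mL})$, so the hypothesized eigengap passes from $\widehat{\mL}$ to $\mM$.

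\emph{Concentration of $\mE$.} Next, use matrix Bernstein to bound $\|\mE\|_{\mathrm{op}} \leq O(\sqrt{nq} + \sqrt{\log n})$, control the crossing-edge degree deviations by $O(\sqrt{nq\log n} + \log n)$ so that after step~(2) every vertex's total degree remains consistent with the minimum-degree hypothesis, and bound $\|\mE\vu\|_\infty$ and the row-wise $\ell_2$ norms of $\mE$ by similar quantities. These bounds hold simultaneously on a high-probability event $\mathcal{G}$ that depends only on the crossing-edge randomness in step~(2).

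\emph{Entrywise eigenvector perturbation.} On $\mathcal{G}$, apply the leave-one-out machinery to $\mL = \mM + \mE$: for each vertex $v$, construct a surrogate $\mL^{(v)}$ by replacing the crossing edges incident to $v$ by their expectations, so that $\vu_2(\mL^{(v)})$ is independent of $v$'s crossing-edge randomness. Combining a Davis-Kahan $\ell_2$ bound, the stability of $\vu_2$ under one-row modifications, and the tail bound on $\mE_v^\top \vu_2(\mL^{(v)})$ gives $|\vu_2(\mL)[v] - c\,\vu[v]| = o(1/\sqrt{n})$ for a global sign $c\in\{\pm 1\}$. The gap $\sqrt{n} + C_2 nq + C_3(\sqrt{nq\log n} + \log n)$ is precisely what is needed for the right-hand side to beat $1/\sqrt{n}$. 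Since $|\vu[v]| = 1/\sqrt{n}$, this gives $\mathsf{sign}(\vu_2(\mL)[v]) = c\cdot\mathsf{sign}(\vu[v])$ for every $v$, so spectral bisection correctly recovers $\{P_1,P_2\}$.

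\emph{Main obstacle.} The central difficulty is that $\Delta$ in step~(3) may depend arbitrarily on the random crossing edges, so a naive leave-one-out argument breaks down due to lack of independence between signal and noise. The resolution is the monotonicity observation above: because $\Delta\vu = 0$ and $\Delta \succeq 0$, the signal $\mM = \widehat{\mL}+\Delta$ inherits both the exact second eigenpair $(qn,\vu)$ and the eigengap of $\widehat{\mL}$ \emph{uniformly over all admissible $\Delta$}. Therefore the entrywise analysis is deterministic once the concentration event $\mathcal{G}$—which depends on $\mE$ alone—holds, and the adversary's choice of $\Delta$, however correlated with $\mE$, cannot defeat it. A secondary technical point is that, unlike the low-rank SSBM signal of prior entrywise analyses \cite{afwz17,dls20}, the matrix $\mM$ here may have rank $\Theta(n)$; consequently, the leave-one-out argument must depend only on the eigengap at $\lambda_2$ and on the rigid structure of $\vu$, not on the rank of the signal.
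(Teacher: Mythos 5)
Your signal--noise decomposition is the same as the paper's: the paper sets $\Lstar \coloneqq \mL - \mL^{(\text{cross})} + \exv{\mL^{(\text{cross})}}$, which is exactly your $\mM = \widehat{\mL} + \Delta$, and the observations that $\Delta\utwostar = 0$, $\lambda_2(\mM) = nq$, and $\lambda_3(\mM) \ge \lambda_3(\widehat{\mL})$ all appear in \Cref{lemma:u2expect}. The gap is in your final step. You propose to run the leave-one-out machinery and invoke ``the tail bound on $\mE_v^\top \vu_2(\mL^{(v)})$,'' but that tail bound (a Bernstein-type inequality as in \Cref{lemma:row_concentration}) requires $\vu_2(\mL^{(v)})$ to be independent of the crossing-edge randomness at $v$. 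In the DCM the adversary chooses $\Delta$ in step~(3) \emph{after} observing all crossing edges, so $\mL^{(v)} = \mM + \mE^{(v)}$ contains $\Delta$, which may depend on $\mE_v$; hence $\vu_2(\mL^{(v)})$ is not independent of $\mE_v$ and the tail bound does not apply. Your ``resolution'' via monotonicity only shows that the \emph{eigenvalues} and eigengap of $\mM$ are uniform over admissible $\Delta$; it says nothing about the \emph{eigenvector} $\vu_2(\mL^{(v)})$, which varies with $\Delta$ and can be adversarially correlated with $\mE_v$. Making your event $\mathcal{G}$ depend ``on $\mE$ alone'' does not fix this: the quantity you need to control is $\sup_{\Delta}\abs{\mE_v^\top(\vu_2(\mM+\mE^{(v)})-\utwostar)}$, and no argument is given for that supremum (a union bound over adversary choices is hopeless).

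The paper's proof sidesteps this entirely: it bounds $\abs{\ip{\va_v,\utwostar-\vu_2}} \le \norm{\va_v}_2\norm{\utwostar-\vu_2}_2 = \sqrt{\vd[v]}\,\norm{\utwostar-\vu_2}_2$ by Cauchy--Schwarz, and then bounds $\norm{\utwostar-\vu_2}_2$ via the Davis--Kahan-type \Cref{lemma:utwo_to_utwostar}, whose numerator $\norm{\mE\utwostar}_2$ depends only on crossing-edge degree fluctuations and whose denominator is controlled uniformly over $\Delta$ by the eigengap monotonicity you already noted. This is a genuinely worst-case bound over the adversary, at the price of the $\sqrt{\vd[v]}$ factor --- which is precisely why the theorem's hypotheses carry the $\sqrt{n}$-scale degree and eigengap requirements (one checks that $(\vdin[v]-\vdout[v])/\sqrt{\vdin[v]+\vdout[v]}$ grows in $\vdin[v]$ fast enough to absorb the loss). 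Your remark that the stated gap is ``precisely what is needed'' for a leave-one-out bound is therefore also off: a working leave-one-out argument would not need $\sqrt{n}$-scale conditions at all (compare \Cref{mainthm:nonhomogeneous}); those conditions are an artifact of the Cauchy--Schwarz route, which is the route that actually survives the adaptive adversary.
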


We prove \Cref{mainthm:sqrtngap} in \Cref{sec:proof_sqrtngap}. We remark that, as in \Cref{mainthm:nonhomogeneous}, the constants that appear in \Cref{mainthm:sqrtngap} are somewhat arbitrary. They are chosen to make our proofs cleaner and can likely be optimized.

As a basic application of \Cref{mainthm:sqrtngap}, note that in the SSBM, if $p = \omega(1/\sqrt{n})$ and $q = 1/\sqrt{n}$, then for $n$ sufficiently large, with high probability, the resulting graph satisfies the conditions needed to apply \Cref{mainthm:sqrtngap}. For a more interesting example, let $P_1$ and $P_2$ be two $d$-regular spectral expanders with $d = \omega(\sqrt{n})$ and let $q \le 1/\sqrt{n}$. On top of both of these two graph classes, one can further allow arbitrary edge insertions inside $P_1$ and $P_2$ while still being guaranteed exact recovery from unnormalized spectral bisection.

\subsection{Inconsistency of normalized spectral clustering}

Notice that in \Cref{mainthm:nonhomogeneous} and \Cref{mainthm:sqrtngap}, we only address the strong consistency of the unnormalized Laplacian in our nonhomogeneous and semirandom models. But what happens when we run spectral bisection with the \textit{normalized} Laplacian?

In \Cref{mainthm:inconsistency}, we prove that there is a subfamily of instances belonging to $\mathsf{NSSBM}(n,p,\pbar,q)$ with $\pbar=6p,q=p/2$ on which unnormalized spectral bisection is strongly consistent (following from \Cref{mainthm:nonhomogeneous}) but normalized spectral clustering is inconsistent in a rather strong sense. Thus, one cannot obtain results similar to \Cref{mainthm:nonhomogeneous} and \Cref{mainthm:sqrtngap} for normalized spectral bisection.

\begin{restatable}{mainthm}{inconsistency}
\label{mainthm:inconsistency}
For all $n$ sufficiently large, there exists a nonhomogeneous stochastic block model such that unnormalized spectral bisection is strongly consistent whereas normalized spectral bisection (both symmetric and random-walk) incurs a misclassification rate of at least $24\%$ with probability $ 1-1/n$.
\end{restatable}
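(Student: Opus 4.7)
The plan is to exhibit an explicit distribution in $\mathsf{NSSBM}(n,p,\pbar,q)$ with $\pbar=6p$ and $q=p/2$ to which \Cref{mainthm:nonhomogeneous} applies, yet for which the second eigenvector of the population normalized Laplacian induces a bisection different from the planted one. I would partition the vertex set into three blocks $A,B,C$ with $|A|=|B|=n/4$ and $|C|=n/2$, and declare $P_1=A\cup B$ and $P_2=C$. Edges inside $A$ appear with probability $\pbar=6p$; all other internal edges (within $B$, between $A$ and $B$, and within $C$) appear with probability $p$; crossing edges appear with probability $q=p/2$. Take $p$ in a comfortably dense regime such as $p=(\log n)^2/n$, so that $np\gg\log n$. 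The gap condition of \Cref{mainthm:nonhomogeneous} reduces to $np/2\gg\sqrt{6np\log n}$, which is easily satisfied, so unnormalized bisection is strongly consistent on this distribution.

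For the normalized case, I would first analyze the population matrix $\widehat\cL \coloneqq \mI - \E[\mD]^{-1/2}\E[\mA]\E[\mD]^{-1/2}$. The expected degrees are $2pn$ on $A$ and $3pn/4$ on $B\cup C$, and the three-block structure implies that the nontrivial spectrum of $\widehat\cL$ lives on $\mathrm{span}(\mathbf{1}_A,\mathbf{1}_B,\mathbf{1}_C)$ and reduces to a $3\times 3$ eigenvalue problem, which I would solve in closed form. The nontrivial eigenvalues of $\E[\mD]^{-1/2}\E[\mA]\E[\mD]^{-1/2}$ turn out to be $1$, $\tfrac{3}{8}+\tfrac{\sqrt{21}}{24}\approx 0.566$, and $\tfrac{3}{8}-\tfrac{\sqrt{21}}{24}\approx 0.184$, all well-separated. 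Crucially, the middle eigenvector $u_2$ takes a (large) negative value on $A$ and strictly positive values on both $B$ and $C$, with every entry at least $c/\sqrt n$ in magnitude for an absolute constant $c>0$. The sign-based partition $\{A, B\cup C\}$ induced by $\widehat\cL$ therefore already disagrees with the planted one on the block $B$, a set of size $n/4$; this is consistent with the Cheeger picture, since $(A, B\cup C)$ has conductance $1/4$, strictly smaller than the $1/3$ of the planted cut.

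To transfer this pattern to the random $\cL$, I would invoke a standard operator-norm concentration result (e.g.\ Chung--Radcliffe or Le--Levina--Vershynin) to obtain $\|\cL-\widehat\cL\|_{\mathsf{op}} = O(\sqrt{\log n/(np)}) = o(1)$ with probability $1-1/n$. Since the gap between the second nonzero eigenvalue of $\widehat\cL$ and its neighbors is $\Omega(1)$, Davis--Kahan yields $\|\widehat u_2 - u_2\|_2 = o(1)$ after sign alignment. Because $|u_2(v)| \ge c/\sqrt n$ for every $v$, a Markov bound shows that $\widehat u_2$ and $u_2$ disagree in sign on at most $o(n)$ coordinates. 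Hence normalized spectral bisection outputs a partition within $o(n)$ vertices of $\{A, B\cup C\}$ and therefore misclassifies at least $(1/4 - o(1))\,n \ge 0.24\,n$ vertices of $\{P_1, P_2\}$ with probability $1-1/n$. The random-walk Laplacian $\cL_{\mathsf{rw}} = \mI - \mD^{-1}\mA$ is similar to $\cL_{\mathsf{sym}}$ via $\mD^{1/2}$, so its eigenvectors differ from those of $\cL_{\mathsf{sym}}$ only by a positive diagonal rescaling that preserves signs, and the same conclusion applies.

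The main obstacle I anticipate is obtaining sharp enough operator-norm concentration for $\cL$ itself rather than just for $\mA$: the random $\mD^{-1/2}$ factor can amplify degree deviations. In the regime $np=\omega(\log n)$ this is manageable by first showing via a Chernoff union bound that every vertex degree is $(1\pm o(1))$ times its expectation, and then transferring a bound on $\mA-\E\mA$ to one on $\cL-\widehat\cL$ at only constant-factor cost. Tracking these constants carefully, so that the Markov argument above genuinely yields an $o(1)$ sign-flip fraction rather than merely a small constant one, is the most delicate step of the plan.
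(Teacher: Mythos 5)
Your proposal is correct and follows the same overall strategy as the paper's proof: plant a sparse cut inside $P_1$, diagonalize the population normalized Laplacian in closed form on the block-constant subspace, and transfer to the sampled graph via operator-norm concentration, Davis--Kahan with an $\Omega(1)$ eigengap, and an entrywise lower bound on the population eigenvector. The only substantive difference is the hard instance. The paper splits $P_1$ into two sub-blocks $L_1,L_2$ that are each internally dense (probability $Kp$) but sparsely joined (probability $p$), so its population second eigenvector equals $+1$ on $L_1$, $-1$ on $L_2$, and $0$ on $R$, and the misclassification count is carried out only on the $L_1$ coordinates. Your instance densifies only the sub-block $A$, which yields a population second eigenvector bounded away from zero on every coordinate: I checked your $3\times3$ reduction, and with $np$ normalized to $1$ the reduced matrix has entries $\mathbf{T}_{AA}=3/4$, $\mathbf{T}_{BB}=1/3$, $\mathbf{T}_{CC}=2/3$, $\mathbf{T}_{AB}=\sqrt{6}/12$, $\mathbf{T}_{AC}=\sqrt{3}/12$, $\mathbf{T}_{BC}=\sqrt{2}/6$, whose nontrivial eigenvalues are indeed $1$ and $3/8\pm\sqrt{21}/24$, and the eigenvector for $3/8+\sqrt{21}/24$ is proportional to $(1,-0.170,-1.034)$ in the normalized block basis, so the sign pattern and the $c/\sqrt{n}$ entrywise lower bound you assert are correct. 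This makes your sign-counting step slightly cleaner than the paper's, since you never have to reason about coordinates where the population eigenvector vanishes; both constructions land on the same $1/4-o(1)$ error rate after optimizing over the labeling of the output parts. Two details to spell out in a full write-up: (i) the population matrix you diagonalize implicitly retains the diagonal entries $p_{vv}$ of $\E[\mA]$, and the paper devotes a separate lemma to showing that removing them perturbs the spectrum by only $O(1/n)$ — you should include the analogous (harmless) correction; (ii) the concentration step you flag as delicate is exactly what the paper outsources to \cite[Theorem 3.1]{dls20}, which gives $\opnorm{\cL-\widehat{\cL}}=O(1/\sqrt{np})$ once the degrees concentrate, and in your regime $np=(\log n)^2$ this is $o(1)$ as required.
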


We prove \Cref{mainthm:inconsistency} in \Cref{sec:inconsistency}. Furthermore, we expect that it is straightforward to adapt the example in \Cref{mainthm:inconsistency} to prove an analogous result for our DCM setting.

The result of \Cref{mainthm:inconsistency} may run counter to conventional wisdom, which suggests that normalized spectral clustering should be favored over the unnormalized variant \cite{vl07}. Perhaps a more nuanced view in light of \Cref{mainthm:nonhomogeneous} and \Cref{mainthm:sqrtngap} is to acknowledge that the normalized Laplacian and its eigenvectors enjoy stronger concentration guarantees \cite{sb15,dls20}, but the unnormalized Laplacian's second eigenvector is more robust to monotone adversarial changes.

\subsection{Open problems}
Perhaps the most natural follow-up question inspired by our results is to determine whether the restriction that every internal edge probability $p_{vw} \le \pbar$ can be lifted entirely while still maintaining strong consistency of the unnormalized Laplacian (\Cref{mainthm:sqrtngap}). Another exciting direction for future work is to lower the degree and/or spectral gap requirement present in our results in the DCM setting (\Cref{mainthm:sqrtngap}). Finally, we only study insertion-only monotone adversaries, as crossing edge deletions change the second eigenvector of the expected Laplacian. It would be illuminating to understand the robustness of Laplacian-based spectral algorithms against a monotone adversary that is also allowed to delete crossing edges. We are optimistic that the answers to one or more of these questions will further improve our understanding of the robustness of spectral clustering to ``helpful'' model misspecification.

\section{Analysis sketch}
\label{sec:sbm_paper_overview}
First, let us give some intuition as to why one may expect that unnormalized spectral bisection is robust against our monotone adversaries. Here and in the sequel, let $\utwostar = [\onev_{n/2} \oplus -\onev_{n/2}]/\sqrt{n}$, where $\onev_{k}$ denotes the all-$1$s vector in $k$ dimensions and $\oplus$ denotes vector concatenation. Let $\mL$ be the unnormalized Laplacian of the graph we want to partition, $\Lstar \coloneqq \exv{\mL}$, $\mE \coloneqq \mL-\Lstar$, and $\lambda_i^{\star} \coloneqq \lambda_i(\Lstar)$ for $1 \le i \le n$. For an edge $(v,w)$, let $\ve_{vw} \coloneqq \ve_{v}-\ve_{w}$, so that $\ve_{vw}$ is an edge incidence vector corresponding to the edge $(v,w)$. Let $p_{vw}$ be the probability that the edge $(v,w)$ appears in $G$ and observe that $\Lstar$ can be written as
\begin{align*}
    \Lstar = \sum_{(v,w) \in E_\text{internal}} p_{vw} \cdot \ve_{vw}\ve_{vw}^T + \sum_{(v,w) \in E_\text{crossing}} q \cdot \ve_{vw}\ve_{vw}^T \, ,
\end{align*}
where $E_\text{internal}=(P_1\times P_1) \cup (P_2\times P_2)$ and $E_\text{crossing}=P_1\times P_2$.
We can verify that $\utwostar$ is an eigenvector of $\Lstar$ -- indeed, we do so in \Cref{lemma:u2expect}. And, for now, assume that $\utwostar$ does correspond to the second smallest eigenvalue of $\Lstar$ (in our NSSBM family, this is easily ensured by enforcing $p > q$). Moreover, for every internal edge $(v,w) \in E_\text{internal}$, we have $\ip{\ve_{vw},\utwostar}=0$. Hence, any changes in internal edges do not change the fact that $\utwostar$ is an eigenvector of the perturbed matrix. Thus, if the sampled $\mL$ is close enough to $\Lstar$, then it is plausible that the second eigenvector of $\mL$, denoted as $\vu_2$, is pretty close to $\utwostar$. In fact, the following conceptually stronger statement holds. If the subgraph formed by selecting just the crossing edges of $G$ is regular, then $\utwostar$ is an eigenvector of $\mL$. This follows from the fact that $\utwostar$ is an eigenvector of the unnormalized Laplacian of any regular bipartite graph where both sides have size $n/2$ and the previous observation that every internal edge is orthogonal to $\utwostar$.

To make this perturbation idea more formal, we recall the Davis-Kahan Theorem. Loosely, it states that $\norm{\vu_2-\utwostar}_2 \lesssim \norm{(\mL-\Lstar)\utwostar}_2/(\lambda_3^{\star}-\lambda_2^{\star})$ (we give a more formal statement in \Cref{lemma:dk_easy}). Expanding the entrywise absolute value $\abs{(\mL-\Lstar)\utwostar}$ reveals that its entries can be expressed as $2\abs{\vdout[v]-\exv{\vdout[v]}}/\sqrt{n}$, where $\vdout[v]$ denotes the number of edges incident to $v$ crossing to the opposite community as $v$. This is unaffected by any increase in the number of edges incident to $v$ that stay within the same community as $v$, denoted as $\vdin[v]$. Hence, regardless of how many internal edges we add before sampling or what substructures they encourage/create, if we have $\lambda_2^{\star} \ll \lambda_3^{\star}$, then we get $\norm{\vu_2-\utwostar}_2 \le o(1)$. This immediately implies that $\vu_2$ is a correct classifier on all but an $o(1)$ fraction of the vertices.

\smallpar{Entrywise analysis of $\vu_2$ and NSSBM strong consistency.} In order to achieve strong consistency, we need that for all $n$ sufficiently large, $\vu_2$ is a perfect classifier. Unfortunately, the above argument does not immediately give that. In particular, in the density and spectral gap regimes we consider, the bound of $o(1)$ yielded by the Davis-Kahan theorem is not sufficiently small to directly yield $\norm{\utwostar-\vu_2}_2 \ll 1/\sqrt{n}$. Instead, we carry out an entrywise analysis of $\vu_2$. A general framework for doing so is given by \citet{afwz17} and is adapted to the unnormalized and normalized Laplacians by \citet{dls20}.

At a high level, we adapt the analysis of \citet{dls20} to our setting. We consider the intermediate estimator vector $\inparen{\mD-\lambda_2\mI}^{-1}\mA\utwostar$. This is a natural choice because we can verify $(\mD-\lambda_2\mI)^{-1}\mA\vu_2=\vu_2$. We will see that it is enough to show that this intermediate estimator correctly classifies all the vertices while satisfying $|(\mD-\lambda_2\mI)^{-1}\mA(\utwostar-\vu_2)| \le |\inparen{\mD-\lambda_2\mI}^{-1}\mA\utwostar|$ (again, the absolute value is taken entrywise). With this in mind, taking some entry indexed by $v \in V$ and multiplying both sides by $\vd[v]-\lambda_2$ (which we will show is positive with high probability), we see that it is enough to show
\begin{align}
     \abs{\ip{\va_v,\utwostar-\vu_2}} \le \abs{\ip{\va_v,\utwostar}} = \frac{\abs{\vdin[v]-\vdout[v]}}{\sqrt{n}},\label{eq:overview_master_lemma}
\end{align}
where $\va_v$ denotes the $v$-th row of $\mA$. The advantage of this rewrite is that the right hand side can be uniformly bounded, so it is enough to control the left hand side. 

To argue about the left hand side of \eqref{eq:overview_master_lemma}, it may be tempting to use the fact that $\va_v$ is a Bernoulli random vector and use Bernstein's inequality to argue about the sum of rescalings of these Bernoulli random variables. Unfortunately, we cannot do this since $\vu_2$ and $\va_v$ are dependent. To resolve this, we use a leave-one-out trick~\cite{afwz17,bv24}. We can think of this as leaving out the vertex $v$ corresponding to the entry we want to analyze and sampling the edges incident to the rest of the vertices. The second eigenvector of the resulting \smash{$\mL^{(v)}$}, denoted as \smash{$\vu_2^{(v)}$}, is a very good proxy for $\vu_2$ and is independent from $\va_v$. Hence, we may complete the proof of \Cref{mainthm:nonhomogeneous}. 

One of our main observations is that although this style of analysis was originally built for low-rank signal matrices \cite{afwz17,bv24}, it can be adapted to handle the nonhomogeneity inside $P_1$ and $P_2$. In particular, the nonhomogeneity we permit in the NSSBM family may make $\Lstar$ look very far from a spiked low-rank signal matrix. Furthermore, our entrywise analysis of eigenvectors under perturbations is one of the first that we are aware of that moves beyond analyzing low-rank signal matrices or spiked low-rank signal matrices.

\smallpar{Extension to deterministic clusters.} To prove \Cref{mainthm:sqrtngap}, we start again at \eqref{eq:overview_master_lemma}. An alternate way to upper bound the left hand side is to use the Cauchy-Schwarz inequality. A variant of the Davis-Kahan theorem gives us control over $\norm{\vu_2-\utwostar}_2$ while $\norm{\va_v}_2 = \sqrt{\vd[v]}$. The advantage of this is that we get a worst-case upper bound on the left hand side of \eqref{eq:overview_master_lemma} -- it holds no matter what edges orthogonal to $\utwostar$ are inserted before or after nature samples the crossing edges (which are precisely the internal edges). Combining these and using the fact that the right hand side of \eqref{eq:overview_master_lemma} is increasing in $\vdin[v]$ (and increases faster than $\norm{\va_v}_2 = \sqrt{\vd[v]}$) allows us to complete the proof of \Cref{mainthm:sqrtngap}.

\smallpar{Inconsistency of normalized spectral bisection.} Finally, we describe the family of hard instances we use to prove \Cref{mainthm:inconsistency}. To motivate this family of instances, recall that by the graph version of Cheeger's inequality, the second eigenvalue of $\cL$ and the corresponding eigenvector can be used to find a sparse cut in $G$. Thus, if we create sparse cuts inside $P_1$ that are sparser than the cut formed by separating $P_1$ and $P_2$, then conceivably the normalized Laplacian's second eigenvector may return the new sparser cut.

To make this formal, consider the following graph structure. Let $n$ be a multiple of $4$. Let $L_1$ consist of indices $1,\dots,n/4$, $L_2$ consist of indices $n/4+1,\dots,n/2$, and $R$ consist of indices $n/2+1,\dots,n$. Consider the block structure induced by the matrix $\Astar = \exv{\mA}$ shown in \Cref{table:hard_a_overview}.

\begin{table}
\centering
\begin{tabular}{l|cccl}
                     & \multicolumn{1}{l}{$L_1$}               & \multicolumn{1}{l}{$L_2$}                                    & \multicolumn{2}{l}{$R$}                                                     \\ \hline
$L_1$                & $Kp \cdot \mathbbm{1}_{n/4 \times n/4}$ & \multicolumn{1}{c|}{$p \cdot \mathbbm{1}_{n/4 \times n/4}$}  & \multicolumn{2}{c}{\multirow{2}{*}{$q \cdot \mathbbm{1}_{n/2 \times n/2}$}} \\
$L_2$                & $p \cdot \mathbbm{1}_{n/4 \times n/4}$  & \multicolumn{1}{c|}{$Kp \cdot \mathbbm{1}_{n/4 \times n/4}$} & \multicolumn{2}{c}{}                                                        \\ \cline{2-5} 
\multirow{2}{*}{$R$} & \multicolumn{2}{c|}{\multirow{2}{*}{$q \cdot \mathbbm{1}_{n/2 \times n/2}$}}                           & \multicolumn{2}{c}{\multirow{2}{*}{$p \cdot \mathbbm{1}_{n/2 \times n/2}$}} \\
                     & \multicolumn{2}{c|}{}                                                                                  & \multicolumn{2}{c}{}                                                       
\end{tabular}
\caption{$\Astar$ for \Cref{mainthm:inconsistency} is defined to have the above block structure.\label{table:hard_a_overview}}
\end{table}

Intuitively, as $K$ gets larger, the cut separating $L_1$ from $V \setminus L_1$ becomes sparser. From Cheeger's inequality, this witnesses a small $\lambda_2(\cL)$ and therefore the corresponding $\vu_2(\cL)$ may return the cut $L_1, V \setminus L_1$. We formally prove that this is indeed what happens when $K$ is a sufficiently large constant and then \Cref{mainthm:inconsistency} follows.

\section{Numerical trials}
\label{sec:experiments}
We programmatically generate synthetic graphs that help illustrate our theoretical findings using the libraries NetworkX 3.3 (BSD 3-Clause license), SciPy 1.13.0 (BSD 3-Clause License), and NumPy 1.26.4 (modified BSD license) \cite{networkx,scipy,numpy}. We ran all our experiments on a free Google Colab instance with the CPU runtime, and each experiment takes under one hour to run. In this section we focus on a setting that allows relating \cref{mainthm:nonhomogeneous} and \cref{mainthm:inconsistency}, and defer more experiments that investigate both NSSBM and DCM graphs to \cref{sec:experiments_more}.

To put \cref{mainthm:nonhomogeneous} and \cref{mainthm:inconsistency} in perspective, we consider graphs generated following the process outlined in the proof of \cref{mainthm:inconsistency}, which gives rise to the following benchmark distribution.

\smallpar{Benchmark distribution.} Let $n$ be divisible by $4$ and let $\{P_1,P-2\}$ be a partitioning of $V=[n]$ into two equally-sized subsets. Let $\{L_1,L_2\}$ be a bipartition of $P_1$ such that $|L_1|=|L_2|=n/4$ and call $L=P_1, \, R=P_2$ for convenience as in the proof of \cref{mainthm:inconsistency}. Then, for some $p,\pbar,q \in [0,1]$ such that $q \le p \le \pbar$, consider the distribution $\mathcal{D}_{p,\pbar,q}$ over graphs $G=(V,E)$ obtained by sampling every edge $(u,v) \in (L_1 \times L_1) \cup (L_2 \times L_2)$ independently with probability $\pbar$, every edge $(u,v) \in (L_1 \times L_2) \cup (R \times R)$ independently with probability $p$, and every edge $(u,v) \in L \times R$ independently with probability $q$. One can see that $\mathcal{D}_{p,\pbar,q}$ is in fact in the set $\mathsf{NSSBM}(n,p,\pbar,q)$.

\smallpar{Setup.} Let us fix $n=2000$, $p=24 \log n /n$, $q = 8 \log n /n$. For varying values of $\pbar$ in the range $[p,1]$, we sample $t=10$ independent draws $G$ from $\mathcal{D}_{p,\pbar,q}$. For each of them, we run spectral bisection (i.e. \cref{alg:spectral_general}) with matrices $\mL,\cL_{\mathsf{sym}},\cL_{\mathsf{rw}},\mA$. Then, we compute the \textit{agreement} of the bipartition hence obtained (with respect to the planted bisection), that is the fraction of correctly classified  vertices. We average the agreement across the $t$ independent draws. The results are shown in the top left plot of \cref{fig:embedding_nssbm}. Another natural way to get a bipartition of $V$ from the eigenvector is a \textit{sweep cut}. In a sweep cut, we sort the entries of $\vu_2$ and take the vertices corresponding to the smallest $n/2$ entries to be on one side of the bisection and put the remaining on the other side. The average agreement obtained in this other fashion is shown in the bottom left plot of \cref{fig:embedding_nssbm}.
\smallpar{Theoretical framing.} As per \cref{mainthm:nonhomogeneous},  we expect unnormalized spectral bisection to achieve exact recovery (i.e. agreement equal to $1$) whenever $\pbar \le \pbar_{\textsf{max}}$, where
\begin{equation}
\label{eq:defpbarmax}
    \pbar_{\textsf{max}} = \frac{\inparen{n(p-q)-\log n}^2}{n \log n}
\end{equation}
is obtained by rearranging the precondition of \cref{mainthm:nonhomogeneous}, ignoring the constants and disregarding the fact that $\alpha$ should be $O(1)$. On the contrary, the proof of \cref{mainthm:inconsistency} shows that normalized spectral bisection misclassifies a constant fraction of vertices provided that $p/q \ge 2$ (which our choice of parameters satisfies) and $\pbar \ge \pbar_{\textsf{thr}}$, where
\begin{equation}
\label{eq:defpbarthr}
    \pbar_{\textsf{thr}} = 3\cdot p^2/q \, .
\end{equation}
In \cref{fig:embedding_nssbm}, the solid vertical line corresponds to the value of $\pbar_{\textsf{thr}}$ on the $x$-axis, and the dashed vertical line corresponds to the value of $\pbar_{\textsf{max}}$ on the $x$-axis. In particular, observe that in our setting $\pbar_{\textsf{thr}}<\pbar_{\textsf{max}}$, so there is an interval of values for $\pbar$ where we expect \cref{mainthm:nonhomogeneous} and \cref{mainthm:inconsistency} to apply simultaneously.

\smallpar{Empirical evidence: consistency.} One can see from the top left plot in \cref{fig:embedding_nssbm} that the agreement of unnormalized spectral bisection is $100\%$ for all values of $\pbar$, even beyond $\pbar_{\textsf{thr}}$ and $\pbar_{\textsf{max}}$. On the other hand, the agreement of the bipartition obtained from all other matrices (hence including normalized spectral bisection) drops below $70\%$ well before the threshold $\pbar_{\textsf{thr}}$ predicted by \cref{mainthm:inconsistency}. From the right plot in \cref{fig:embedding_nssbm}, we see that computing the bipartition by taking a sweep cut of $n/2$ vertices does not change the results -- $\vu_2$ of the unnormalized Laplacian continues to achieve $100\%$ agreement, while for all other matrices the corresponding $\vu_2$ remains inconsistent.

\smallpar{Empirical evidence: embedding variance.} From the setting of the experiment we just illustrated, observe that as we increase $\pbar$, we expect the subgraph $G[L]$ to have increasing volume. As illustrated in \cref{fig:embedding_nssbm}, this seems to correlate with a decrease in the ``variance'' of the second eigenvector $\vu_2$ of the unnormalized Laplacian with respect to the ideal second eigenvector $\utwostar$. More precisely, we compute the average distance squared of the embedding of a vertex in $\vu_2$ from its ideal embedding in $\utwostar$, i.e. the quantity
\begin{equation}
\label{eq:variance}
    \min_{s \in \{\pm 1\}} \frac{1}{n}\norm{\vu_2- s \cdot \utwostar}^2_2 \, .
\end{equation}
This suggests that not only does the second eigenvector of the unnormalized Laplacian remain robust to monotone adversaries, but it actually concentrates more strongly around the ideal embedding~$\utwostar$.

\smallpar{Empirical evidence: example embedding.} Let us fix the value $\pbar = \pbar_{\textsf{thr}}$, for which we see in \cref{fig:varying_clique_cuts} that all matrices except the unnormalized Laplacian fail to recover the planted bisection. We generate a graph from $\mathcal{D}_{p,\pbar,q}$, and plot how the vertices are embedded in the real line by the second eigenvector of all the matrices we consider. The result is shown in \cref{fig:embedding_nssbm}, where the three horizontal dashed lines, from top to bottom, respectively correspond to the value of $1/\sqrt{n}, 0, -1/\sqrt{n}$ on the $y$-axis.

\begin{figure}
    \centering
    \includegraphics[width=0.31\columnwidth]{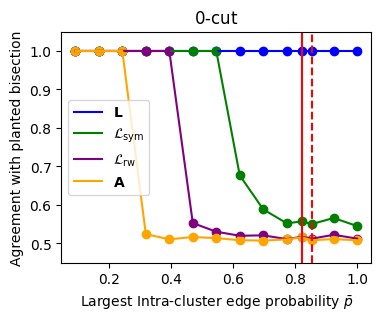}
    \includegraphics[width=0.33\columnwidth]{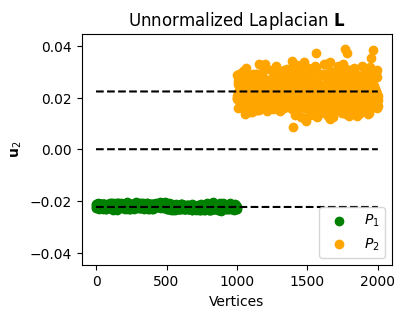}
    \includegraphics[width=0.33\columnwidth]{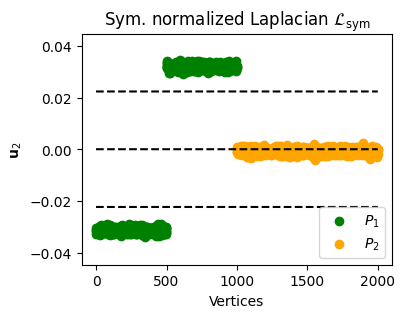}
    \includegraphics[width=0.31\columnwidth]{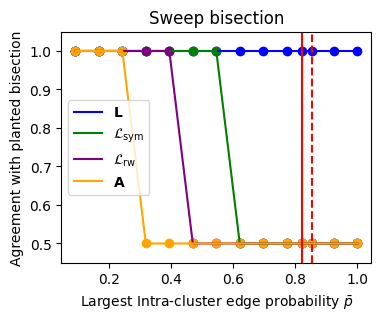}
    \includegraphics[width=0.33\columnwidth]{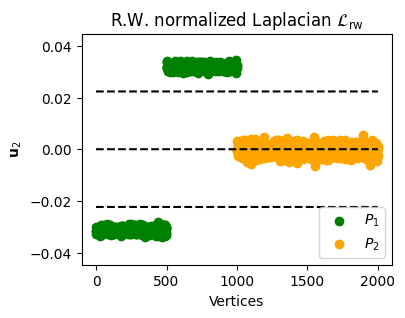}
    \includegraphics[width=0.31\columnwidth]{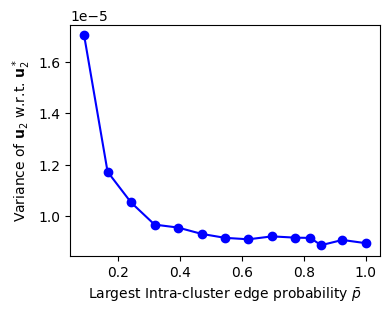}
    \caption{\textbf{Top left, bottom left}: Agreement with the planted bisection of the bipartition obtained from several matrices associated with an input graph generated from a distribution in $\mathsf{NSSBM}(n,p,\pbar,q)$ for fixed values of $n,p,q$ and varying values of $\pbar$. In the top left plot, the bipartition is the $0$-cut of the second eigenvector, as in \cref{alg:spectral_general}. In the bottom left plot, the bipartition is the sweep cut of the first $n/2$ vertices in the second eigenvector. The dashed vertical line corresponds to $\pbar_{\textsf{max}}=\pbar_{\textsf{max}}(n,p,q)$ (see~\eqref{eq:defpbarmax}), and the solid vertical line corresponds to $\pbar_{\textsf{thr}}=\pbar_{\textsf{thr}}(n,p,q)$ (see~\eqref{eq:defpbarthr}). \textbf{Top middle, top right, bottom middle}: Embedding of the vertices given by the second eigenvector $\vu_2$ of several matrices associated with a graph sampled from $\mathcal{D}_{p,\pbar,q}$ with $\pbar = \pbar_{\textsf{thr}}$. Horizontal dashed lines, from top to bottom, correspond to $1/\sqrt{n}, 0, -1/\sqrt{n}$ respectively.\\ \textbf{Bottom right}: Variance of the embedding in the second eigenvector $\vu_2$ of the unnormalized Laplacian with respect to the ideal eigenvector $\utwostar$ (see~\eqref{eq:variance}), for input graphs generated from a distribution in $\mathsf{NSSBM}(n,p,\pbar,q)$ with fixed values of $n,p,q$ and varying values of $\pbar$.}
    \label{fig:embedding_nssbm}
\end{figure}

\subsection{Related work}
\label{sec:sbm_related_work}

\smallpar{Community detection.} Community detection has garnered significant attention in theoretical computer science, statistics, and data science. For a general overview of recent progress and related literature, see the survey by \citet{abbe_survey}. In what follows, we discuss the works we believe are most related to what we study in this paper.

As mentioned in the introduction, perhaps the most fundamental and well-studied model is the symmetric stochastic block model (SSBM), due to \cite{hll83}. The celebrated work of \citet{abh16} gives sharp bounds on the threshold for exact recovery for the SSBM setting. They complement their result by showing that SDP based methods can achieve the information theoretic lower bound for the planted bisection problem, even with a monotone adversary \cite{moitra_sbm_2021}. A line of work \cite{afwz17, dls20} demonstrates that natural spectral algorithms achieve exact recovery for the SSBM all the way to the information-theoretic threshold.

\smallpar{Generalizations of the symmetric stochastic block model.} Since the introduction of SBMs \cite{hll83}, numerous variants have been proposed that are designed to better reflect real-world graph properties. For instance, real-life social networks are likely to contain triangles. To address this, \citet{BS17} introduced a spatial stochastic block model, sometimes known as the geometric stochastic block model (GSBM). Other variations were introduced in the works of \cite{GMPS18, GMPS20}. Subsequent work studies the performance of spectral algorithms on certain Gaussian or Geometric Mixture block models \cite{ABARS20, ABD21, ls24,gnw24}.

Studying community detection with a semirandom model approaches this modeling question differently. Rather than implicitly encouraging a particular structure within the clusters like the models just mentioned, a semirandom adversary (including the ones we study in this paper) can more directly test the robustness of the algorithm to specially designed substructures.

\smallpar{Semirandom and monotone adversaries.} As far as we are aware, \citet{BS95} were the first to introduce a semirandom model. Within this model, they studied graph coloring problems. \citet{fk01} demonstrated that semidefinite programming methods can accurately recover communities up to a certain threshold, even in the semi-random setting. Other problems, such as detecting a planted clique \cite{Jerrum92, Kuc95, BHK16}, have also been studied in the semi-random model of \cite{fk01}. In the setting of planted clique, a natural spectral algorithm fails against monotone adversaries \cite{mmt20, BKS23}.
Monotone adversaries and semirandom models have also been extensively studied for other statistical and algorithmic problems \cite{AV18, kllst23, GC23, bglmsy23}. Finally, \cite{sl17} shows that a spectral heuristic due to \citet{boppana87} is robust under a monotone adversary that is allowed to both insert internal edges and delete crossing edges. However, as far as we are aware, this algorithm does not fit in the framework of \Cref{alg:spectral_general}.

We remark that the models we study in this paper are most closely related to models studied by \cite{mn07} and \cite{mmv12}. In particular, allowing increased internal edge probabilities is analogous to Massart noise in classification problems, and our model with adversarially chosen internal edges can be seen as the same model as that studied in \cite{mmv12} (although without allowing crossing edge deletions). Finally, note that \citet{cdm24} give a near-linear time algorithm for graph clustering in the model of \cite{mmv12}, though they do not explicitly show their algorithm is strongly consistent on instances that are information-theoretically exactly recoverable.

\smallpar{Acknowledgments.} AB was partially supported by the National Science Foundation under Grant Nos. CCF-2008688 and CCF-2047288. NSM was supported by a National Science Foundation Graduate Research Fellowship. We thank Avrim Blum and Yury Makarychev for helpful discussions. We thank Nirmit Joshi for pointing us to the reference \cite{dls20}.

\printbibliography

@article{hll83,
title = {Stochastic blockmodels: First steps},
journal = {Social Networks},
volume = {5},
number = {2},
pages = {109-137},
year = {1983},
issn = {0378-8733},
doi = {https://doi.org/10.1016/0378-8733(83)90021-7},
url = {https://www.sciencedirect.com/science/article/pii/0378873383900217},
author = {Paul W. Holland and Kathryn Blackmond Laskey and Samuel Leinhardt},
abstract = {A stochastic model is proposed for social networks in which the actors in a network are partitioned into subgroups called blocks. The model provides a stochastic generalization of the blockmodel. Estimation techniques are developed for the special case of a single relation social network, with blocks specified a priori. An extension of the model allows for tendencies toward reciprocation of ties beyond those explained by the partition. The extended model provides a one degree-of-freedom test of the model. A numerical example from the social network literature is used to illustrate the methods.}
}

@Article{         numpy,
 title         = {Array programming with {NumPy}},
 author        = {Charles R. Harris and K. Jarrod Millman and St{\'{e}}fan J.
                 van der Walt and Ralf Gommers and Pauli Virtanen and David
                 Cournapeau and Eric Wieser and Julian Taylor and Sebastian
                 Berg and Nathaniel J. Smith and Robert Kern and Matti Picus
                 and Stephan Hoyer and Marten H. van Kerkwijk and Matthew
                 Brett and Allan Haldane and Jaime Fern{\'{a}}ndez del
                 R{\'{i}}o and Mark Wiebe and Pearu Peterson and Pierre
                 G{\'{e}}rard-Marchant and Kevin Sheppard and Tyler Reddy and
                 Warren Weckesser and Hameer Abbasi and Christoph Gohlke and
                 Travis E. Oliphant},
 year          = {2020},
 month         = sep,
 journal       = {Nature},
 volume        = {585},
 number        = {7825},
 pages         = {357--362},
 doi           = {10.1038/s41586-020-2649-2},
 publisher     = {Springer Science and Business Media {LLC}},
 url           = {https://doi.org/10.1038/s41586-020-2649-2}
}

@techreport{networkx,
  title={Exploring network structure, dynamics, and function using NetworkX},
  author={Hagberg, Aric and Swart, Pieter and S Chult, Daniel},
  year={2008},
  institution={Los Alamos National Lab.(LANL), Los Alamos, NM (United States)}
}

@ARTICLE{scipy,
  author  = {Virtanen, Pauli and Gommers, Ralf and Oliphant, Travis E. and
            Haberland, Matt and Reddy, Tyler and Cournapeau, David and
            Burovski, Evgeni and Peterson, Pearu and Weckesser, Warren and
            Bright, Jonathan and {van der Walt}, St{\'e}fan J. and
            Brett, Matthew and Wilson, Joshua and Millman, K. Jarrod and
            Mayorov, Nikolay and Nelson, Andrew R. J. and Jones, Eric and
            Kern, Robert and Larson, Eric and Carey, C J and
            Polat, {\.I}lhan and Feng, Yu and Moore, Eric W. and
            {VanderPlas}, Jake and Laxalde, Denis and Perktold, Josef and
            Cimrman, Robert and Henriksen, Ian and Quintero, E. A. and
            Harris, Charles R. and Archibald, Anne M. and
            Ribeiro, Ant{\^o}nio H. and Pedregosa, Fabian and
            {van Mulbregt}, Paul and {SciPy 1.0 Contributors}},
  title   = {{{SciPy} 1.0: Fundamental Algorithms for Scientific
            Computing in Python}},
  journal = {Nature Methods},
  year    = {2020},
  volume  = {17},
  pages   = {261--272},
  adsurl  = {https://rdcu.be/b08Wh},
  doi     = {10.1038/s41592-019-0686-2},
}

@INPROCEEDINGS{boppana87,
  author={Boppana, Ravi B.},
  booktitle={28th Annual Symposium on Foundations of Computer Science (sfcs 1987)}, 
  title={Eigenvalues and graph bisection: An average-case analysis}, 
  year={1987},
  volume={},
  number={},
  pages={280-285},
  keywords={Eigenvalues and eigenfunctions;Probability distribution;Partitioning algorithms;Polynomials;Heuristic algorithms;Computer science;Very large scale integration;Mathematics;H infinity control;Algorithm design and analysis},
  doi={10.1109/SFCS.1987.22}}

@InProceedings{sl17,
  author =	{Schuster, Martin R. and Liskiewicz, Maciej},
  title =	{{New Abilities and Limitations of Spectral Graph Bisection}},
  booktitle =	{25th Annual European Symposium on Algorithms (ESA 2017)},
  pages =	{66:1--66:15},
  series =	{Leibniz International Proceedings in Informatics (LIPIcs)},
  ISBN =	{978-3-95977-049-1},
  ISSN =	{1868-8969},
  year =	{2017},
  volume =	{87},
  editor =	{Pruhs, Kirk and Sohler, Christian},
  publisher =	{Schloss Dagstuhl -- Leibniz-Zentrum f{\"u}r Informatik},
  address =	{Dagstuhl, Germany},
  URL =		{https://drops.dagstuhl.de/entities/document/10.4230/LIPIcs.ESA.2017.66},
  URN =		{urn:nbn:de:0030-drops-78658},
  doi =		{10.4230/LIPIcs.ESA.2017.66},
  annote =	{Keywords: Minimum Graph Bisection, Spectral Methods, Convex Programming}
}

@inbook{bv24,
author = {Abhinav Bhardwaj and Van Vu},
title = {Matrix Perturbation: Davis-Kahan in the Infinity Norm},
booktitle = {Proceedings of the 2024 Annual ACM-SIAM Symposium on Discrete Algorithms (SODA)},
chapter = {},
pages = {880-934},
year = {2024},
month = {01},
eprint={2304.00328},
      archivePrefix={arXiv},
      primaryClass={math.PR},
}

@ARTICLE{abh16,
  author={Abbe, Emmanuel and Bandeira, Afonso S. and Hall, Georgina},
  journal={IEEE Transactions on Information Theory}, 
  title={Exact Recovery in the Stochastic Block Model}, 
  year={2016},
  volume={62},
  number={1},
  pages={471-487},
  keywords={Stochastic processes;Clustering algorithms;Maximum likelihood decoding;Computational modeling;Machine learning algorithms;Maximum likelihood estimation;Computer science;Communities;clustering algorithms;detection algorithms;statistical learning;network theory (graphs);Communities;clustering algorithms;detection algorithms;statistical learning;network theory (graphs)},
  doi={10.1109/TIT.2015.2490670},
eprint={1405.3267},
      archivePrefix={arXiv},
      primaryClass={cs.SI},
}

@book{vershynin2018,
  title={High-dimensional probability: An introduction with applications in data science},
  author={Vershynin, Roman},
  volume={47},
  year={2018},
  publisher={Cambridge university press}
}

@InProceedings{kllst23,
  title = 	 {Semi-Random Sparse Recovery in Nearly-Linear Time},
  author =       {Kelner, Jonathan and Li, Jerry and Liu, Allen X. and Sidford, Aaron and Tian, Kevin},
  booktitle = 	 {Proceedings of Thirty Sixth Conference on Learning Theory},
  pages = 	 {2352--2398},
  year = 	 {2023},
  editor = 	 {Neu, Gergely and Rosasco, Lorenzo},
  volume = 	 {195},
  series = 	 {Proceedings of Machine Learning Research},
  month = 	 {07},
  publisher =    {PMLR},
  pdf = 	 {https://proceedings.mlr.press/v195/kelner23a/kelner23a.pdf},
  url = 	 {https://proceedings.mlr.press/v195/kelner23a.html},
  abstract = 	 {Sparse recovery is one of the most fundamental and well-studied inverse problems.Standard statistical formulations of the problem are provably solved by general convex programming techniques and more practical, fast (nearly-linear time) iterative methods. However, these latter “fast algorithms” have previously been observed to be brittle in various real-world settings.We investigate the brittleness of fast sparse recovery algorithms to generative model changes through the lens of studying their robustness to a “helpful” semi-random adversary, a framework for testing overfitting to input assumptions. We consider the following basic model: let $\mathbf{A} \in \mathbb{R}^{n \times d}$ be a measurement matrix containing an unknown subset of rows $\mathbf{G} \in \mathb{R}^{m \times d}$ which are bounded and satisfy the restricted isometry property (RIP), but is otherwise arbitrary. Letting $x^\star \in \mathbb{R}^d$ be $s$-sparse, and given either exact or noisy measurements, $b = \mathbf{A} x^\star$ or $b = \mathbf{A} x^\star + \xi$, we design algorithms recovering $x^\star$ information-theoretically optimally in nearly-linear time. We extend our algorithm to hold for weaker generative models relaxing our planted RIP row subset assumption to a natural weighted variant, and show that our method’s guarantees naturally interpolate the quality of the measurement matrix to, in some parameter regimes, run in sublinear time.Our approach differs from that of prior fast iterative methods with provable guarantees under semi-random generative models [CG18, LSTZ20], which typically separate the problem of learning the planted instance from the estimation problem, i.e. they attempt to first learn the planted “good” instance (in our case, the matrix $\mathbf{G}$). However, natural conditions on a submatrix which make sparse recovery tractable, such as RIP, are NP-hard to verify and hence first learning a sufficient row reweighting appears challenging. We eschew this approach and design a new iterative method, tailored to the geometry of sparse recovery, which is provably robust to our semi-random model. Our hope is that our approach opens the door to new robust, efficient algorithms for other natural statistical inverse problems.},
eprint={2203.04002},
      archivePrefix={arXiv},
      primaryClass={cs.DS},
}

@article{mn07,
   title={Risk bounds for statistical learning},
   volume={34},
   ISSN={0090-5364},
   url={http://dx.doi.org/10.1214/009053606000000786},
   DOI={10.1214/009053606000000786},
   number={5},
   journal={The Annals of Statistics},
   publisher={Institute of Mathematical Statistics},
   author={Massart, Pascal and Nédélec, Élodie},
   year={2006},
   month={10},
}

@InProceedings{ycom24,
  title = 	 {Top-$K$ ranking with a monotone adversary},
  author =       {Yang, Yuepeng and Chen, Antares and Orecchia, Lorenzo and Ma, Cong},
  booktitle = 	 {Proceedings of Thirty Seventh Conference on Learning Theory},
  pages = 	 {5123--5162},
  year = 	 {2024},
  editor = 	 {Agrawal, Shipra and Roth, Aaron},
  volume = 	 {247},
  series = 	 {Proceedings of Machine Learning Research},
  month = 	 {07},
  publisher =    {PMLR},
  pdf = 	 {https://proceedings.mlr.press/v247/yang24b/yang24b.pdf},
  url = 	 {https://proceedings.mlr.press/v247/yang24b.html},
  abstract = 	 {In this paper, we address the top-$K$ ranking problem with a monotone adversary. We consider the scenario where a comparison graph is randomly generated and the adversary is allowed to add arbitrary edges.  The statistician’s goal is then to accurately identify the top-$K$ preferred items based on pairwise comparisons derived from this semi-random comparison graph.  The main contribution of this paper is  to develop a weighted maximum likelihood estimator (MLE) that achieves near-optimal sample complexity, up to a $\log^2(n)$ factor, where $n$ denotes the number of items under comparison. This is made possible through a combination of analytical and algorithmic innovations.  On the analytical front, we provide a refined&nbsp;$\ell_\infty$ error analysis of the weighted MLE that is more explicit and tighter than existing analyses. It relates the&nbsp;$\ell_\infty$ error with the spectral properties of the weighted comparison graph. Motivated by this, our algorithmic innovation involves the development of an SDP-based approach to reweight the semi-random graph and meet specified spectral properties. Additionally, we propose a first-order method based on the Matrix Multiplicative Weight Update (MMWU) framework to solve the resulting SDP efficiently in nearly-linear time in the size of the semi-random comparison graph.},
    eprint={2402.07445},
      archivePrefix={arXiv},
      primaryClass={stat.ML}
}

@misc{ls24,
      title={Spectral clustering in the Gaussian mixture block model}, 
      author={Shuangping Li and Tselil Schramm},
      year={2024},
      eprint={2305.00979},
      archivePrefix={arXiv},
      primaryClass={stat.ML}
}

@inbook{gnw24,
author = {Julia Gaudio and Xiaochun Niu and Ermin Wei},
title = {Exact Community Recovery in the Geometric SBM},
booktitle = {Proceedings of the 2024 Annual ACM-SIAM Symposium on Discrete Algorithms (SODA)},
chapter = {},
pages = {2158-2184},
doi = {10.1137/1.9781611977912.78},
URL = {https://epubs.siam.org/doi/abs/10.1137/1.9781611977912.78},
eprint = {https://epubs.siam.org/doi/pdf/10.1137/1.9781611977912.78},
    abstract = { Abstract We study the problem of exact community recovery in the Geometric Stochastic Block Model (GSBM), where each vertex has an unknown community label as well as a known position, generated according to a Poisson point process in ℝd. Edges are formed independently conditioned on the community labels and positions, where vertices may only be connected by an edge if they are within a prescribed distance of each other. The GSBM thus favors the formation of dense local subgraphs, which commonly occur in real-world networks, a property that makes the GSBM qualitatively very different from the standard Stochastic Block Model (SBM). We propose a linear-time algorithm for exact community recovery, which succeeds down to the information-theoretic threshold, confirming a conjecture of Abbe, Baccelli, and Sankararaman. The algorithm involves two phases. The first phase exploits the density of local subgraphs to propagate estimated community labels among sufficiently occupied subregions, and produces an almost-exact vertex labeling. The second phase then refines the initial labels using a Poisson testing procedure. Thus, the GSBM enjoys local to global amplification just as the SBM, with the advantage of admitting an information-theoretically optimal, linear-time algorithm. },
year={2024},
eprint={2307.11196},
      archivePrefix={arXiv},
      primaryClass={cs.SI},
}

@article{abbe_survey,
  author  = {Emmanuel Abbe},
  title   = {Community Detection and Stochastic Block Models: Recent Developments},
  journal = {Journal of Machine Learning Research},
  year    = {2018},
  volume  = {18},
  number  = {177},
  pages   = {1--86},
  url     = {http://jmlr.org/papers/v18/16-480.html},
eprint={1703.10146},
      archivePrefix={arXiv},
      primaryClass={math.PR},
}

@inbook{moitra_sbm_2021, place={Cambridge}, title={Semirandom Stochastic Block Models}, DOI={10.1017/9781108637435.014}, booktitle={Beyond the Worst-Case Analysis of Algorithms}, publisher={Cambridge University Press}, author={Moitra, Ankur}, editor={Roughgarden, Tim}, year={2021}, pages={212–233}}

@inproceedings{mpw16,
author = {Moitra, Ankur and Perry, William and Wein, Alexander S.},
title = {How robust are reconstruction thresholds for community detection?},
year = {2016},
isbn = {9781450341325},
publisher = {Association for Computing Machinery},
address = {New York, NY, USA},
url = {https://doi.org/10.1145/2897518.2897573},
doi = {10.1145/2897518.2897573},
abstract = {The stochastic block model is one of the oldest and most ubiquitous models for studying clustering and community detection. In an exciting sequence of developments, motivated by deep but non-rigorous ideas from statistical physics, Decelle et al. conjectured a sharp threshold for when community detection is possible in the sparse regime. Mossel, Neeman and Sly and Massoulie proved the conjecture and gave matching algorithms and lower bounds.  Here we revisit the stochastic block model from the perspective of semirandom models where we allow an adversary to make `helpful' changes that strengthen ties within each community and break ties between them. We show a surprising result that these `helpful' changes can shift the information-theoretic threshold, making the community detection problem strictly harder. We complement this by showing that an algorithm based on semidefinite programming (which was known to get close to the threshold) continues to work in the semirandom model (even for partial recovery). This suggests that algorithms based on semidefinite programming are robust in ways that any algorithm meeting the information-theoretic threshold cannot be.  These results point to an interesting new direction: Can we find robust, semirandom analogues to some of the classical, average-case thresholds in statistics? We also explore this question in the broadcast tree model, and we show that the viewpoint of semirandom models can help explain why some algorithms are preferred to others in practice, in spite of the gaps in their statistical performance on random models.},
booktitle = {Proceedings of the Forty-Eighth Annual ACM Symposium on Theory of Computing},
pages = {828–841},
numpages = {14},
keywords = {Community detection, broadcast tree model, semidefinite programming, semirandom models, stochastic block model},
location = {Cambridge, MA, USA},
series = {STOC '16},
eprint={1511.01473},
      archivePrefix={arXiv},
      primaryClass={cs.DS},
}

@article{vl07,
  title={A tutorial on spectral clustering},
  author={Von Luxburg, Ulrike},
  journal={Statistics and computing},
  volume={17},
  pages={395--416},
  year={2007},
  publisher={Springer},
eprint={0711.0189},
      archivePrefix={arXiv},
      primaryClass={cs.DS},
}

@article{sb15,
   title={Role of normalization in spectral clustering for stochastic blockmodels},
   volume={43},
   ISSN={0090-5364},
   url={http://dx.doi.org/10.1214/14-AOS1285},
   DOI={10.1214/14-aos1285},
   number={3},
   journal={The Annals of Statistics},
   publisher={Institute of Mathematical Statistics},
   author={Sarkar, Purnamrita and Bickel, Peter J.},
   year={2015},
   month={06},
eprint={1310.1495},
      archivePrefix={arXiv},
      primaryClass={stat.ML},
}

@article{afwz17,
  title={Entrywise eigenvector analysis of random matrices with low expected rank},
  author={Abbe, Emmanuel and Fan, Jianqing and Wang, Kaizheng and Zhong, Yiqiao},
  journal={Annals of statistics},
  volume={48},
  number={3},
  pages={1452},
  year={2020},
  publisher={NIH Public Access},
eprint={1709.09565},
      archivePrefix={arXiv},
      primaryClass={math.ST}
}

@article{fk01,
author = {Feige, Uriel and Kilian, Joe},
title = {Heuristics for Semirandom Graph Problems},
year = {2001},
issue_date = {December 2001},
publisher = {Academic Press, Inc.},
address = {USA},
volume = {63},
number = {4},
issn = {0022-0000},
url = {https://doi.org/10.1006/jcss.2001.1773},
doi = {10.1006/jcss.2001.1773},
abstract = {We consider semirandom graph models for finding large independent sets, colorings, and bisections in graphs. These models generate problem instances by blending random and adversarial decisions. To generate semirandom independent set problems, an independent set S of n vertices is randomly chosen. Each edge connecting S with S is chosen with probability p, and an adversary is then allowed to add new edges arbitrarily, provided that S remains an independent set. The smaller p is, the greater the control the adversary has over the semirandom graph. We give a heuristic that with high probability recovers an independent set of size n whenever p&gt; (1+ )lnn/ n, for any constant &gt;0. We show that when p&lt;(1 )lnn / n, an independent set of size |S| cannot be recovered, unless NP BPP. We use our result for maximum independent sets to obtain greatly improved heuristics for the model of k-colorable semirandom graphs introduced by Blum and Spencer. For constant k, our results are optimal up to constant factors in the edge probabilities. In the semirandom model for graph bisection, a random bisection (S, S) of the vertices is chosen. Each edge (u, v) S S is independently chosen with probability q and each edge (u, v) S S is independently chosen with probability pq. The adversary may then arbitrarily remove edges in S S and add edges not in S S. Extending the work of Boppana, we give a heuristic that recovers this bisection with high probability when p q cplogn/n, for c a sufficiently large constant.},
journal = {J. Comput. Syst. Sci.},
month = {12},
pages = {639–671},
numpages = {33}
}

@inproceedings{mmv12,
author = {Makarychev, Konstantin and Makarychev, Yury and Vijayaraghavan, Aravindan},
title = {Approximation algorithms for semi-random partitioning problems},
year = {2012},
isbn = {9781450312455},
publisher = {Association for Computing Machinery},
address = {New York, NY, USA},
booktitle = {Proceedings of the Forty-Fourth Annual ACM Symposium on Theory of Computing},
pages = {367–384},
numpages = {18},
keywords = {semi-random model, random planted model, graph partitioning, average-case analysis, approximation algorithm},
location = {New York, New York, USA},
series = {STOC '12},
eprint={1205.2234},
archivePrefix={arXiv},
primaryClass={cs.DS}
}

@article{llv16,
  title={Concentration and regularization of random graphs},
  author={Le, Can M and Levina, Elizaveta and Vershynin, Roman},
  journal={Random Structures \& Algorithms},
  volume={51},
  number={3},
  pages={538--561},
  year={2017},
  publisher={Wiley Online Library},
eprint={1506.00669},
      archivePrefix={arXiv},
      primaryClass={math.PR}
}

@article{dls20,
  title={Strong consistency, graph laplacians, and the stochastic block model},
  author={Deng, Shaofeng and Ling, Shuyang and Strohmer, Thomas},
  journal={Journal of Machine Learning Research},
  volume={22},
  number={117},
  pages={1--44},
  year={2021},
eprint={2004.09780},
      archivePrefix={arXiv},
      primaryClass={stat.ML}
}

@inproceedings{mmt20,
author = {Theo McKenzie and Hermish Mehta and Luca Trevisan},
title = {A new algorithm for the robust semi-random independent set problem},
year = {2020},
publisher = {Society for Industrial and Applied Mathematics},
address = {USA},
abstract = {We study the independent set problem in a semi-random model proposed by Feige and Kilian. This model selects a graph with a planted independent set of size k and then allows an adversary to modify a large fraction of edges: the subgraph induced by the complement of the independent set can be modified arbitrarily, and the adversary may add (but not delete) edges from the independent set to its complement. In particular, the adversary can create a graph in which the initial planted independent set is not the largest independent set. Feige and Kilian presented a randomized algorithm, which with high probability recovers an independent set of size at least k (which may not be the planted one) when k = αn where α is a constant, and the probability of a random edge p > (1 + ϵ) ln n/αn. We give a new deterministic algorithm in the Feige-Kilian model that finds an independent set of size at least .99k provided that the planted set has size k = Ω(n2/3/p1/3), and finds a list of independent sets, one of which is the planted one provided that k = Ω(n2/3/p). This improves on the algorithm of Feige and Kilian by working for smaller k if p = Ω(1/n1/3), and improves on an algorithm of Steinhardt by working for slightly smaller k and by working against a stronger adversarial model. The ability to find a good approximation of the largest independent set is new when p < ln n/k.},
booktitle = {Proceedings of the Thirty-First Annual ACM-SIAM Symposium on Discrete Algorithms},
pages = {738–746},
numpages = {9},
location = {Salt Lake City, Utah},
series = {SODA '20},
eprint={1808.03633},
      archivePrefix={arXiv},
      primaryClass={cs.DS},
}

@inproceedings{csv17,
author = {Charikar, Moses and Steinhardt, Jacob and Valiant, Gregory},
title = {Learning from untrusted data},
year = {2017},
isbn = {9781450345286},
publisher = {Association for Computing Machinery},
address = {New York, NY, USA},
abstract = {The vast majority of theoretical results in machine learning and statistics assume that the training data is a reliable reflection of the phenomena to be learned. Similarly, most learning techniques used in practice are brittle to the presence of large amounts of biased or malicious data. Motivated by this, we consider two frameworks for studying estimation, learning, and optimization in the presence of significant fractions of arbitrary data. The first framework, list-decodable learning, asks whether it is possible to return a list of answers such that at least one is accurate. For example, given a dataset of n points for which an unknown subset of αn points are drawn from a distribution of interest, and no assumptions are made about the remaining (1 - α)n points, is it possible to return a list of poly(1/α) answers? The second framework, which we term the semi-verified model, asks whether a small dataset of trusted data (drawn from the distribution in question) can be used to extract accurate information from a much larger but untrusted dataset (of which only an α-fraction is drawn from the distribution). We show strong positive results in both settings, and provide an algorithm for robust learning in a very general stochastic optimization setting. This result has immediate implications for robustly estimating the mean of distributions with bounded second moments, robustly learning mixtures of such distributions, and robustly finding planted partitions in random graphs in which significant portions of the graph have been perturbed by an adversary.},
booktitle = {Proceedings of the 49th Annual ACM SIGACT Symposium on Theory of Computing},
pages = {47–60},
numpages = {14},
keywords = {robust learning, outlier removal, high-dimensional statistics},
location = {Montreal, Canada},
series = {STOC 2017},
eprint={1611.02315},
      archivePrefix={arXiv},
      primaryClass={cs.LG},
}

@article{BS95,
title = {Coloring Random and Semi-Random k-Colorable Graphs},
journal = {Journal of Algorithms},
volume = {19},
number = {2},
pages = {204-234},
year = {1995},
issn = {0196-6774},
doi = {https://doi.org/10.1006/jagm.1995.1034},
url = {https://www.sciencedirect.com/science/article/pii/S0196677485710346},
author = {Avrim Blum and Joel Spencer},
abstract = {The problem of coloring a graph with the minimum number of colors is well known to be NP-hard, even restricted to k-colorable graphs for constant k ≥ 3. On the other hand, it is known that random k-colorable graphs are easy to k-color. The algorithms for coloring random k-colorable graphs require fairly high edge densities, however. In this paper we present algorithms that color randomly generated k-colorable graphs for much lower edge densities than previous approaches. In addition, to study a wider variety of graph distributions, we also present a model of graphs generated by the semi-random source of Santha and Vazirani (M. Santha and U. V. Vazirani, J. Comput. System Sci.33 (1986), 75-87) that provides a smooth transition between the worst-case and random models. In this model, the graph is generated by a "noisy adversary"-an adversary whose decisions (whether or not to insert a particular edge) have some small (random) probability of being reversed. We show that even for quite low noise rates, semi-random k-colorable graphs can be optimally colored with high probability.}
}

@INPROCEEDINGS{BS17,
  author={Sankararaman, Abishek and Baccelli, François},
  booktitle={2017 55th Annual Allerton Conference on Communication, Control, and Computing (Allerton)}, 
  title={Community detection on euclidean random graphs}, 
  year={2017},
  volume={},
  number={},
  pages={510-517},
  keywords={Geometry;Image edge detection;Social network services;Mathematical model;Stochastic processes;Partitioning algorithms;Estimation},
  doi={10.1109/ALLERTON.2017.8262780}
}

@article{ABARS20,
author = {Abbe, Emmanuel and Boix-Adser\`{a}, Enric and Ralli, Peter and Sandon, Colin},
title = {Graph Powering and Spectral Robustness},
journal = {SIAM Journal on Mathematics of Data Science},
volume = {2},
number = {1},
pages = {132-157},
year = {2020},
doi = {10.1137/19M1257135},

URL = {
        https://doi.org/10.1137/19M1257135
},
eprint = { 
        https://doi.org/10.1137/19M1257135
}
,
    abstract = { Spectral algorithms, such as principal component analysis and spectral clustering, rely on the extremal eigenpairs of a matrix \$A\$. However, these may be uninformative without preprocessing \$A\$ with a proper transformation. The reason is that the spectrum of \$A\$ may be contaminated by top eigenvalues resulting from scale variations in the data, such as high-degree nodes. Designing a good \$\psi\$ and establishing what good means is often challenging and model dependent. This paper proposes a simple and generic construction for sparse graphs, \$\psi(A) = \mathbb{1}((I+A)^r \ge 1)\$, where \$A\$ denotes the adjacency matrix, \$r\$ is an integer, and the indicator function is applied entrywise. We support this “graph powering” construction with the following regularization properties: (i) if the graph is drawn from the sparse Erdös--Rényi ensemble, which has no spectral gap, then graph powering produces a “maximal” spectral gap, comparable to that obtained when powering a random regular graph; (ii) if the graph is drawn from the sparse stochastic block model, graph powering achieves the fundamental limit for weak recovery (the Kesten--Stigum threshold), settling at the same time a related conjecture by Massoulié in 2013; (iii) we also demonstrate that graph powering is significantly more robust to tangles and cliques than previous spectral algorithms based on self-avoiding or nonbacktracking walk counts, using a geometric block model as our benchmark and introducing new conjectures for this model. },
    archivePrefix={arXiv},
    eprint={1809.04818},
    primaryClass={cs.DS}
}

@INPROCEEDINGS{GMPS18,
  author={Galhotra, Sainyam and Pal, Soumyabrata and Mazumdar, Arya and Saha, Barna},
  booktitle={2018 56th Annual Allerton Conference on Communication, Control, and Computing (Allerton)}, 
  title={The Geometric Block Model and Applications}, 
  year={2018},
  volume={},
  number={},
  pages={1147-1150},
  keywords={Clustering algorithms;Image edge detection;Blogs;Computational modeling;Random variables;DVD;Artificial neural networks},
  doi={10.1109/ALLERTON.2018.8635938}}

@InProceedings{GMPS20,
  author =	{Galhotra, Sainyam and Mazumdar, Arya and Pal, Soumyabrata and Saha, Barna},
  title =	{{Connectivity of Random Annulus Graphs and the Geometric Block Model}},
  booktitle =	{Approximation, Randomization, and Combinatorial Optimization. Algorithms and Techniques (APPROX/RANDOM 2019)},
  pages =	{53:1--53:23},
  series =	{Leibniz International Proceedings in Informatics (LIPIcs)},
  ISBN =	{978-3-95977-125-2},
  ISSN =	{1868-8969},
  year =	{2019},
  volume =	{145},
  editor =	{Achlioptas, Dimitris and V\'{e}gh, L\'{a}szl\'{o} A.},
  publisher =	{Schloss Dagstuhl -- Leibniz-Zentrum f{\"u}r Informatik},
  address =	{Dagstuhl, Germany},
eprint={1804.05013},
      archivePrefix={arXiv},
      primaryClass={cs.DM}
}

@article{ABD21,
	abstract = {The present paper is devoted to clustering geometric graphs. While the standard spectral clustering is often not effective for geometric graphs, we present an effective generalization, which we call higher-order spectral clustering. It resembles in concept the classical spectral clustering method but uses for partitioning the eigenvector associated with a higher-order eigenvalue. We establish the weak consistency of this algorithm for a wide class of geometric graphs which we call Soft Geometric Block Model. A small adjustment of the algorithm provides strong consistency. We also show that our method is effective in numerical experiments even for graphs of modest size.},
	author = {Avrachenkov, Konstantin and Bobu, Andrei and Dreveton, Maximilien},
        year = {2021},
        month = {03},
	doi = {10.1007/s00041-021-09825-2},
	id = {Avrachenkov2021},
	journal = {Journal of Fourier Analysis and Applications},
	number = {2},
	pages = {22},
	title = {Higher-Order Spectral Clustering for Geometric Graphs},
	url = {https://doi.org/10.1007/s00041-021-09825-2},
	volume = {27},
	year = {2021},
	bdsk-url-1 = {https://doi.org/10.1007/s00041-021-09825-2},
archivePrefix={arXiv},
      primaryClass={cs.LG},
      eprint={2009.11353},
}

@article{BHK16,
  title={A nearly tight sum-of-squares lower bound for the planted clique problem},
  author={Barak, Boaz and Hopkins, Samuel and Kelner, Jonathan and Kothari, Pravesh K and Moitra, Ankur and Potechin, Aaron},
  journal={SIAM Journal on Computing},
  volume={48},
  number={2},
  pages={687--735},
  year={2019},
  publisher={SIAM},
eprint={1604.03084},
      archivePrefix={arXiv},
      primaryClass={cs.CC},
}

@article{Jerrum92,
  author       = {Mark Jerrum},
  title        = {Large Cliques Elude the Metropolis Process},
  journal      = {Random Struct. Algorithms},
  volume       = {3},
  number       = {4},
  pages        = {347--360},
  year         = {1992},
  url          = {https://doi.org/10.1002/rsa.3240030402},
  doi          = {10.1002/RSA.3240030402},
  timestamp    = {Wed, 14 Nov 2018 10:13:53 +0100},
  biburl       = {https://dblp.org/rec/journals/rsa/Jerrum92.bib},
  bibsource    = {dblp computer science bibliography, https://dblp.org}
}

@article{Kuc95,
	abstract = {We study the expected time complexity of two graph partitioning problems: the graph coloring and the cut into equal parts. If k = o(√nlog n), we can test whether two vertices of a k-colorable graph can be k-colored by the same color in time O(k log n) per pair of vertices with O(k4 log3 n)-time preprocessing in such a way that for almost all k-colorable graphs the answer is correct for all pairs of vertices. From this we obtain a sublinear (with respect to the number of edges) expected time algorithm for k-coloring of k-colorable graphs (assuming the uniform input distribution). Similarly, if c ⩽ (18 − ε)n2, ε > 0 is a constant, and G is a graph having a cut of the vertex set into two equal parts with at most c cross-edges, we can test whether two vertices belong to the same class of some c-cut in time O(log n) per vertex with O(log3 n)-time preprocessing in such a way that for almost all graphs having a c-cut the answer is correct for all pairs of vertices. The methods presented in the paper can also be used for other graph partitioning problems, e.g. the largest clique or independent subset.},
	author = {Lud{\v e}k Ku{\v c}era},
	doi = {https://doi.org/10.1016/0166-218X(94)00103-K},
	issn = {0166-218X},
	journal = {Discrete Applied Mathematics},
	note = {Combinatorial optimization 1992},
	number = {2},
	pages = {193-212},
	title = {Expected complexity of graph partitioning problems},
	url = {https://www.sciencedirect.com/science/article/pii/0166218X9400103K},
	volume = {57},
	year = {1995},
	bdsk-url-1 = {https://www.sciencedirect.com/science/article/pii/0166218X9400103K},
	bdsk-url-2 = {https://doi.org/10.1016/0166-218X(94)00103-K}
}

@inproceedings{BKS23,
author = {Buhai, Rares-Darius and Kothari, Pravesh K. and Steurer, David},
title = {Algorithms Approaching the Threshold for Semi-random Planted Clique},
year = {2023},
isbn = {9781450399135},
publisher = {Association for Computing Machinery},
address = {New York, NY, USA},
abstract = {We design new polynomial-time algorithms for recovering planted cliques in the semi-random graph model introduced by Feige and Kilian. The previous best algorithms for this model succeed if the planted clique has size at least n2/3 in a graph with n vertices. Our algorithms work for planted-clique sizes approaching n1/2 — the information-theoretic threshold in the semi-random model and a conjectured computational threshold even in the easier fully-random model. This result comes close to resolving open questions by Feige and Steinhardt. To generate a graph in the semi-random planted-clique model, we first 1) plant a clique of size k in an n-vertex –graph with edge probability 1/2 and then adversarially add or delete an arbitrary number edges not touching the planted clique and delete any subset of edges going out of the planted clique. For every є>0, we give an nO(1/є)-time algorithm that recovers a clique of size k in this model whenever k ≥ n1/2+є. In fact, our algorithm computes, with high probability, a list of about n/k cliques of size k that contains the planted clique. Our algorithms also extend to arbitrary edge probabilities p and improve on the previous best guarantee whenever p ≤ 1−n−0.001. Our algorithms rely on a new conceptual connection that translates certificates of upper bounds on biclique numbers in unbalanced bipartite –random graphs into algorithms for semi-random planted clique. Analogous to the (conjecturally) optimal algorithms for the fully-random model, the previous best guarantees for semi-random planted clique correspond to spectral relaxations of biclique numbers based on eigenvalues of adjacency matrices. We construct an SDP lower bound that shows that the n2/3 threshold in prior works is an inherent limitation of these spectral relaxations. We go beyond this limitation by using higher-order sum-of-squares relaxations for biclique numbers. We also provide some evidence that the information-computation trade-off of our current algorithms may be inherent by proving an average-case lower bound for unbalanced bicliques in the low-degree polynomial model.},
booktitle = {Proceedings of the 55th Annual ACM Symposium on Theory of Computing},
pages = {1918–1926},
numpages = {9},
keywords = {planted clique, semi-random, semidefinite programming, sum-of-squares hierarchy},
location = {Orlando, FL, USA},
series = {STOC 2023},
eprint={2212.05619},
      archivePrefix={arXiv},
      primaryClass={cs.DS}
}

@InProceedings{bglmsy23,
      title={Dueling Optimization with a Monotone Adversary}, 
      author={Avrim Blum and Meghal Gupta and Gene Li and Naren Sarayu Manoj and Aadirupa Saha and Yuanyuan Yang},
      year={2024},
      month={02},
      booktitle = {Proceedings of Thirty Fifth Conference on Algorithmic Learning Theory (ALT)},
      archivePrefix={arXiv},
      primaryClass={cs.DS},
      eprint={2311.11185},
}

@InProceedings{AV18,
  title = 	 {Clustering Semi-Random Mixtures of {G}aussians},
  author =       {Vijayaraghavan, Aravindan and Awasthi, Pranjal},
  booktitle = 	 {Proceedings of the 35th International Conference on Machine Learning},
  pages = 	 {5055--5064},
  year = 	 {2018},
  editor = 	 {Dy, Jennifer and Krause, Andreas},
  volume = 	 {80},
  series = 	 {Proceedings of Machine Learning Research},
  month = 	 {07},
  publisher =    {PMLR},
  pdf = 	 {http://proceedings.mlr.press/v80/vijayaraghavan18a/vijayaraghavan18a.pdf},
  url = 	 {https://proceedings.mlr.press/v80/vijayaraghavan18a.html},
  abstract = 	 {Gaussian mixture models (GMM) are the most widely used statistical model for the k-means clustering problem and form a popular framework for clustering in machine learning and data analysis. In this paper, we propose a natural robust model for k-means clustering that generalizes the Gaussian mixture model, and that we believe will be useful in identifying robust algorithms. Our first contribution is a polynomial time algorithm that provably recovers the ground-truth up to small classification error w.h.p., assuming certain separation between the components. Perhaps surprisingly, the algorithm we analyze is the popular Lloyd’s algorithm for k-means clustering that is the method-of-choice in practice. Our second result complements the upper bound by giving a nearly matching lower bound on the number of misclassified points incurred by any k-means clustering algorithm on the semi-random model.},
eprint={1711.08841},
      archivePrefix={arXiv},
      primaryClass={cs.DS},
}

@inproceedings{
GC23,
title={Robust Matrix Sensing in the Semi-Random Model},
author={Xing Gao and Yu Cheng},
booktitle={Thirty-seventh Conference on Neural Information Processing Systems},
year={2023},
url={https://openreview.net/forum?id=nSr2epejn2}
}

@inproceedings{
cdm24,
title={A Near-Linear Time Approximation Algorithm for Beyond-Worst-Case Graph Clustering},
author={Vincent Cohen-Addad and Tommaso d'Orsi and Aida Mousavifar},
booktitle={Forty-first International Conference on Machine Learning},
year={2024},
url={https://openreview.net/forum?id=MSFxOMM0gK},
eprint={2406.04857},
      archivePrefix={arXiv},
      primaryClass={cs.DS},
}

\appendix

\section{Deferred proofs}
\label{sec:sbm_deferred_proofs}

In this section, we build the tools we need to prove \Cref{mainthm:nonhomogeneous}, \Cref{mainthm:sqrtngap}, and\Cref{mainthm:inconsistency}. Throughout, it will be helpful to refer to the overview (\Cref{sec:sbm_paper_overview}) for a proof roadmap.

\smallpar{Notation in the proofs.} In all proofs, we adopt the notation used in the technical overview (\Cref{sec:sbm_paper_overview}). Additionally, for a vertex $v \in V$, let $P(v)$ denote the community that $v$ belongs to.

\subsection{Concentration inequalities}
\label{sec:concentration}
Our proof strategy for \Cref{mainthm:nonhomogeneous} and \Cref{mainthm:sqrtngap} is to appeal to \Cref{lemma:strong_consistency}, which guarantees strong consistency provided that $\vd[v] - \lambda_2 > 0$, $\vdin[v] > \vdout[v]$, and $\abs{\ip{\va_v,\utwostar-\vu_2}} \le (\vdin[v]-\vdout[v])/\sqrt{n}$ for all vertices $v$. Proving that the first two conditions hold is relatively easy. In the setting of \Cref{mainthm:nonhomogeneous}, it essentially follows from concentration of the degrees, which is proved in \Cref{sec:concentration_degrees}. In the setting of \Cref{mainthm:sqrtngap}, it follows from the assumptions of the Theorem. Proving that the third condition holds is the main technical challenge. 

For all three parts, our proofs rely on several auxiliary concentration results. We prove these in \Cref{sec:concentration_laplacian} and \Cref{sec:evect_perturbations}.

We extensively use the following variants of Bernstein's Inequality, which can be derived from \cite[Theorem 2.8.4]{vershynin2018}.

\begin{lemma}
\label{lemma:chernoff}
Let $X = \sum_{i=1}^m X_i$, where $X_i = 1$ with probability $p_i$ and $X_i = 0$ with probability $1-p_i$ and all the $X_i$ are independent. Let $\mu = \exv{X}$. Then, for all $t > 0$ we have
\begin{align*}
    \prv{\abs{X - \mu} \ge t} &\le 2\expv{-\min\inbraces{\frac{t^2}{4\sum_{i=1}^m p_i(1-p_i)}, \frac{3t}{4}}}.
\end{align*}
\end{lemma}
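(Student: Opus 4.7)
The plan is to derive this inequality as a direct application of the standard Bernstein's inequality for bounded independent random variables. Specifically, I would first center the Bernoulli indicators by setting $Y_i = X_i - p_i$; then the $Y_i$ are independent, mean zero, bounded by $1$ in absolute value, with variances $\sigma_i^2 = p_i(1-p_i)$ and $\sum_i \sigma_i^2 = \sum_{i=1}^m p_i(1-p_i)$. Applying \cite[Theorem 2.8.4]{vershynin2018} to $X - \mu = \sum_i Y_i$ yields a two-sided tail bound of the form
\begin{equation*}
\prv{\abs{X - \mu} \ge t} \le 2 \expv{-\frac{t^2/2}{\sum_{i=1}^m p_i(1-p_i) + t/3}}.
\end{equation*}

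To convert this into the min-of-two-exponents form stated in the lemma, I would use the elementary bound $a + b \le 2\max\inbraces{a, b}$ for nonnegative reals on the denominator, and then rewrite $1/\max\inbraces{a,b} = \min\inbraces{1/a, 1/b}$. This turns the exponent above into $\min\inbraces{\tfrac{t^2}{4 \sum_i p_i(1-p_i)},\; \tfrac{3t}{4}}$, which is exactly what the statement requires. There is no real obstacle here: the only work is bookkeeping to separate the variance-dominated and deviation-dominated regimes, and the two factor-of-two losses in the constants precisely account for the $4$ in the denominator and the $3/4$ in front of $t$ that appear in the statement.
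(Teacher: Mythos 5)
Your proposal is correct and matches the paper's approach: the paper gives no explicit proof, simply noting the lemma "can be derived from [Theorem 2.8.4, vershynin2018]," and your derivation — centering the Bernoullis, applying Bernstein's inequality with $|Y_i|\le 1$ and $\sigma^2=\sum_i p_i(1-p_i)$, and then using $\sigma^2+t/3\le 2\max\{\sigma^2,t/3\}$ to split into the min-of-two-exponents form — is exactly the intended (and correct) bookkeeping. The constants $4$ and $3/4$ come out precisely as you describe.
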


\noindent
From this, we get the following very useful corollary. 
\begin{lemma}
\label{lemma:chernoff_useful}
Let $X = \sum_{i=1}^m X_i$, where $X_i = 1$ with probability $p_i$ and $X_i = 0$ with probability $1-p_i$ and all the $X_i$ are independent. Let $\mu = \exv{X}$. Then, for all $t > 0$, with probability at least $1-\delta$ we have
\begin{align*}
    \abs{X-\mu} \le \sqrt{4\sum_{i=1}^m p_i(1-p_i)\logv{\nfrac{2}{\delta}}}+\nfrac{4}{3}\logv{\nfrac{2}{\delta}}.
\end{align*}
\end{lemma}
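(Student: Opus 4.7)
The plan is to derive \Cref{lemma:chernoff_useful} as a direct corollary of \Cref{lemma:chernoff} by inverting the tail bound. Write $\sigma^2 \defeq \sum_{i=1}^m p_i(1-p_i)$ for convenience. We want to find a value $t = t(\delta)$ such that the right-hand side of the bound in \Cref{lemma:chernoff} is at most $\delta$. Setting the exponential bound equal to $\delta$ and rearranging, this is equivalent to requiring
\begin{equation*}
\min\inbraces{\frac{t^2}{4\sigma^2}, \, \frac{3t}{4}} \ge \logv{\nfrac{2}{\delta}}.
\end{equation*}

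Therefore, the natural strategy is to ensure that \emph{both} arguments of the minimum exceed $\logv{2/\delta}$. The first term dominates when $t$ is small (the subgaussian regime) and the second dominates when $t$ is large (the subexponential regime). Concretely, the first inequality $t^2/(4\sigma^2) \ge \logv{2/\delta}$ is satisfied as long as $t \ge \sqrt{4\sigma^2 \logv{2/\delta}}$, while the second inequality $3t/4 \ge \logv{2/\delta}$ is satisfied as long as $t \ge (4/3)\logv{2/\delta}$. Hence I would choose
\begin{equation*}
t \;\defeq\; \sqrt{4\sigma^2 \logv{\nfrac{2}{\delta}}} \;+\; \nfrac{4}{3}\,\logv{\nfrac{2}{\delta}},
\end{equation*}
which is at least the maximum of the two thresholds, so both branches of the minimum are lower bounded by $\logv{2/\delta}$ simultaneously.

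Plugging this choice of $t$ into \Cref{lemma:chernoff} gives $\prv{\abs{X-\mu} \ge t} \le 2\expv{-\logv{2/\delta}} = \delta$, and complementing the event yields the lemma. There is no real technical obstacle here; the only subtlety worth noting (and which makes the statement slightly non-tight but clean) is that we bound the max of two quantities by their sum, paying at most a factor of $2$ in $t$ but keeping the expression additive and easy to use downstream. Since this corollary will be invoked repeatedly in the concentration arguments for degrees and for $\ip{\va_v, \utwostar - \vu_2}$, the additive form is far more convenient than the original min-of-two form.
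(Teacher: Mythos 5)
Your proposal is correct and matches the paper's (implicit) derivation: the paper states \Cref{lemma:chernoff_useful} as a direct corollary of \Cref{lemma:chernoff}, obtained by exactly the inversion you describe — choosing $t$ as the sum of the two thresholds so that both branches of the minimum exceed $\logv{\nfrac{2}{\delta}}$. No gaps.
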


\subsection{Concentration of degrees}\label{sec:concentration_degrees}

In this Section, we give concentration statements regarding the number of internal vertices incident to each vertex and the number of crossing edges incident to each vertex. We then compare these against $\lambda_2$.

\begin{lemma}
\label{lemma:dout}
Suppose the crossing edges are sampled identically and independently with probability $q$. Then, for some universal constant $C>0$, with probability at least $ 1-\delta$ we have that
\begin{align*}
    \forall v \in V, \quad \abs{\vdout[v]-\exv{\vdout[v]}} \le C\inparen{\sqrt{nq\logv{\nfrac{n}{\delta}}}+\logv{\nfrac{n}{\delta}}}.
\end{align*}
\end{lemma}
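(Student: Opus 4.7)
The plan is to apply the Bernstein-type bound of Lemma~\ref{lemma:chernoff_useful} pointwise to each vertex and then union bound over $V$. Fix any vertex $v \in V$ and let $P(v)$ denote its community. Then
\[
    \vdout[v] = \sum_{w \notin P(v)} \indicator{(v,w) \in E}
\]
is a sum of exactly $n/2$ independent Bernoulli$(q)$ random variables, by the assumption that crossing edges are sampled independently and identically with probability $q$. In particular, $\sum_w q(1-q) \le nq/2$.

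Applying Lemma~\ref{lemma:chernoff_useful} with failure probability $\delta/n$, we get that with probability at least $1-\delta/n$,
\[
    \abs{\vdout[v] - \exv{\vdout[v]}} \le \sqrt{4 \cdot (nq/2) \cdot \logv{2n/\delta}} + \tfrac{4}{3}\logv{2n/\delta} \le C\inparen{\sqrt{nq\logv{n/\delta}} + \logv{n/\delta}}
\]
for a suitable universal constant $C>0$, after absorbing the constant $2$ inside the logarithm into $C$.

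Finally, union bounding this event over all $n$ vertices yields the stated conclusion with failure probability at most $\delta$. There is no real obstacle here; the only care required is in choosing the per-vertex failure probability so that the union bound is clean, and in collapsing $\log(2n/\delta)$ to $\log(n/\delta)$ into the universal constant $C$.
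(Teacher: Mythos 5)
Your proposal is correct and follows exactly the same route as the paper's proof: apply the Bernstein-type bound of Lemma~\ref{lemma:chernoff_useful} to $\vdout[v]$ (a sum of $n/2$ independent Bernoulli$(q)$ variables) with per-vertex failure probability $\delta/n$, then union bound over all $n$ vertices and absorb the factor of $2$ in the logarithm into the universal constant.
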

\begin{proof}[Proof of \Cref{lemma:dout}]
Choose some $v \in V$. Consider the random variable $\vdout[v]$. Using \Cref{lemma:chernoff_useful}, we have that there is a constant $C>0$ such that with probability at least $ 1-\delta/n$ one has
\begin{align*}
    \abs{\vdout[v]-\exv{\vdout[v]}} \le C\inparen{\sqrt{4nq/2\logv{\nfrac{2n}{\delta}}}+\logv{\nfrac{2n}{\delta}}}.
\end{align*}
Taking a union bound over all $n$ vertices completes the proof of \Cref{lemma:dout}.
\end{proof}
\noindent
Note that \Cref{lemma:dout} above applies in both the settings of \Cref{mainthm:nonhomogeneous} and \Cref{mainthm:sqrtngap}.

\begin{lemma}
\label{lemma:din}
Suppose the internal edges are sampled independently with probabilities $p_{vw}$ such that $p \le p_{vw} \le \pbar$. Then, for some universal constant $C>0$, with probability $\ge 1-\delta$ we have that
\begin{align*}
     \forall v \in V, \quad \abs{\vdin[v]-\exv{\vdin[v]}} \le C\inparen{\sqrt{\sum_{w \in P(v)\setminus \{v\}} p_{vw}(1-p_{vw})\logv{\nfrac{n}{\delta}}}+\logv{\nfrac{n}{\delta}}}.
\end{align*}
\end{lemma}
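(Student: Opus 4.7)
The plan is to mirror the proof of \Cref{lemma:dout} essentially verbatim, with the key observation that \Cref{lemma:chernoff_useful} is already stated for sums of independent (but not necessarily identically distributed) Bernoulli random variables, and so it directly accommodates the nonhomogeneity in the internal-edge probabilities.

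First, I fix an arbitrary vertex $v \in V$ and write
\begin{align*}
    \vdin[v] = \sum_{w \in P(v) \setminus \{v\}} X_w,
\end{align*}
where $X_w = \indicator{(v,w) \in E}$ is an independent Bernoulli random variable with success probability $p_{vw} \in [p, \pbar]$. These indicators are mutually independent by assumption on the edge sampling, and there are exactly $n/2 - 1$ of them. I now apply \Cref{lemma:chernoff_useful} to this sum with failure parameter $\delta/n$: this yields, with probability at least $1 - \delta/n$,
\begin{align*}
    \abs{\vdin[v] - \exv{\vdin[v]}} \le \sqrt{4 \sum_{w \in P(v) \setminus \{v\}} p_{vw}(1-p_{vw}) \logv{\nfrac{2n}{\delta}}} + \nfrac{4}{3} \logv{\nfrac{2n}{\delta}}.
\end{align*}
Absorbing the constants $2$ (inside the logarithm) and $4, 4/3$ (the multiplicative constants) into a single universal constant $C > 0$, this matches the form asserted in the statement for this particular $v$.

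Finally, I take a union bound over all $n$ vertices $v \in V$: the probability that the desired inequality fails for at least one $v$ is at most $n \cdot (\delta/n) = \delta$. Hence, with probability at least $1 - \delta$, the bound holds simultaneously for every $v \in V$, which is exactly the conclusion of the lemma.

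There is no real obstacle here; the proof is a direct application of \Cref{lemma:chernoff_useful} followed by a union bound, and the only point worth highlighting is that the variance proxy $\sum_{w \in P(v) \setminus \{v\}} p_{vw}(1 - p_{vw})$ in the statement is precisely the sum of the per-edge variances that \Cref{lemma:chernoff_useful} produces, so no further manipulation (e.g., bounding this by $n\pbar/2$) is needed at this stage. Any such further upper bound can be applied later as convenient in the applications of this lemma.
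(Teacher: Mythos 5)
Your proposal is correct and matches the paper's own proof essentially verbatim: both apply \Cref{lemma:chernoff_useful} to the sum of independent Bernoulli indicators defining $\vdin[v]$ with failure parameter $\delta/n$, then union bound over all $n$ vertices. No further comment is needed.
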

\begin{proof}[Proof of \Cref{lemma:din}]
As before, choose some $v \in V$ and consider the random variable $\vdin[v]$. By \Cref{lemma:chernoff_useful}, we have that there is a constant $C>0$ such that with probability at least $ 1-\delta/n$ one has
\begin{align*}
    \abs{\vdin[v]-\exv{\vdin[v]}} \le C\inparen{\sqrt{4\sum_{w \in P(v) \setminus \{v\}} p_{vw}(1-p_{vw})\logv{\nfrac{2n}{\delta}}}+\logv{\nfrac{2n}{\delta}}}.
\end{align*}
Taking a union bound over all $n$ vertices completes the proof of \Cref{lemma:din}.
\end{proof}
Combining the above two lemmas, we obtain a lower-bound on $ \vdin[v]-\vdout[v]$. In particular, the following lemma implies that in the setting of \Cref{mainthm:nonhomogeneous}, we have $\vdin[v]>\vdout[v]$. This will be required for applying \Cref{lemma:strong_consistency}. 
\begin{lemma}
\label{lemma:degree_diff_easy}
There exists a universal constant $C>0$ such that with probability $\ge 1-\delta$, in the same settings as \Cref{lemma:dout} and \Cref{lemma:din} and assuming the gap condition in \Cref{mainthm:nonhomogeneous}, if $p \ge q$, then for all $v \in V$ we have
\begin{align*}
    \vdin[v]-\vdout[v] \ge \frac{n(p-q)}{2} - C\inparen{\sqrt{np\logv{\nfrac{n}{\delta}}}+\logv{\nfrac{n}{\delta}}}.
\end{align*}
\end{lemma}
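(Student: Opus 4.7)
The plan is to combine the two concentration statements \Cref{lemma:dout} and \Cref{lemma:din} with a direct calculation of the expected difference $\exv{\vdin[v] - \vdout[v]}$, and then take a union bound over the two events. First, I would apply \Cref{lemma:dout} with failure probability $\delta/2$; since that lemma already includes a union bound over all $n$ vertices, this gives, simultaneously for every $v \in V$, the upper tail bound
\[
    \vdout[v] \le \exv{\vdout[v]} + C_1\bigl(\sqrt{nq \log(n/\delta)} + \log(n/\delta)\bigr).
\]
Analogously, \Cref{lemma:din} applied with failure probability $\delta/2$ yields
\[
    \vdin[v] \ge \exv{\vdin[v]} - C_2\Bigl(\sqrt{\textstyle\sum_{w \in P(v)\setminus\{v\}} p_{vw}(1-p_{vw})\log(n/\delta)} + \log(n/\delta)\Bigr).
\]
Using the uniform bound $p_{vw}(1-p_{vw}) \le \pbar$ and $|P(v)\setminus\{v\}| \le n/2$, the variance factor in the second display is at most $n\pbar/2$; and since $q \le \pbar$, the fluctuation on $\vdout[v]$ is dominated by the same $\sqrt{n\pbar \log(n/\delta)} + \log(n/\delta)$ quantity, matching the form of the error term on the right-hand side of the claimed inequality.

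Next I would compute the expected difference directly. Since $p_{vw} \ge p$ for every internal edge and every crossing edge appears with probability exactly $q$, one has $\exv{\vdin[v]} \ge (n/2 - 1)p$ and $\exv{\vdout[v]} = (n/2)q$, so
\[
    \exv{\vdin[v]} - \exv{\vdout[v]} \ge (n/2 - 1)p - (n/2)q = \frac{n(p-q)}{2} - p.
\]
The stray additive $-p$ is of lower order: since $p \le \pbar \le 1$, it is absorbed into the error by enlarging the overall constant $C$.

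Combining these with a union bound (so the total failure probability is at most $\delta$) and plugging everything in yields the stated inequality with a single universal constant $C$. There is no real conceptual obstacle here — the whole argument is bookkeeping on top of \Cref{lemma:dout} and \Cref{lemma:din} — and neither the gap condition from \Cref{mainthm:nonhomogeneous} nor the assumption $p \ge q$ is actually needed in this step; they are used only downstream to ensure that the resulting lower bound on $\vdin[v] - \vdout[v]$ is eventually positive, so that \Cref{lemma:strong_consistency} can be applied.
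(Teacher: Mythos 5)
There is a genuine gap: your argument establishes the lemma with error term $\sqrt{n\pbar\logv{\nfrac{n}{\delta}}}$, not the claimed $\sqrt{np\logv{\nfrac{n}{\delta}}}$. Bounding the variance proxy by $\sum_{w} p_{vw}(1-p_{vw}) \le n\pbar/2$ is the best one can do pointwise (indeed $p_{vw}(1-p_{vw})$ can be as large as $1/4 \gg p$ when $\pbar$ is a constant and $p = \Theta(\log n / n)$), so your route cannot recover the stated dependence on $p$; in that regime the two error terms differ by a factor of order $\sqrt{n/\log n}$. The paper avoids this by \emph{not} decoupling the mean from the fluctuation: it writes $\vdin[v] \ge \sum_w p_{vw} - C\inparen{\sqrt{\sum_w p_{vw}\logv{\nfrac{n}{\delta}}} + \logv{\nfrac{n}{\delta}}}$ (after dropping the $(1-p_{vw})$ factors) and then observes that $x \mapsto x - c\sqrt{x}$ is increasing for $x \ge c^2/4$, so the whole expression is minimized when $\sum_w p_{vw}$ takes its smallest allowed value $\approx np/2$. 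That is the step that converts $\pbar$ into $p$ in the fluctuation term, and it is also where the gap condition enters (one needs $np \gtrsim \logv{\nfrac{n}{\delta}}$ for the monotonicity to apply on the relevant range), so your closing remark that the gap condition is not needed is an artifact of your weaker bound rather than a feature of the lemma as stated.

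To be fair, your weaker conclusion with $\sqrt{n\pbar\logv{\nfrac{n}{\delta}}}$ would still suffice for every downstream use in the paper, because the gap condition of \Cref{mainthm:nonhomogeneous} is itself stated in terms of $\pbar$; but as a proof of the lemma as written, the argument falls short. The rest of your bookkeeping (splitting $\delta$ between the two events, handling the stray $-p$ from $\abs{P(v)\setminus\{v\}} = n/2 - 1$, and noting $q \le p$ controls the $\vdout$ fluctuation) matches the paper.
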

\begin{proof}[Proof of \Cref{lemma:degree_diff_easy}]
Let $v \in V$. First, we call \Cref{lemma:dout} with a failure probability of $\delta/(2n)$ to conclude that
\begin{align*}
    \vdout[v] \le \frac{nq}{2} + C_{\ref{lemma:dout}}\inparen{\sqrt{\frac{nq}{2}\logv{\nfrac{2n^2}{\delta}}}+\logv{\nfrac{2n^2}{\delta}}}.
\end{align*}
Next, we call \Cref{lemma:din} with a failure probability of $\delta/(2n)$ to conclude that
\begin{align*}
    \vdin[v] &\ge \sum_{w \in P(v)\setminus \{v\}} p_{vw} - C_{\ref{lemma:din}}\inparen{\sqrt{\sum_{w \in P(v)\setminus \{v\}} p_{vw}(1-p_{vw})\logv{\nfrac{2n^2}{\delta}}}+\logv{\nfrac{2n^2}{\delta}}} \\
    &\ge \sum_{w \in P(v)\setminus \{v\}} p_{vw} - C_{\ref{lemma:din}}\inparen{\sqrt{\sum_{w \in P(v)\setminus \{v\}} p_{vw}\logv{\nfrac{2n^2}{\delta}}}+\logv{\nfrac{2n^2}{\delta}}} \\
    &\ge \frac{np}{2} - 2C_{\ref{lemma:din}}\inparen{\sqrt{\frac{np}{2}\logv{\nfrac{n^2}{\delta}}}+\logv{\nfrac{2n^2}{\delta}}}.
\end{align*}
where the last line uses the fact that $x - c\sqrt{x}$ is increasing in $x$ whenever $x \ge c^2/4$ and $c>0$. We subtract and conclude the proof of \Cref{lemma:degree_diff_easy} by a union bound.
\end{proof}
The following lemma will be useful for lower-bounding $\vd[v] - \lambda_2$ in \Cref{mainthm:nonhomogeneous}. 
\begin{lemma}
\label{lemma:degree_vs_lambda}
Suppose every crossing edge appears independently with probability $q$. Then, with probability $\ge 1-\delta$, for all $v \in V$ we have
\begin{align*}
    \lambda_2 \le 2\vdout[v] + C\inparen{\sqrt{nq\logv{\nfrac{n}{\delta}}} + \logv{\nfrac{n}{\delta}}}.
\end{align*}
\end{lemma}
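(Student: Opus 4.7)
I would prove this by combining the variational characterization of $\lambda_2$, applied with the idealized partition indicator $\utwostar$ as a test vector, with the two-sided concentration of $\vdout[v]$ already supplied by \Cref{lemma:dout}. The key observation is that both the upper bound on $\lambda_2$ and the lower bound on $2\vdout[v]$ will line up around the same value $nq$, so their difference is controlled by the deviation term alone.

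First, I would recall that $\mL$ is positive semidefinite with kernel spanned by $\onev$, and that $\utwostar=\frac{1}{\sqrt{n}}(\onev_{n/2}\oplus-\onev_{n/2})$ is a unit vector orthogonal to $\onev$. Hence by Courant--Fischer, $\lambda_2 \le \utwostar^\top \mL \utwostar$. Using the edge-sum form $\mL = \sum_{(v,w)\in E}\ve_{vw}\ve_{vw}^\top$ and observing that $\ip{\ve_{vw},\utwostar}=0$ whenever $(v,w)$ is an internal edge while $\ip{\ve_{vw},\utwostar}^2=4/n$ whenever $(v,w)$ is a crossing edge, I would obtain
$$\lambda_2 \;\le\; \utwostar^\top\mL\utwostar \;=\; \frac{4}{n}\abs{E_{\text{crossing}}} \;=\; \frac{2}{n}\sum_{w\in V}\vdout[w].$$
Note that the internal edges -- which could in principle be generated adversarially -- drop out entirely, which is precisely why the resulting bound depends only on $q$.

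Next, I would invoke \Cref{lemma:dout} with failure probability $\delta$ and set $\eta \coloneqq C_{\ref{lemma:dout}}\bigl(\sqrt{nq\logv{n/\delta}}+\logv{n/\delta}\bigr)$. Then with probability at least $1-\delta$, every $w\in V$ simultaneously satisfies $\tfrac{nq}{2}-\eta \le \vdout[w]\le \tfrac{nq}{2}+\eta$, since $\exv{\vdout[w]}=nq/2$. Substituting the upper bound into the variational estimate gives $\lambda_2 \le nq + 2\eta$, while for any fixed $v$ the lower bound gives $2\vdout[v]\ge nq-2\eta$. Subtracting these two inequalities yields $\lambda_2 \le 2\vdout[v]+4\eta$ for every $v$, which is the claimed bound with constant $C = 4C_{\ref{lemma:dout}}$.

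There is no real obstacle here; the only things to watch are that (i) both directions of \Cref{lemma:dout} are needed -- the upper tail is used via $\frac{2}{n}\sum_w \vdout[w] \le 2\max_w\vdout[w]$ to bound the Rayleigh quotient, and the lower tail is used to lower bound each individual $\vdout[v]$ -- and (ii) both uses happen under a single high-probability event thanks to the union bound already inside \Cref{lemma:dout}.
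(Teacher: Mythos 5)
Your proof is correct and takes essentially the same route as the paper: both arguments bound $\lambda_2$ by the Rayleigh quotient of the test vector $\utwostar$ (which equals $\frac{4}{n}\abs{E_\text{crossing}}$, with internal edges dropping out), and then use the two-sided concentration of the crossing degrees around $nq/2$ from \Cref{lemma:dout} to relate this quantity to $2\vdout[v]$. The only difference is bookkeeping: the paper centers both the community-averaged degree and the individual degree at $\exv{\vdout[v]}$ and applies the triangle inequality, whereas you sandwich each between $nq/2\pm\eta$; the resulting constant is the same up to unimportant factors.
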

\begin{proof}[Proof of \Cref{lemma:degree_vs_lambda}]
Observe that with probability at least $ 1-\delta$, $\vdout[w] - \exv{\vdout[v]} \le \sqrt{2nq \log (\nfrac{2n}{\delta})} + 2\log (2n/\delta)$ for all $w \in V$ by \cref{lemma:chernoff_useful}. Then, for every $v \in V$ we have
\begin{align*}
    \frac{2}{n}\sum_{w \in P(v)} \vdout[w] - \vdout[v] &= \inparen{\frac{2}{n}\sum_{w \in P(v)} \vdout[w] - \exv{\vdout[v]}} + \inparen{\exv{\vdout[v]} - \vdout[v]} \\
    &\le \abs{\frac{2}{n}\sum_{w \in P(v)} \vdout[w] - \exv{\vdout[v]}} + \abs{\exv{\vdout[v]} - \vdout[v]} \\
    &\le \sqrt{2nq\logv{\nfrac{2n}{\delta}}} + \sqrt{2nq\logv{\nfrac{2n}{\delta}}} + 4\logv{\nfrac{2n}{\delta}} \\
    &\le 3\sqrt{nq\logv{\nfrac{n}{\delta}}} + 10\logv{\nfrac{n}{\delta}}.
\end{align*}
Next, by the min-max principle, we have
\begin{align*}
    \lambda_2 \le \sum_{(w, w') \in E} \inparen{\utwostar[w]-\utwostar[w']}^2 = \frac{4}{n}\sum_{w \in P(v)} \vdout[w].
\end{align*}
Combining everything, we get
\begin{align*}
    \lambda_2 \le 2\inparen{\frac{2}{n}\sum_{w \in P(v)} \vdout[w]} \le 2\inparen{\vdout[v] + 3\sqrt{nq\logv{\nfrac{n}{\delta}}}+10\logv{\nfrac{n}{\delta}}},
\end{align*}
completing the proof of \Cref{lemma:degree_vs_lambda}.
\end{proof}
We can now lower-bound $\vd[v] - \lambda_2$. Note that the following lower bound implies that $\vd[v] > \lambda_2$, as required by \Cref{lemma:strong_consistency}. 
\begin{lemma}
\label{lemma:dv_lambda2_positive}
In the setting of \Cref{mainthm:nonhomogeneous}, with probability $\ge 1-\delta$, for all $v \in V$, we have $\vd[v] - \lambda_2 > n(p-q)/4$.
\end{lemma}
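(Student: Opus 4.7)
\textbf{Proof plan for \Cref{lemma:dv_lambda2_positive}.} The plan is to simply chain the two main concentration statements we already have, namely \Cref{lemma:degree_vs_lambda} (an upper bound on $\lambda_2$ in terms of $\vdout[v]$) and \Cref{lemma:degree_diff_easy} (a lower bound on $\vdin[v]-\vdout[v]$), and then to dominate the error terms using the gap condition from \Cref{mainthm:nonhomogeneous}. Since $\vd[v]=\vdin[v]+\vdout[v]$, an upper bound $\lambda_2\le 2\vdout[v]+\text{err}_1$ immediately yields
\begin{equation*}
    \vd[v]-\lambda_2 \;\ge\; \vdin[v]+\vdout[v]-2\vdout[v]-\text{err}_1 \;=\; \bigl(\vdin[v]-\vdout[v]\bigr) - \text{err}_1 \, ,
\end{equation*}
so the problem reduces to bounding $\vdin[v]-\vdout[v]$ from below by something that dominates the error.

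Concretely, I would first invoke \Cref{lemma:degree_vs_lambda} with failure probability $\delta/2$ to get, simultaneously for all $v\in V$,
\begin{equation*}
    \lambda_2 \le 2\vdout[v] + C_{\ref{lemma:degree_vs_lambda}}\bigl(\sqrt{nq\log(2n/\delta)}+\log(2n/\delta)\bigr) \, ,
\end{equation*}
and then invoke \Cref{lemma:degree_diff_easy} with failure probability $\delta/2$ to get, simultaneously for all $v\in V$,
\begin{equation*}
    \vdin[v]-\vdout[v] \;\ge\; \frac{n(p-q)}{2} - C_{\ref{lemma:degree_diff_easy}}\bigl(\sqrt{np\log(2n/\delta)}+\log(2n/\delta)\bigr) \, .
\end{equation*}
Combining these via the identity above and using $p\le \pbar$ and $q\le\pbar$ to absorb both error terms into a single $O\bigl(\sqrt{n\pbar\log(n/\delta)}+\log(n/\delta)\bigr)$ quantity, I obtain, with probability at least $1-\delta$,
\begin{equation*}
    \vd[v]-\lambda_2 \;\ge\; \frac{n(p-q)}{2} - C'\bigl(\sqrt{n\pbar\log(n/\delta)}+\log(n/\delta)\bigr) \, .
\end{equation*}

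To finish, I take $\delta$ to be a polynomial in $1/n$ (e.g.\ $\delta = 1/n$ suffices since the whole strong-consistency argument only needs a $1-1/n$ guarantee) so that $\log(n/\delta)=O(\log n)$. The gap condition of \Cref{mainthm:nonhomogeneous} then gives $\sqrt{n\pbar\log n}+\log n \le n(p-q)/C$ for the universal constant $C$ from the hypothesis; choosing $C$ large enough relative to $C'$ makes the error term at most $n(p-q)/4$, producing the claimed bound $\vd[v]-\lambda_2 > n(p-q)/4$.

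There is no real obstacle here: the heavy lifting has already been done by \Cref{lemma:degree_vs_lambda} and \Cref{lemma:degree_diff_easy}, and the only care needed is the bookkeeping to make sure the two error terms — one scaling with $\sqrt{nq}$, the other with $\sqrt{np}$ — are both swallowed by the single $\sqrt{n\pbar}$ slack that the gap condition provides. The one mild subtlety is choosing the failure probabilities so that a union bound over the two events and over all vertices still yields an overall probability $\ge 1-\delta$ with $\log(n/\delta)$ remaining $O(\log n)$; this is routine.
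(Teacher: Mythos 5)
Your proposal is correct and matches the paper's proof essentially line for line: the same decomposition $\vd[v]-\lambda_2 = (\vdin[v]-\vdout[v]) + (2\vdout[v]-\lambda_2)$, the same two lemmas (\Cref{lemma:degree_vs_lambda} and \Cref{lemma:degree_diff_easy}) to control the two pieces, and the same invocation of the gap condition to absorb the error terms. The only cosmetic difference is that you absorb both error terms into $\sqrt{n\pbar\log(n/\delta)}$ while the paper keeps them as $\sqrt{np\log(n/\delta)}$; since $p\le\pbar$, both are dominated by the gap condition.
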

\begin{proof}[Proof of \Cref{lemma:dv_lambda2_positive}]
Recall that the gap condition in \Cref{mainthm:nonhomogeneous} tells us that $p$ and $q$ are such that for a universal constant $C$,
\begin{align*}
    n(p-q) \ge C\inparen{\sqrt{np\logv{\nfrac{n}{\delta}}}+\logv{\nfrac{n}{\delta}}}.
\end{align*}
We have for all $n$ sufficiently large (specifically, $n \ge N(\alpha,\delta)$ for some $N$ that is a function only of the constant $\alpha$, and we take $\delta \ge 1/n^{O(1)}$) that with probability at least $1-\delta$,
\begin{align*}
    \vd[v] - \lambda_2 &= \vdin[v]-\vdout[v] + (2\vdout[v]-\lambda_2) \\
    & \ge \vdin[v]-\vdout[v] - C_{\ref{lemma:degree_vs_lambda}}\inparen{\sqrt{nq\logv{\nfrac{n}{\delta}}}+ \logv{\nfrac{n}{\delta}}} \\
    &\ge \frac{n(p-q)}{2} - \inparen{C_{\ref{lemma:degree_diff_easy}}+C_{\ref{lemma:degree_vs_lambda}}}\inparen{\sqrt{np\logv{\nfrac{n}{\delta}}}+\logv{\nfrac{n}{\delta}}},
\end{align*}
so insisting
\begin{align*}
    \frac{n(p-q)}{4} \ge \inparen{C_{\ref{lemma:degree_diff_easy}}+C_{\ref{lemma:degree_vs_lambda}}}\inparen{\sqrt{np\logv{\nfrac{n}{\delta}}}+\logv{\nfrac{n}{\delta}}}+1
\end{align*}
gives the condition required to complete the proof of \Cref{lemma:dv_lambda2_positive}.
\end{proof}
The following technical lemma will be useful for upper-bounding $\maxnorm{\vu_2}$ in \Cref{lemma:utwo_inf_norm_bootstrap}. 
\begin{lemma}
\label{lemma:npbar_vs_denom}
In the setting of \Cref{mainthm:nonhomogeneous}, there exists a universal constant $C$ such that with probability $\ge 1-\delta$, for all $v \in V$ we have
\begin{align*}
    \frac{n\pbar+\logv{\nfrac{n}{\delta}}}{\vd[v]-\lambda_2} \le 4\alpha+C.
\end{align*}
\end{lemma}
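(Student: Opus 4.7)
The plan is to first invoke \Cref{lemma:dv_lambda2_positive} to obtain a clean lower bound $\vd[v]-\lambda_2 > n(p-q)/4$ holding uniformly over all $v \in V$ with probability at least $1-\delta$, on the same event on which \Cref{lemma:dv_lambda2_positive} is proved. Once this is in hand, the claim reduces to a purely deterministic inequality that splits the numerator into two pieces and bounds each separately.

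For the first piece, using $p-q > 0$ together with the definition $\alpha = \pbar/(p-q)$,
\begin{equation*}
    \frac{n\pbar}{\vd[v]-\lambda_2} \le \frac{n\pbar}{n(p-q)/4} = 4\alpha \, .
\end{equation*}
For the second piece, recall the gap condition assumed in \Cref{mainthm:nonhomogeneous}, namely $n(p-q) \ge C(\sqrt{n\pbar\log n}+\log n)$, which in particular implies $n(p-q) \ge C\log n$ for the same universal constant $C$. Since throughout the paper we work with $\delta \ge 1/n^{O(1)}$ (so that $\log(n/\delta) = O(\log n)$), this yields
\begin{equation*}
    \frac{\log(n/\delta)}{\vd[v]-\lambda_2} \le \frac{O(\log n)}{n(p-q)/4} \le C' \, ,
\end{equation*}
for some universal constant $C' > 0$ depending only on $C$ and on the polynomial relationship between $\delta^{-1}$ and $n$. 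Summing the two bounds gives the desired inequality with the constant $C \coloneqq C'$.

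There is essentially no obstacle here: the only subtlety is bookkeeping the precise regime of $\delta$ for which $\log(n/\delta) = O(\log n)$, which is the standing assumption used throughout this section (and is consistent with \Cref{lemma:dv_lambda2_positive}'s own hypothesis). The step that does the real work, namely producing the lower bound $\vd[v]-\lambda_2 \ge n(p-q)/4$, has already been established; this lemma is merely a convenient repackaging of that bound for use in subsequent entrywise eigenvector arguments, where ratios of the form appearing on the left-hand side arise from applying \Cref{lemma:strong_consistency}-style bounds term by term.
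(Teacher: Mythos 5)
Your proposal is correct and matches the paper's proof essentially verbatim: the paper also invokes \Cref{lemma:dv_lambda2_positive} to get $\vd[v]-\lambda_2 \ge n(p-q)/4$, splits the numerator into $n\pbar$ and $\log(n/\delta)$, bounds the first term by $4\alpha$ via the definition of $\alpha$, and absorbs the second into a universal constant using the gap condition together with $\delta \ge 1/n^{O(1)}$. No gaps.
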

\begin{proof}[Proof of \Cref{lemma:npbar_vs_denom}]

By \Cref{lemma:dv_lambda2_positive}, we have with probability $\ge 1-\delta$ that for all $v \in V$,
\begin{align*}
    \vd[v]-\lambda_2 \ge \frac{n(p-q)}{4}.
\end{align*}
 This gives
\begin{align*}
    \frac{n\pbar+\logv{\nfrac{n}{\delta}}}{\vd[v]-\lambda_2} \le \frac{4(n\pbar+\logv{\nfrac{n}{\delta}})}{n(p-q)} = \frac{4\pbar}{p-q} + \frac{4\logv{\nfrac{n}{\delta}}}{n(p-q)} \le 4\alpha + C.
\end{align*}
This completes the proof of \Cref{lemma:npbar_vs_denom}.
\end{proof}

\subsection{Concentration of Laplacian and eigenvalue perturbations}\label{sec:concentration_laplacian}

For the matrix concentration lemmas, we need a result due to \citet{llv16}. We reproduce it below.

\begin{lemma}[{\cite[Theorem 2.1]{llv16}}]
\label{lemma:llv_concentration}
Consider a random graph from the model $G(n,\inbraces{p_{ij}})$. Let $d = \max_{ij} np_{ij}$. For any $r \ge 1$, the following holds with probability at least $1-n^{-r}$ for a universal constant $C$. Consider any subset consisting of $10n/d$ vertices, and reduce the weights of the edges incident to those vertices in an arbitrary way. Let $d'$ be the maximal degree of the resulting graph. Then, the adjacency matrix $\mA'$ of the new weighted graph satisfies
\begin{align*}
    \opnorm{\mA'-\exv{\mA}} \le Cr^{3/2}\inparen{\sqrt{d} + \sqrt{d'}}.
\end{align*}
Moreover, the same holds for $d'$ being the maximal $\ell_2$ norm of the rows of $\mA'$.
\end{lemma}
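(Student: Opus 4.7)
My plan is to bound $\opnorm{\mA' - \exv{\mA}}$ via a Kahn--Szemer\'edi-style decomposition, the standard tool for obtaining optimal operator-norm concentration of sparse random matrices. Writing $\opnorm{\mA' - \exv{\mA}} = \sup_{\norm{x}_2, \norm{y}_2 \le 1} x^\top (\mA' - \exv{\mA}) y$ and passing to a $1/2$-net of the unit sphere, it suffices to control the bilinear form $x^\top(\mA' - \exv{\mA})y$ for a fixed pair $(x, y)$ with failure probability $\exp(-\Omega(nr))$ and then union-bound over the $\exp(O(n))$-sized net.

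For a fixed pair of unit vectors $(x, y)$, the plan is to split the sum $\sum_{i,j} (\mA' - \exv{\mA})_{ij} x_i y_j$ into a \emph{light} part, comprising indices with $|x_i y_j| \le \sqrt{d}/n$, and a \emph{heavy} part comprising the rest. The light part consists of bounded, mean-zero, independent contributions; Bernstein's inequality, applied at each dyadic scale of $|x_i|$ and $|y_j|$, yields a bound on the order of $r^{3/2}\sqrt{d}$ with probability $1 - \exp(-\Omega(nr))$. The heavy part is controlled by a combinatorial discrepancy argument: for every pair of vertex subsets $I, J$, the edge count $e(I, J)$ in the regularized graph stays close to its expectation uniformly in $I$ and $J$. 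Partitioning $x$ and $y$ into dyadic level sets and summing the discrepancy bound over all pairs of level sets contributes $O(\sqrt{d} + \sqrt{d'})$ to the heavy-part total.

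The main difficulty is the heavy part, because the raw adjacency matrix $\mA$ of a sparse random graph has operator norm driven by a handful of atypically high-degree vertices; for $d$ near the connectivity threshold, a naive bound on this contribution would scale with the \emph{largest} degree of $G$, which can be $\omega(\sqrt{d})$. This is precisely what the regularization hypothesis addresses: allowing the weights on $10n/d$ rows to be reduced arbitrarily caps the contribution of the pathological vertices, so the maximum row $\ell_2$-norm of $\mA'$ becomes $\sqrt{d'}$. Combinatorially, in $G(n, \{p_{ij}\})$ with $\max_{ij} np_{ij} = d$ there are at most $O(n/d)$ vertices whose degree exceeds a large constant multiple of $d$ with high probability, so the regularization budget of $10n/d$ is indeed sufficient. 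Combining the two bounds yields $\opnorm{\mA' - \exv{\mA}} \le Cr^{3/2}(\sqrt{d} + \sqrt{d'})$, with the $r^{3/2}$ factor arising from tracking high-moment tails across the dyadic scales so that the final union bound succeeds with failure probability $n^{-r}$. The variant with $d'$ equal to the maximum row $\ell_2$-norm follows by repeating the heavy-part argument with $\ell_2$-norm in place of degree, since the discrepancy bound only uses $\ell_2$ norms of rows.
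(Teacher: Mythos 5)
This lemma is not proved in the paper at all: it is quoted verbatim as \cite[Theorem 2.1]{llv16}, so there is no in-paper argument to compare yours against. Judged against the actual proof in that reference, your sketch follows essentially the same route: an $\varepsilon$-net reduction, the Kahn--Szemer\'edi/Feige--Ofek split of $x^\top(\mA'-\exv{\mA})y$ into light and heavy couples at threshold $\sqrt{d}/n$, Bernstein for the light part, and a uniform discrepancy bound on edge counts $e(I,J)$ for the heavy part, with the regularization serving exactly to tame the atypically high-degree vertices. Your closing remark that $G(n,\{p_{ij}\})$ has $O(n/d)$ vertices of degree $\gg d$ is true but is about \emph{applying} the theorem (showing the budget of $10n/d$ removals suffices to make $d'=O(d)$), not about proving it: the statement allows an arbitrary choice of the $10n/d$ vertices and simply pays for whatever $d'$ results.

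The one place your sketch elides a genuine difficulty is uniformity over the regularization. The theorem asserts a single high-probability event on which the bound holds for \emph{every} admissible subset and \emph{every} weight reduction; the reductions form a continuum, so you cannot union-bound over them alongside the net. The resolution in \cite{llv16} is to make the probabilistic input a property of the original graph alone --- the discrepancy property, which is monotone under decreasing edge weights, hence inherited by every $\mA'$ --- and then prove a deterministic implication: discrepancy of $G$ plus the row-sum/row-$\ell_2$ bound $d'$ of the regularized graph forces $\opnorm{\mA'-\exv{\mA}}\lesssim r^{3/2}(\sqrt{d}+\sqrt{d'})$. Relatedly, the light-couple Bernstein bound is computed for $\mA-\exv{\mA}$, and the discrepancy $\mA'-\mA$ (supported on rows and columns of the removed set) must be absorbed separately using $d'$. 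If you fold these two points into your write-up, the sketch becomes a faithful outline of the cited proof.
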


\begin{lemma}
\label{lemma:laplacian_concentration}
Let $\mL$ be a Laplacian sampled from the nonhomogeneous Erd\H{o}s-R\'enyi model where each edge $(i,j)$ is present independently with probability $p_{ij}$. Then, there exists a universal constant $C$ such that for all $n$ sufficiently large, with probability $\ge 1-\delta$ for any $\delta \ge n^{-10}$,
\begin{align*}
    \opnorm{\mL-\exv{\mL}} \le C\inparen{\sqrt{n\max_{(i,j)\colon p_{ij}\neq 1} p_{ij}\logv{\nfrac{n}{\delta}}}+\logv{\nfrac{n}{\delta}}}.
\end{align*}
\end{lemma}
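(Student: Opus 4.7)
The plan is to decompose $\mL - \exv{\mL} = (\mD - \exv{\mD}) - (\mA - \exv{\mA})$ and control each piece separately by the triangle inequality. The diagonal part is handled by scalar Bernstein-type concentration plus a union bound, while the off-diagonal part is handled by invoking the Le--Levina--Vershynin bound in \Cref{lemma:llv_concentration}.

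First I would observe that for both pieces, only randomness in the edges with $p_{ij}<1$ matters. Indeed, writing $\mA = \mA^{\mathsf{rand}} + \mA^{\mathsf{det}}$, where $\mA^{\mathsf{det}}$ collects the deterministic edges with $p_{ij}=1$, we have $\mA - \exv{\mA} = \mA^{\mathsf{rand}} - \exv{\mA^{\mathsf{rand}}}$ and similarly for $\mD - \exv{\mD}$. Hence the relevant maximum probability is $p^{\star} \coloneqq \max_{(i,j):\, p_{ij}\neq 1} p_{ij}$.

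For the diagonal part, since $\mD - \exv{\mD}$ is diagonal we have $\opnorm{\mD - \exv{\mD}} = \max_v \abs{\vd[v] - \exv{\vd[v]}}$. Each $\vd[v]$ is a sum of independent Bernoulli random variables, with variance $\sum_j p_{vj}(1-p_{vj}) \le n p^{\star}$ (entries with $p_{vj}=1$ contribute zero). I apply \Cref{lemma:chernoff_useful} at failure probability $\delta/n$ and take a union bound over the $n$ vertices to obtain, with probability at least $1-\delta/2$,
\begin{align*}
    \opnorm{\mD - \exv{\mD}} \le C_1\inparen{\sqrt{n p^{\star}\logv{\nfrac{n}{\delta}}} + \logv{\nfrac{n}{\delta}}}.
\end{align*}

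For the off-diagonal part, I would apply \Cref{lemma:llv_concentration} to the centered matrix $\mA^{\mathsf{rand}} - \exv{\mA^{\mathsf{rand}}}$, viewed as arising from an inhomogeneous Erd\H{o}s--R\'enyi model with edge probabilities $p'_{ij} = p_{ij}\cdot \indicator{p_{ij}<1}$, so that $d = \max_{ij} n p'_{ij} = np^{\star}$. Since the hypothesis $\delta \ge n^{-10}$ lets us take $r$ to be an absolute constant (any $r \ge \log(1/\delta)/\log n$ works, and under our assumption this is at most $10$), the lemma yields, with probability at least $1-\delta/2$,
\begin{align*}
    \opnorm{\mA - \exv{\mA}} \le C_2\sqrt{np^{\star}}.
\end{align*}
Combining the two bounds via the triangle inequality and a union bound gives the stated estimate with $C = C_1 + C_2$. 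The only subtlety is the bookkeeping around deterministic edges and the translation of $\delta$ into the parameter $r$ of \Cref{lemma:llv_concentration}; beyond that, the proof is a direct application of the quoted concentration tools.
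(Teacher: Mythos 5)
Your proposal follows essentially the same route as the paper: reduce to the edges with $p_{ij}<1$, split $\mL-\exv{\mL}$ into the diagonal and adjacency parts, control the diagonal via Bernstein plus a union bound over vertices, and control $\opnorm{\mA-\exv{\mA}}$ via \Cref{lemma:llv_concentration}. The one bookkeeping point you gloss over is that \Cref{lemma:llv_concentration} bounds $\opnorm{\mA'-\exv{\mA}}$ by $Cr^{3/2}(\sqrt{d}+\sqrt{d'})$ where $d'$ is the \emph{realized} maximum degree, a random quantity; your intermediate claim $\opnorm{\mA-\exv{\mA}}\le C_2\sqrt{np^{\star}}$ therefore needs an extra degree-concentration step (as the paper does via \Cref{lemma:chernoff_useful}), which yields $\sqrt{d'}\lesssim\sqrt{np^{\star}}+\sqrt{\logv{\nfrac{n}{\delta}}}$ rather than $\sqrt{np^{\star}}$ alone. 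Since the additional term is absorbed by the $\logv{\nfrac{n}{\delta}}$ summand in the final bound, this does not affect the conclusion.
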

\begin{proof}[Proof of \Cref{lemma:laplacian_concentration}]
Without loss of generality, for all $p_{ij}$ that are $1$, reset their probabilities to $0$. To see that this is valid, let $\mL'$ be a Laplacian sampled from this modified distribution and notice that $\mL'-\exv{\mL'}=\mL-\exv{\mL}$.

By \Cref{lemma:llv_concentration} and \cref{lemma:chernoff_useful}, we have with probability $\ge 1-\delta/2$ that
\begin{align*}
    \opnorm{\mA-\exv{\mA}} & \le 200C_{\ref{lemma:llv_concentration}} \sqrt{2n\max_{ij} p_{ij} + C_{\ref{lemma:chernoff_useful}}\inparen{\sqrt{n\max_{ij} p_{ij}\log (\nfrac{8n}{\delta})}+\log (\nfrac{8n}{\delta})}} \\
    & \le 400 C_{\ref{lemma:llv_concentration}} C_{\ref{lemma:chernoff_useful}} \sqrt{n\max_{ij} p_{ij} +\log (\nfrac{8n}{\delta})} \\
    & \le 400 C_{\ref{lemma:llv_concentration}} C_{\ref{lemma:chernoff_useful}} \inparen{\sqrt{n\max_{ij} p_{ij}\log (\nfrac{8n}{\delta})} +\log (\nfrac{8n}{\delta})}\, 
\end{align*}
and by \Cref{lemma:dout} and \Cref{lemma:din}, we have with probability $1-\delta/2$ that
\begin{align*}
    \opnorm{\mD-\exv{\mD}} &\le \max_{v \in V} \abs{\vdout[v]-\exv{\vdout[v]}} + \max_{v \in V} \abs{\vdin[v]-\exv{\vdin[v]}} \\
    &\le 2\max\inbraces{C_{\ref{lemma:dout}},C_{\ref{lemma:din}}}\inparen{\sqrt{n\max_{ij} p_{ij}\logv{\nfrac{2n}{\delta}}}+\logv{\nfrac{2n}{\delta}}}
\end{align*}
Now, observe that with probability $\ge 1-\delta$ (following from a union bound),
\begin{align*}
     \opnorm{\mL-\exv{\mL}} &= \opnorm{\mD-\exv{\mD} - (\mA - \exv{\mA})} \le \opnorm{\mD-\exv{\mD}} + \opnorm{\mA-\exv{\mA}} \\
     &\le 800 C_{\ref{lemma:llv_concentration}} C_{\ref{lemma:chernoff_useful}}\max\inbraces{C_{\ref{lemma:dout}},C_{\ref{lemma:din}}}\inparen{\sqrt{n\max_{ij} p_{ij}\logv{\nfrac{8n}{\delta}}}+\logv{\nfrac{8n}{\delta}}} ,
\end{align*}
completing the proof of \Cref{lemma:laplacian_concentration}.
\end{proof}
By applying the above lemma, we can show that there is a gap between $\lambda_3$ and $\lambda_2^{\star}$, which will allow us to apply Davis-Kahan style bounds. More concretely, \Cref{lemma:lambda3_lambda2star} and \Cref{lemma:eutwostar_norm}, together with \Cref{lemma:utwo_to_utwostar}, show that 
$\norm{u_2 - u_2^{\star}}_2$ is small. This will be useful for proving that in the context for \Cref{mainthm:nonhomogeneous}, the condition $\abs{\ip{\va_v,\utwostar-\vu_2}} \le (\vdin[v]-\vdout[v])/\sqrt{n}$ in \Cref{lemma:strong_consistency} is satisfied. 
\begin{lemma}
\label{lemma:lambda3_lambda2star}
In the setting of \Cref{mainthm:nonhomogeneous}, there exists a universal constant $C$ such that the following holds.

Let $p$ and $q$ be such that we have
\begin{align*}
    n(p-q) \ge C\inparen{\sqrt{n\pbar\logv{\nfrac{n}{\delta}}}+\log(\nfrac{n}{\delta})}.
\end{align*}
Then, for any $\delta \ge n^{-10}$, with probability $\ge 1-\delta$, we have $\lambda_3 - \lambda_2^{\star} \ge n(p-q)/4$.
\end{lemma}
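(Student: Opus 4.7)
The plan is to first pin down the spectrum of $\Lstar$ precisely enough to obtain a lower bound on $\lambda_3^{\star} - \lambda_2^{\star}$, and then pass to $\lambda_3$ via Weyl's inequality together with the Laplacian concentration bound in \Cref{lemma:laplacian_concentration}.

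First, decompose $\Lstar = \Lstar_C + \Lstar_I$, where $\Lstar_C$ is the expected Laplacian of the crossing subgraph (a complete bipartite graph on $P_1,P_2$ with edge weight $q$) and $\Lstar_I$ is the expected Laplacian of the internal subgraph (the disjoint union of the expected internal graphs on $P_1$ and $P_2$). The matrix $\Lstar_I$ is block diagonal, so $\onev_{P_1}$ and $\onev_{P_2}$ lie in its kernel, and hence $\onev$ and $\utwostar$ both lie in $\ker(\Lstar_I)$. Since every internal edge probability $p_{vw} \ge p$, a standard PSD comparison gives $\Lstar_I \succeq p \cdot \mL_{K_{n/2}} \oplus p \cdot \mL_{K_{n/2}}$, and since the algebraic connectivity of $K_{n/2}$ equals $n/2$, we conclude that $\Lstar_I$ has eigenvalue $0$ with multiplicity exactly $2$ (on $\mathrm{span}(\onev, \utwostar)$), and all other eigenvalues are at least $np/2$.

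Next, $\Lstar_C$ is the Laplacian of a weighted $K_{n/2,n/2}$, whose spectrum is well known to consist of $0$ (on $\onev$), $nq$ (on $\utwostar$), and $nq/2$ with multiplicity $n-2$ (on $\mathrm{span}(\onev,\utwostar)^\perp$). Combining the two decompositions: on $\mathrm{span}(\onev)$ and $\mathrm{span}(\utwostar)$, $\Lstar_I$ vanishes, giving eigenvalues $0$ and $nq$ respectively; and on $\mathrm{span}(\onev,\utwostar)^\perp$, the matrix $\Lstar$ equals $(nq/2)\mI + \Lstar_I$ restricted to that subspace, whose smallest eigenvalue is at least $nq/2 + np/2 = n(p+q)/2$. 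Since $p > q$, we have $nq < n(p+q)/2$, so the sorted eigenvalues satisfy $\lambda_1^{\star} = 0$, $\lambda_2^{\star} = nq$, and $\lambda_3^{\star} \ge n(p+q)/2$. Therefore $\lambda_3^{\star} - \lambda_2^{\star} \ge n(p-q)/2$.

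Finally, Weyl's inequality yields $\lambda_3 \ge \lambda_3^{\star} - \opnorm{\mL - \Lstar}$, and \Cref{lemma:laplacian_concentration} (applied with failure probability $\delta$) yields that with probability at least $1 - \delta$,
\begin{equation*}
    \opnorm{\mL - \Lstar} \le C_{\ref{lemma:laplacian_concentration}}\inparen{\sqrt{n\pbar \log(n/\delta)} + \log(n/\delta)}.
\end{equation*}
Combining this with the eigenvalue gap of $\Lstar$ and the hypothesis $n(p-q) \ge C(\sqrt{n\pbar\log(n/\delta)} + \log(n/\delta))$ for $C \ge 4 C_{\ref{lemma:laplacian_concentration}}$ gives
\begin{equation*}
    \lambda_3 - \lambda_2^{\star} \ge \frac{n(p-q)}{2} - C_{\ref{lemma:laplacian_concentration}}\inparen{\sqrt{n\pbar\log(n/\delta)} + \log(n/\delta)} \ge \frac{n(p-q)}{4},
\end{equation*}
as claimed. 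The only mildly delicate step is the eigenvalue count for $\Lstar_I$ on $\mathrm{span}(\onev,\utwostar)^\perp$, but since nonhomogeneity only increases edge weights above $p$, the PSD comparison with $p \cdot \mL_{K_{n/2}}$ goes through without issue; thus the main work is really just the spectral decomposition bookkeeping.
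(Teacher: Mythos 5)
Your proposal is correct and follows essentially the same route as the paper: the paper's proof is exactly Weyl's inequality plus the operator-norm concentration of $\mL-\Lstar$ from \Cref{lemma:laplacian_concentration}, combined with the fact that $\lambda_2^{\star}=nq$ and $\lambda_3^{\star}\ge n(p+q)/2$ (which the paper establishes in \Cref{lemma:u2expect} via the PSD comparison $\Lstar \succeq \Lstar_{\mathsf{SSBM}(n,p,q)}$, the same domination idea underlying your crossing-plus-internal decomposition). Your inline derivation of the spectrum of $\Lstar$ is just a more explicit version of that step and is sound.
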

\begin{proof}[Proof of \Cref{lemma:lambda3_lambda2star}]
By Weyl's inequality and \Cref{lemma:laplacian_concentration}, we have with probability $\ge1-\delta$ that
\begin{align*}
    \lambda_3-\lambda_2^{\star} \ge \lambda_3^{\star}-\lambda_2^{\star} - \opnorm{\mL-\Lstar} \ge \frac{n(p-q)}{2} - C_{\ref{lemma:laplacian_concentration}}\inparen{\sqrt{n\pbar\logv{\nfrac{n}{\delta}}}+\log(\nfrac{n}{\delta})}.
\end{align*}
Let $C \ge 4C_{\ref{lemma:laplacian_concentration}}$. Then,
\begin{align*}
    \frac{n(p-q)}{4} \ge C_{\ref{lemma:laplacian_concentration}}\inparen{\sqrt{n\pbar\logv{\nfrac{n}{\delta}}}+\log(\nfrac{n}{\delta})}.
\end{align*}
Subtracting completes the proof of \Cref{lemma:lambda3_lambda2star}.
\end{proof}
Next, we bound $\norm{\mE\utwostar}_2$, which we will need in order to apply our Davis-Kahan style bound in \Cref{lemma:utwo_to_utwostar}. We remark that \Cref{lemma:eutwostar_norm} below holds both in the setting of \Cref{mainthm:nonhomogeneous} and of \Cref{mainthm:sqrtngap}. 
\begin{lemma}
\label{lemma:eutwostar_norm}
Suppose each crossing edge in our graph appears independently with probability $q$. There exists a universal constant $C$ such that for all $n$ sufficiently large, with probability $\ge 1-\delta$, we have
\begin{align*}
    \norm{\mE\utwostar}_2 \le C\inparen{\frac{\logv{\nfrac{1}{\delta}}}{\log n}}^{3/2}\inparen{\sqrt{nq} + (nq\logv{\nfrac{n}{\delta}})^{1/4} + \sqrt{\logv{\nfrac{n}{\delta}}}}.
\end{align*}
\end{lemma}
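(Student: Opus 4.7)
The plan is to reduce $\norm{\mE\utwostar}_2$ to the spectral norm of the centered bipartite adjacency matrix of the crossing-edge subgraph, and then apply Lemma~\ref{lemma:llv_concentration}. The key algebraic observation is that for every internal edge $(v,w)$, $\ip{\ve_{vw},\utwostar} = \utwostar[v]-\utwostar[w] = 0$, because $v$ and $w$ lie in the same community. Writing $\mL = \sum_{(v,w)\in E}\ve_{vw}\ve_{vw}^{\top}$ and $\Lstar = \sum_{(v,w)} p_{vw}\ve_{vw}\ve_{vw}^{\top}$, every internal-edge contribution to $\mE\utwostar = (\mL-\Lstar)\utwostar$ therefore vanishes identically, regardless of the nonhomogeneity of the $p_{vw}$'s. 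Expanding only the crossing-edge terms, a direct calculation yields $(\mE\utwostar)[v]=\tfrac{2}{\sqrt{n}}\,\mathrm{sign}(\utwostar[v])\cdot(\vdout[v]-\exv{\vdout[v]})$, so that $\norm{\mE\utwostar}_2^2 = \tfrac{4}{n}\sum_{v}(\vdout[v]-\exv{\vdout[v]})^2$.

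Next, I would pass to an operator norm. Let $\mB\in\R^{(n/2)\times(n/2)}$ be the centered bipartite adjacency matrix of the crossing edges, i.e.\ $\mB_{v,w}=\onev\{(v,w)\in E\}-q$ for $v\in P_1,\,w\in P_2$. Then $\vdout[v]-\exv{\vdout[v]}$ equals either a row sum $(\mB\onev_{n/2})[v]$ or a column sum $(\mB^{\top}\onev_{n/2})[w]$ depending on the community, giving
\[
    \sum_{v}(\vdout[v]-\exv{\vdout[v]})^2 = \norm{\mB\onev_{n/2}}_2^2 + \norm{\mB^{\top}\onev_{n/2}}_2^2 \le 2\opnorm{\mB}^2\cdot(n/2) = n\,\opnorm{\mB}^2,
\]
and hence $\norm{\mE\utwostar}_2 \le 2\opnorm{\mB}$.

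The final step is to bound $\opnorm{\mB}$. I would apply Lemma~\ref{lemma:llv_concentration} to the crossing-edge graph with the parameter $r = \log(1/\delta)/\log n$, so that $n^{-r}=\delta$. The expected maximum degree is $d = nq/2$, and by Lemma~\ref{lemma:dout} invoked with failure probability $\delta/2$, the maximum degree of a realized crossing-edge graph satisfies $d'\le nq/2 + C_1\bigl(\sqrt{nq\log(n/\delta)}+\log(n/\delta)\bigr)$ with probability at least $1-\delta/2$. Substituting into the LLV conclusion $\opnorm{\mB}\le C r^{3/2}(\sqrt{d}+\sqrt{d'})$ and applying $\sqrt{a+b}\le\sqrt{a}+\sqrt{b}$ splits $\sqrt{d'}$ into the three terms $\sqrt{nq}$, $(nq\log(n/\delta))^{1/4}$, and $\sqrt{\log(n/\delta)}$, each multiplied by the $(\log(1/\delta)/\log n)^{3/2}$ prefactor. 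A union bound over the two events, combined with the operator-norm reduction above, produces the claimed inequality.

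The main obstacle is essentially bookkeeping: unlike in Lemma~\ref{lemma:laplacian_concentration}, where the assumption $\delta \ge n^{-10}$ lets one absorb $r^{3/2}$ into an absolute constant, here we must track $r$ explicitly through LLV and identify it with $\log(1/\delta)/\log n$ in order to surface the $(\log(1/\delta)/\log n)^{3/2}$ factor appearing in the target bound. All other steps — the internal-edge cancellation, the passage to a bipartite matrix, and the invocation of Lemma~\ref{lemma:dout} for a high-probability upper bound on $d'$ — are routine once the cancellation is set up correctly.
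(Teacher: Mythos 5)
Your proposal is correct and follows essentially the same route as the paper: the internal-edge cancellation reduces $\norm{\mE\utwostar}_2$ to $2\norm{\vdout-\exv{\vdout}}_2/\sqrt{n}$, which is then bounded by the operator norm of the centered crossing-edge adjacency matrix via Lemma~\ref{lemma:llv_concentration} with $r=\log(1/\delta)/\log n$ and Lemma~\ref{lemma:dout} controlling $d'$. The only cosmetic difference is that you work with the $n/2\times n/2$ off-diagonal block while the paper uses the full $n\times n$ crossing-edge matrix; these have the same operator norm.
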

\begin{proof}[Proof of \Cref{lemma:eutwostar_norm}]
Observe that $\abs{\mE\utwostar} = 2\abs{\vdout - \exv{\vdout}}/\sqrt{n}$. By \Cref{lemma:dout}, for all $v \in V$, with probability $\ge 1-\delta/2$, we have $\vdout[v] \le nq/2 + C_{\ref{lemma:dout}}\inparen{\sqrt{nq\cdot \logv{2n/\delta}} + \logv{\nfrac{2n}{\delta}}}$.

So, if we let $\mA_{\mathsf{out}}$ and $\mA_{\mathsf{out}}^{\star}$ denote the adjacency matrices consisting only of the crossing edges and the expected value of that, respectively, then invoking \Cref{lemma:llv_concentration}, with probability $\ge 1-\delta$, we have
\begin{align*}
    \norm{\mE\utwostar}_2 &= \frac{2\norm{\vdout - \exv{\vdout}}_2}{\sqrt{n}} = \frac{2\norm{\inparen{\mA_{\mathsf{out}}-\mA_{\mathsf{out}}^{\star}}\onev}_2}{\sqrt{n}} \le 2\opnorm{\mA_{\mathsf{out}}-\mA_{\mathsf{out}}^{\star}} \\
    &\le 2C_{\ref{lemma:llv_concentration}}\inparen{\frac{\logv{\nfrac{2}{\delta}}}{\log n}}^{3/2}\inparen{\sqrt{\frac{nq}{2}} + \sqrt{C_{\ref{lemma:dout}}}\sqrt{nq + \sqrt{nq\logv{\nfrac{2n}{\delta}}}+\logv{\nfrac{2n}{\delta}}}},
\end{align*}
completing the proof of \Cref{lemma:eutwostar_norm}.
\end{proof}
Finally, we apply \Cref{lemma:llv_concentration} in order to bound bound $\norm{\va_v-\astar_v}_2 $. 
\begin{lemma}
\label{lemma:arow_norm}
In the setting of \Cref{mainthm:nonhomogeneous}, with probability $\ge 1-\delta$, we have
\begin{align*}
    \norm{\va_v-\astar_v}_2 \le C\inparen{\frac{\logv{\nfrac{1}{\delta}}}{\log n}}^{3/2}\inparen{\sqrt{n\pbar}+(n\pbar\logv{\nfrac{n}{\delta}})^{1/4}+\sqrt{\logv{\nfrac{n}{\delta}}}}.
\end{align*}
\end{lemma}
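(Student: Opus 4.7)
\textbf{Proof proposal for \Cref{lemma:arow_norm}.} The plan is to reduce the row-norm bound to an operator-norm bound on $\mA - \Astar$ and then invoke essentially the same machinery as in \Cref{lemma:eutwostar_norm}. Concretely, observe that $\va_v - \astar_v = (\mA - \Astar)\ve_v$, so $\norm{\va_v - \astar_v}_2 \le \opnorm{\mA - \Astar} \cdot \norm{\ve_v}_2 = \opnorm{\mA - \Astar}$, and it suffices to establish the claimed bound on $\opnorm{\mA - \Astar}$.

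To bound the operator norm, I would apply \Cref{lemma:llv_concentration} to $\mA$ (viewed as the adjacency matrix of a nonhomogeneous Erd\H{o}s--R\'enyi graph with edge probabilities $p_{vw} \in [p,\pbar]$ internally and $q$ on crossing pairs). Since $q \le \pbar$, the quantity $d = \max_{ij} np_{ij}$ appearing in that lemma is at most $n\pbar$. For the maximum-degree quantity $d'$, I would apply \Cref{lemma:chernoff_useful} to each of the $n$ rows and union bound: with probability $\ge 1 - \delta/2$, every degree satisfies $\vd[v] \le n\pbar + C'\bigl(\sqrt{n\pbar \log(\nfrac{2n}{\delta})} + \log(\nfrac{2n}{\delta})\bigr)$, so $d'$ obeys the same bound. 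Choosing the parameter $r = \logv{\nfrac{2}{\delta}}/\log n$ in \Cref{lemma:llv_concentration} (this produces the $\bigl(\logv{\nfrac{1}{\delta}}/\log n\bigr)^{3/2}$ factor in the statement) and plugging in these estimates for $\sqrt{d}$ and $\sqrt{d'}$ yields, with probability $\ge 1-\delta$,
\[
\opnorm{\mA - \Astar} \le C r^{3/2}\bigl(\sqrt{n\pbar} + \sqrt{d'}\bigr) \le C'' \Bigl(\tfrac{\logv{\nfrac{1}{\delta}}}{\log n}\Bigr)^{3/2}\Bigl(\sqrt{n\pbar} + (n\pbar \logv{\nfrac{n}{\delta}})^{1/4} + \sqrt{\logv{\nfrac{n}{\delta}}}\Bigr),
\]
using $\sqrt{a+b} \le \sqrt{a}+\sqrt{b}$ to split the bound on $d'$ into the three summands. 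This is exactly the desired inequality for $\norm{\va_v - \astar_v}_2$.

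There is essentially no new obstacle compared with \Cref{lemma:eutwostar_norm}: the only change is that we apply the LLV concentration inequality to the full adjacency matrix (with max edge probability $\pbar$) rather than to its crossing-edge restriction (with probability $q$), and that we bound $\norm{(\mA-\Astar)\ve_v}_2$ by the operator norm rather than passing through $\utwostar$. The mild bookkeeping issue to track is handling the case $p_{vw}=1$ as in \Cref{lemma:laplacian_concentration} (one resets such probabilities to $0$ without affecting $\mA - \Astar$), and choosing $C$ large enough to absorb the constants from \Cref{lemma:llv_concentration} and \Cref{lemma:chernoff_useful} after the union bound.
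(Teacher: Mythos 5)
Your proposal is correct and matches the paper's proof essentially step for step: both reduce $\norm{\va_v-\astar_v}_2$ to $\opnorm{\mA-\Astar}$, invoke \Cref{lemma:llv_concentration} with the deterministic edges zeroed out, and bound $d'$ via Bernstein-type degree concentration (the paper cites \Cref{lemma:dout} and \Cref{lemma:din}, which are just the per-row applications of \Cref{lemma:chernoff_useful} you describe). No substantive differences.
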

\begin{proof}[Proof of \Cref{lemma:arow_norm}]
We use a similar proof to that of \Cref{lemma:eutwostar_norm}. Indeed, invoke \Cref{lemma:llv_concentration} (observe that we can set $p_{ij}$ for the deterministic internal edges to $0$ as they do not affect $\mA-\exv{\mA}$) and notice that
\begin{align*}
    \norm{\va_v-\astar_v}_2 \le \opnorm{\mA-\mA^{\star}} \le C_{\ref{lemma:llv_concentration}}\inparen{\frac{\logv{\nfrac{2}{\delta}}}{\log n}}^{3/2}\inparen{\sqrt{n\pbar}+(n\pbar\logv{\nfrac{2n}{\delta}})^{1/4}+\sqrt{\logv{\nfrac{2n}{\delta}}}},
\end{align*}
where we used $d' \le n(\pbar+q)/2 + 2\max\inbraces{C_{\ref{lemma:dout}},C_{\ref{lemma:din}}}\inparen{\sqrt{n\pbar\logv{\nfrac{2n}{\delta}}}+\logv{\nfrac{2n}{\delta}}}$ from combining \Cref{lemma:dout} and \Cref{lemma:din}. This completes the proof of \Cref{lemma:arow_norm}.
\end{proof}

\subsection{Eigenvector perturbations}\label{sec:evect_perturbations}

In this Appendix, we give our Euclidean norm eigenvector perturbation bounds.

First, we verify that $\utwostar$ is indeed the second eigenvector of $\Lstar$.

\begin{lemma}
\label{lemma:u2expect}
In the setting of \Cref{mainthm:nonhomogeneous}, we have $\Lstar\utwostar = \lambda_2(\Lstar)\utwostar = nq\utwostar$, where $\Lstar = \exv{\mL}$. 

In the setting of \Cref{mainthm:sqrtngap}, we have $\Lstar\utwostar = \lambda_2(\Lstar)\utwostar = nq\utwostar$, where $\Lstar$ denotes the Laplacian matrix that agrees with $\mL$ on all internal edges and agrees with $\exv{\mL}$ on all crossing edges.
\end{lemma}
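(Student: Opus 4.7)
The plan is to prove both settings in parallel by (i) verifying the eigen-equation $\Lstar \utwostar = nq \utwostar$ via a direct sum-over-edges calculation, and (ii) establishing $\lambda_2(\Lstar) = nq$ using the Courant--Fischer min-max principle combined with a positive semidefinite decomposition of $\Lstar$.

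For (i), write $\Lstar = \Lstar_{\mathrm{int}} + \Lstar_{\mathrm{cr}}$, where $\Lstar_{\mathrm{int}}$ collects the terms $c_{vw} \ve_{vw}\ve_{vw}^\top$ over all internal edges (the weight $c_{vw}$ is $p_{vw}$ in the NSSBM setting and $\indicator{(v,w) \in E}$ in the DCM setting) and $\Lstar_{\mathrm{cr}} = q\sum_{(v,w) \in P_1 \times P_2}\ve_{vw}\ve_{vw}^\top$. Since $\utwostar$ is constant on each of $P_1$ and $P_2$, every internal edge contributes zero via $\ve_{vw}^\top \utwostar = 0$, so $\Lstar_{\mathrm{int}} \utwostar = 0$. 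For a crossing edge with $v \in P_1, w \in P_2$, one has $\ve_{vw}^\top \utwostar = 2/\sqrt{n}$, and a direct vertex-by-vertex summation (each $v \in P_1$ receives a contribution of $q \cdot (n/2) \cdot (2/\sqrt{n}) = nq/\sqrt{n}$ in its own coordinate from its $n/2$ crossing neighbors) yields $\Lstar_{\mathrm{cr}} \utwostar = nq \utwostar$.

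For (ii), the upper bound $\lambda_2(\Lstar) \le nq$ follows from Courant--Fischer applied to the test subspace $\vspan(\onev, \utwostar)$, on which $\Lstar$ has eigenvalues $0$ and $nq$. For the matching lower bound, the key observation is that $\{\onev, \utwostar\}^\perp = \{\onev_{P_1}, \onev_{P_2}\}^\perp$. On this subspace, $\Lstar_{\mathrm{cr}}$ acts as $(nq/2) \mI$, using the well-known spectrum $\{0, n/2, \dots, n/2, n\}$ of $\mL_{K_{n/2,n/2}}$, where $\onev$ and $\utwostar$ span the $0$- and $n$-eigenspaces respectively and the intermediate eigenspace is precisely $\{\onev_{P_1}, \onev_{P_2}\}^\perp$. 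In the NSSBM setting, $p_{vw} \ge p$ gives the PSD bound $\Lstar_{\mathrm{int}} \succeq p(\mL_{K_{P_1}} + \mL_{K_{P_2}})$, which restricted to the same subspace equals $(pn/2) \mI$; combining, $\Lstar \succeq \tfrac{n(p+q)}{2}\mI \succeq nq \mI$ on $\{\onev, \utwostar\}^\perp$ using $p > q$, completing the NSSBM case.

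The main obstacle is the lower bound on $\lambda_2$ in the DCM setting. Here, $\Lstar_{\mathrm{int}}$ is the Laplacian of an adversarially chosen graph, so its algebraic connectivity on $\{\onev_{P_1}, \onev_{P_2}\}^\perp$ could in principle be much smaller than $nq/2$, and the SSBM-style domination argument above is unavailable. I would address this by leveraging the hypotheses of \Cref{mainthm:sqrtngap}: the minimum-degree condition $\din \ge C_1(nq/2 + \sqrt{n})$ should allow one to bound $\mL_{\mathrm{int}}|_{\{\onev_{P_1}, \onev_{P_2}\}^\perp} \succeq (nq/2)\mI$ from below directly, or alternatively, the spectral-gap condition $\lambda_3(\widehat{\mL}) - \lambda_2(\widehat{\mL}) \ge \sqrt{n} + C_2 nq + \dots$ identifies $\utwostar$ as the unique eigenvector (up to scaling) realizing $\lambda_2(\widehat{\mL})$, so that the eigenvalue $nq$ exhibited in (i) must equal $\lambda_2(\widehat{\mL})$.
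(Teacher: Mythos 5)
Your part (i) and your NSSBM lower bound are correct and essentially the paper's argument: the paper also verifies the eigen-equation coordinate-wise, and for NSSBM it uses $p_{vw}\ge p$ to dominate $\Lstar$ by the expected $\mathsf{SSBM}(n,p,q)$ Laplacian, whose restriction to $\{\onev,\utwostar\}^\perp$ is $\tfrac{n(p+q)}{2}\mI \succ nq\,\mI$ --- the same computation you carry out via $\mL_{K_{P_1}}+\mL_{K_{P_2}}$ and the spectrum of $\mL_{K_{n/2,n/2}}$.

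The DCM case is where the proposal has a genuine gap, and your first suggested route would fail outright. A minimum-degree hypothesis cannot yield $\mL_{\mathrm{int}}|_{\{\onev_{P_1},\onev_{P_2}\}^\perp}\succeq (nq/2)\mI$: take $G[P_1]$ to be two disjoint cliques of size $n/4$. The minimum degree is $n/4-1$, comfortably above $C_1(nq/2+\sqrt n)$ for small $q$, yet the signed indicator of the two cliques lies in $\{\onev_{P_1},\onev_{P_2}\}^\perp$ and is annihilated by $\mL_{\mathrm{int}}$, so that vector is an eigenvector of $\Lstar$ with eigenvalue $nq/2<nq$ and the conclusion of the lemma is false for that instance; only the spectral-gap hypothesis excludes it. Your second route is the one the paper takes, but as written it assumes what must be proved: the gap condition does not by itself ``identify $\utwostar$ as the eigenvector realizing $\lambda_2(\Lhat)$.'' The actual chain is $\lambda_3(\Lhat) = \lambda_2(\Lhat) + \bigl(\lambda_3(\Lhat)-\lambda_2(\Lhat)\bigr) \ge 0 + C_2\, nq + \dots > nq$; since $0$ and $nq$ are both eigenvalues of $\Lhat$ (eigenvectors $\onev$ and $\utwostar$), at most two eigenvalues are $\le nq$ and hence $\lambda_2(\Lhat)=nq$. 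One must then still pass from $\Lhat$ to $\Lstar$: the step-3 adversarial edges are internal, so their Laplacian is PSD and annihilates $\utwostar$, giving $\lambda_3(\Lstar)\ge\lambda_3(\Lhat)>nq$ while preserving $\Lstar\utwostar=nq\,\utwostar$, whence $\lambda_2(\Lstar)=nq$. Supply these two steps and the DCM case is complete.
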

\begin{proof}[Proof of \Cref{lemma:u2expect}]
In both cases, one can check that $\utwostar$ is an eigenvector of $\Lstar$ with eigenvalue $nq$: for any $v \in P_2$ (i.e. $\utwostar[v] = -1/\sqrt{n}$ without loss of generality), one has
\begin{equation*}
    \left(\Lstar \utwostar\right)_v =\frac{1}{\sqrt{n}}\left( -(\vdin[v]+nq/2)-\sum_{w \in P_1: \{v,w\} \in E}(-1) + \sum_{w \in P_2} (-q)\right) = -\frac{nq}{\sqrt{n}} = nq \cdot \utwostar[v]\, .
\end{equation*}
By virtue of the above observations, it suffices to argue that $nq < \lambda_3(\Lstar) \le \dots \le \lambda_n(\Lstar)$. 

In the setting of \Cref{mainthm:nonhomogeneous}, we claim $\lambda_3^{\star} \geq \frac{n(p+q)}{2} > nq$. This is because because $p_{vw} \ge p$, which implies that if we consider $\Lstar_1$ to be the expected Laplacian for $\mathsf{SSBM}(n,p,q)$ and $\Lstar_2$ to be the expected Laplacian for $\mathsf{NSSBM}(n,p,\pbar,q)$, then $\Lstar_2 \succeq \Lstar_1$.. 

In the setting of \Cref{mainthm:sqrtngap}, we have $\lambda_3(\Lhat) - \lambda_2(\Lhat) >  nq$, by the theorem assumption. Since $\Lstar$ is obtained from $\Lhat$ by adding the adversarial edges, we have $\lambda_i(\Lstar) \ge \lambda_i(\Lhat)$ for all $i$. In particular, we have $\lambda_3(\Lstar) \ge \lambda_3(\Lhat) = \lambda_2(\Lhat)  + (\lambda_3(\Lhat) - \lambda_2(\Lhat)) > nq$, where the last inequality is using the fact $\lambda_2(\Lhat) \ge 0$. Therefore, $nq$ must be the second eigenvalue of $\Lstar$, completing the proof of \Cref{lemma:u2expect}.
\end{proof}

Next, we prove a general Davis-Kahan style bound. 
\begin{lemma}
\label{lemma:dk_easy}
Let $\mL$ and $\Lhat$ be two weighted Laplacian matrices. Let $\vu_2$ and $\utwohat$ be the second eigenvectors of $\mL$ and $\Lhat$, respectively. Then,
\begin{align*}
    \norm{\vu_2-\utwohat}_2 \le \sqrt{2} \cdot \min\inbraces{\frac{\norm{(\Lhat-\mL)\vu_2}_2}{\abs{\lambda_3(\Lhat)-\lambda_2(\mL)}},\frac{\norm{(\Lhat-\mL)\utwohat}_2}{\abs{\lambda_3(\mL)-\lambda_2(\Lhat)}}}
\end{align*}
\end{lemma}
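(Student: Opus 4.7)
The plan is to adapt the classical Davis--Kahan $\sin\theta$ argument, exploiting the crucial fact that both $\mL$ and $\Lhat$ are Laplacians and therefore share $\onev/\sqrt{n}$ as a first eigenvector with eigenvalue $0$. I will prove the first bound in the minimum; the second follows by swapping the roles of $\mL$ and $\Lhat$. By flipping the sign of $\utwohat$ if necessary (this is the standard implicit interpretation of the distance), I may assume $c := \ip{\vu_2,\utwohat} \ge 0$.

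Let $\{\wh{\vw}_i\}_{i=1}^n$ be an orthonormal eigenbasis of $\Lhat$ with eigenvalues $\lambda_1(\Lhat) \le \cdots \le \lambda_n(\Lhat)$, where $\wh{\vw}_2 = \utwohat$. Expand $\vu_2 = \sum_i c_i \wh{\vw}_i$ with $c_i = \ip{\vu_2,\wh{\vw}_i}$. Because both $\mL$ and $\Lhat$ are Laplacians, $\wh{\vw}_1 = \onev/\sqrt{n}$ is also the first eigenvector of $\mL$, so $\vu_2 \perp \wh{\vw}_1$ gives $c_1 = 0$, and of course $c_2 = c$. Writing $(\Lhat - \mL)\vu_2 = (\Lhat - \lambda_2(\mL)\mI)\vu_2 = \sum_{i \ge 2} c_i(\lambda_i(\Lhat) - \lambda_2(\mL))\wh{\vw}_i$ and taking squared Euclidean norm via orthonormality yields
\begin{equation*}
\norm{(\Lhat - \mL)\vu_2}_2^2 = \sum_{i \ge 2} c_i^2 (\lambda_i(\Lhat) - \lambda_2(\mL))^2 \ge (\lambda_3(\Lhat) - \lambda_2(\mL))^2 \sum_{i \ge 3} c_i^2,
\end{equation*}
where the last step uses that (in the regime where the bound is non-vacuous) $\lambda_3(\Lhat) \ge \lambda_2(\mL)$, so $|\lambda_i(\Lhat) - \lambda_2(\mL)| \ge |\lambda_3(\Lhat) - \lambda_2(\mL)|$ for all $i \ge 3$. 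Since $\sum_i c_i^2 = 1$ and $c_1 = 0$, we get $\sum_{i \ge 3} c_i^2 = 1 - c^2$, giving $1 - c^2 \le \norm{(\Lhat - \mL)\vu_2}_2^2 / (\lambda_3(\Lhat) - \lambda_2(\mL))^2$.

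Finally, converting the cosine bound to a Euclidean distance bound gives the factor of $\sqrt{2}$: since $c \ge 0$, we have $\norm{\vu_2 - \utwohat}_2^2 = 2 - 2c = 2(1-c) \le 2(1-c)(1+c) = 2(1 - c^2)$. Combining with the previous inequality and taking square roots yields $\norm{\vu_2 - \utwohat}_2 \le \sqrt{2}\,\norm{(\Lhat - \mL)\vu_2}_2 / |\lambda_3(\Lhat) - \lambda_2(\mL)|$, which is the first bound. I do not expect any real obstacle here: the whole argument is linear algebra, and the only step that is not automatic from the textbook Davis--Kahan theorem is the observation that $c_1 = 0$, which is what lets us drop the $i=1$ term in the lower bound on $\norm{(\Lhat - \mL)\vu_2}_2^2$. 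Without this observation one would need a gap assumption separating $\lambda_2(\mL)$ from \emph{every} non-target eigenvalue of $\Lhat$, including $\lambda_1(\Lhat) = 0$; here the shared constant eigenvector lets us localize the gap requirement to $\lambda_3(\Lhat)$ vs.\ $\lambda_2(\mL)$.
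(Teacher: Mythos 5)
Your proof is correct and follows essentially the same route as the paper's: expand $(\Lhat-\mL)\vu_2=(\Lhat-\lambda_2(\mL)\mI)\vu_2$ in the eigenbasis of $\Lhat$, drop the $i=1$ and $i=2$ terms to lower-bound the norm by $(\lambda_3(\Lhat)-\lambda_2(\mL))^2(1-c^2)$, and convert the cosine bound into the $\sqrt{2}$-factor distance bound. The only cosmetic differences are that you make the $c_1=0$ observation explicit (the paper silently omits the $i=1$ term) and your inequality $2(1-c)\le 2(1-c^2)$ sidesteps the paper's separate handling of the case $\norm{(\Lhat-\mL)\vu_2}_2\ge\abs{\lambda_3(\Lhat)-\lambda_2(\mL)}$; both arguments rely equally on the (harmless in all applications) implicit assumption $\lambda_3(\Lhat)\ge\lambda_2(\mL)$ for the eigenvalue-ordering step.
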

\begin{proof}[Proof of \Cref{lemma:dk_easy}]
One can get this sort of guarantee from variants of the Davis-Kahan theorem, but it is more illuminating to write an eigenvalue decomposition and observe it from there. Without loss of generality, assume that $\ip{\utwohat,\vu_2}\ge 0$ (indeed, otherwise we can always negate $\utwohat$ if this is not the case). 
Notice that
\begin{align*}
    \norm{(\Lhat-\mL)\vu_2}_2^2 &= \norm{\inparen{\Lhat-\lambda_2(\mL)\mI}\vu_2}_2^2 \\
    &= (\lambda_2(\Lhat)-\lambda_2(\mL))^2\ip{\utwohat, \vu_2}^2 + \sum_{i=3}^n \inparen{\lambda_i(\Lhat)-\lambda_2(\mL)}^2\ip{\widehat{\vu_i},\vu_2}^2 \\
    &\ge \sum_{i=3}^n \inparen{\lambda_3(\Lhat)-\lambda_2(\mL)}^2 \ip{\widehat{\vu_i},\vu_2}^2 = \inparen{\lambda_3(\Lhat)-\lambda_2(\mL)}^2\inparen{1-\ip{\utwohat,\vu_2}^2},
\end{align*}
which rearranges to 
\begin{align*}
    \ip{\utwohat,\vu_2}^2 \ge 1 - \inparen{\frac{\norm{(\Lhat-\mL)\vu_2}_2}{\lambda_3(\Lhat)-\lambda_2(\mL)}}^2.
\end{align*}
Now, if  $\norm{(\Lhat-\mL)\vu_2}_2 \geq |\lambda_3(\Lhat)-\lambda_2(\mL)|$, then the condition  $\norm{\vu_2-\utwohat}_2 \leq \sqrt{2}\cdot  \frac{\norm{(\Lhat-\mL)\vu_2}_2}{\abs{\lambda_3(\Lhat)-\lambda_2(\mL)}}$ is trivially satisfied, since $\norm{\vu_2-\utwohat}_2 \leq \sqrt{2 - 2 \ip{\utwohat,\vu_2}} \leq \sqrt{2}$.
Otherwise, taking the square roots of both sides, we obtain 

\begin{align*}
    \ip{\utwohat,\vu_2} \ge \sqrt{1 - \inparen{\frac{\norm{(\Lhat-\mL)\vu_2}_2}{\lambda_3(\Lhat)-\lambda_2(\mL)}}^2}, 
\end{align*}
which gives
\begin{align*}
    \norm{\utwohat-\vu_2}_2^2 = 2-2\ip{\utwohat,\vu_2} \le 2-2\sqrt{1 - \inparen{\frac{\norm{(\Lhat-\mL)\vu_2}_2}{\lambda_3(\Lhat)-\lambda_2(\mL)}}^2} \le 2 \cdot \inparen{\frac{\norm{(\Lhat-\mL)\vu_2}_2}{\lambda_3(\Lhat)-\lambda_2(\mL)}}^2.
\end{align*}
Taking the square root of both sides and repeating this argument by exchanging the roles of $\mL$ and $\Lhat$ yields the statement of \Cref{lemma:dk_easy}.
\end{proof}
This immediately implies the following upper-bound on  $\norm{\vu_2-\utwostar}_2$. We will use it repeatedly, both in  \cref{mainthm:nonhomogeneous} and \Cref{mainthm:sqrtngap}. 
\begin{lemma}
\label{lemma:utwo_to_utwostar}
We have
\begin{align*}
    \norm{\vu_2-\utwostar}_2 \le \sqrt{2} \cdot \frac{\norm{\mE\utwostar}_2}{\abs{\lambda_3-\lambda_2^{\star}}}.
\end{align*}
\end{lemma}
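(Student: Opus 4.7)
The plan is to obtain this lemma as an immediate corollary of \Cref{lemma:dk_easy} by taking $\Lhat = \Lstar$ and discarding the first term in the minimum. First I would invoke \Cref{lemma:u2expect} to conclude that $\utwostar$ is the second eigenvector of $\Lstar$, so it plays the role of $\utwohat$ in \Cref{lemma:dk_easy}. Applying that lemma to the pair $(\mL, \Lstar)$ and keeping only the second argument of the minimum yields
\begin{equation*}
    \norm{\vu_2-\utwostar}_2 \le \sqrt{2}\cdot \frac{\norm{(\Lstar-\mL)\utwostar}_2}{\abs{\lambda_3(\mL)-\lambda_2(\Lstar)}}.
\end{equation*}

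To match notation, I would note that $\mE \coloneqq \mL - \Lstar$, so $\norm{(\Lstar-\mL)\utwostar}_2 = \norm{-\mE\utwostar}_2 = \norm{\mE\utwostar}_2$, and that $\lambda_3 = \lambda_3(\mL)$ and $\lambda_2^\star = \lambda_2(\Lstar)$ by the conventions fixed in \Cref{sec:sbm_paper_overview}. Substituting gives exactly the claimed bound, and the proof is complete.

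There is no hard step here — the lemma is simply a convenient repackaging of \Cref{lemma:dk_easy} in the specific form in which it will be used in the entrywise analysis (where the perturbation $\mE$ acts on the deterministic vector $\utwostar$ rather than on the random vector $\vu_2$, so that one can subsequently apply the degree-concentration bounds of \Cref{lemma:eutwostar_norm} and the eigenvalue-gap bound of \Cref{lemma:lambda3_lambda2star}). The only minor thing to double-check is the sign and the absolute value: the norm is insensitive to the sign of $\mE$, and in the settings of \Cref{mainthm:nonhomogeneous,mainthm:sqrtngap} the quantities $\lambda_3$ and $\lambda_2^\star$ satisfy $\lambda_3 > \lambda_2^\star$ with high probability (by \Cref{lemma:lambda3_lambda2star}), so the absolute value in the denominator is unambiguous in the regime where the bound is used.
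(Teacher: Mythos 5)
Your proposal is correct and matches the paper's proof exactly: the paper also derives the lemma by instantiating \Cref{lemma:dk_easy} with $\Lhat = \Lstar$ and reading off the second argument of the minimum. Your additional remarks about the sign of $\mE$ and the role of \Cref{lemma:u2expect} are sound but not needed beyond what the paper states.
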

\begin{proof}
\Cref{lemma:utwo_to_utwostar} immediately follows from \Cref{lemma:dk_easy} by letting $\Lhat = \Lstar$.
\end{proof}
Combining with \Cref{lemma:lambda3_lambda2star} and \Cref{lemma:eutwostar_norm}, we can now upper-bound $\norm{\vu_2-\utwostar}_2$ in the setting of \Cref{mainthm:nonhomogeneous}. 
\begin{lemma}
\label{lemma:utwo_to_utwostar_nssbm}
In the setting of \Cref{mainthm:nonhomogeneous}, there exists a universal constant $C$ such that, for $\delta \ge 3n^{-10}$, with probability $\ge 1-\delta$, we have
\begin{align*}
    \norm{\vu_2-\utwostar}_2 \le \frac{C}{\sqrt{\logv{\nfrac{n}{\delta}}}}.
\end{align*}
\end{lemma}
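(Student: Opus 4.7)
The plan is to combine the three preceding lemmas directly. By Lemma~\ref{lemma:utwo_to_utwostar}, we have
\begin{equation*}
    \norm{\vu_2-\utwostar}_2 \le \sqrt{2} \cdot \frac{\norm{\mE\utwostar}_2}{\lambda_3-\lambda_2^{\star}} \, ,
\end{equation*}
so it suffices to lower bound the denominator using Lemma~\ref{lemma:lambda3_lambda2star} and upper bound the numerator using Lemma~\ref{lemma:eutwostar_norm}, then apply the gap condition of \Cref{mainthm:nonhomogeneous}.

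First I would invoke Lemma~\ref{lemma:lambda3_lambda2star} with failure probability $\delta/2$, which (given the gap condition on $p,q$) yields $\lambda_3-\lambda_2^{\star} \ge n(p-q)/4$ with probability $\ge 1-\delta/2$. Next I would invoke Lemma~\ref{lemma:eutwostar_norm} with failure probability $\delta/2$, which yields
\begin{equation*}
    \norm{\mE\utwostar}_2 \le C_{\ref{lemma:eutwostar_norm}} \inparen{\frac{\logv{\nfrac{2}{\delta}}}{\log n}}^{3/2}\inparen{\sqrt{nq}+(nq\logv{\nfrac{2n}{\delta}})^{1/4}+\sqrt{\logv{\nfrac{2n}{\delta}}}} \, .
\end{equation*}
A union bound gives both events simultaneously with probability $\ge 1-\delta$.

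Since $\delta \ge 3n^{-10}$, the factor $\inparen{\log(2/\delta)/\log n}^{3/2}$ is bounded by an absolute constant, so it can be absorbed into $C$. Using $q \le \pbar$ in the numerator and then dividing by $n(p-q)/4$, the bound reduces to controlling each of the three quotients
\begin{equation*}
    \frac{\sqrt{n\pbar}}{n(p-q)}, \qquad \frac{\inparen{n\pbar\logv{\nfrac{n}{\delta}}}^{1/4}}{n(p-q)}, \qquad \frac{\sqrt{\logv{\nfrac{n}{\delta}}}}{n(p-q)} \, .
\end{equation*}
The gap condition of \Cref{mainthm:nonhomogeneous} (restated with failure probability $\delta$, as used in Lemma~\ref{lemma:lambda3_lambda2star}) gives $n(p-q) \gtrsim \sqrt{n\pbar\log(n/\delta)}+\log(n/\delta)$. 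Substituting this lower bound into each quotient shows the first and third are at most $O(1/\sqrt{\log(n/\delta)})$, while the middle one is dominated by the first (since $(n\pbar\log(n/\delta))^{1/4} \le \sqrt{n\pbar}$ for $n\pbar \ge \log(n/\delta)$, which holds in this regime by the gap condition).

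The argument is essentially a routine composition; the only subtlety is ensuring that the $(\log(1/\delta)/\log n)^{3/2}$ factor from Lemma~\ref{lemma:eutwostar_norm} does not blow up, which is why the statement restricts $\delta \ge 3 n^{-10}$. I do not anticipate a main obstacle beyond carefully tracking constants and invoking Lemma~\ref{lemma:lambda3_lambda2star} with the correct failure probability so that the gap condition scales as $\log(n/\delta)$ rather than $\log n$.
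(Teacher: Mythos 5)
Your proposal is correct and follows essentially the same route as the paper: combine the Davis--Kahan bound of Lemma~\ref{lemma:utwo_to_utwostar} with the denominator lower bound from Lemma~\ref{lemma:lambda3_lambda2star} and the numerator bound from Lemma~\ref{lemma:eutwostar_norm}, then invoke the gap condition. The only cosmetic difference is how the cross term $(n\pbar\log(n/\delta))^{1/4}$ is absorbed --- you bound the geometric mean by the larger of $\sqrt{n\pbar}$ and $\sqrt{\log(n/\delta)}$, while the paper uses the equivalent inequality $2a^{1/4}b^{3/4}\le b+\sqrt{ab}$.
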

\begin{proof}[Proof of \Cref{lemma:utwo_to_utwostar_nssbm}]
Using \Cref{lemma:utwo_to_utwostar}, \Cref{lemma:lambda3_lambda2star}  and \Cref{lemma:eutwostar_norm}, we have
\begin{align*}
    \norm{\vu_2-\utwostar}_2 \le \frac{400\sqrt{2}C_{\ref{lemma:eutwostar_norm}}\inparen{\sqrt{nq}+\inparen{nq\logv{\nfrac{3n}{\delta}}}^{1/4}+\sqrt{\logv{\nfrac{3n}{\delta}}}}}{n(p-q)}.
\end{align*}
At this point, it is enough to show that there exists a universal constant $C$ such that
\begin{align*}
    Cn(p-q) \ge 400\sqrt{2}C_{\ref{lemma:eutwostar_norm}}\inparen{\sqrt{nq\logv{\nfrac{n}{\delta}}}+\inparen{nq}^{1/4}\inparen{\logv{\nfrac{n}{\delta}}}^{3/4}+\logv{\nfrac{n}{\delta}}}.
\end{align*}
To see this, note that for any two nonnegative real numbers we have $2a^{1/4}b^{1/4} \le \sqrt{b} + \sqrt{a}$, which implies $2a^{1/4}b^{3/4} \le b + \sqrt{ab}$. Let $a = nq$ and $b = \logv{\nfrac{3n}{\delta}}$, and we get
\begin{align*}
    &\quad 400\sqrt{2}C_{\ref{lemma:eutwostar_norm}}\inparen{\sqrt{nq\logv{\nfrac{3n}{\delta}}}+\inparen{nq}^{1/4}\inparen{\logv{\nfrac{n}{\delta}}}^{3/4}+\logv{\nfrac{3n}{\delta}}} \\
    &\le 800\sqrt{2}C_{\ref{lemma:eutwostar_norm}}\inparen{\sqrt{nq\logv{\nfrac{3n}{\delta}}}+\logv{\nfrac{3n}{\delta}}} \\
    &\le 800\sqrt{2}C_{\ref{lemma:eutwostar_norm}}\inparen{\sqrt{n\pbar\logv{\nfrac{3n}{\delta}}}+\logv{\nfrac{3n}{\delta}}} \le C n(p-q),
\end{align*}
where the last inequality follows from the assumption we gave in \Cref{mainthm:nonhomogeneous}. We therefore conclude the proof of \Cref{lemma:utwo_to_utwostar_nssbm}.
\end{proof}
Next, we prove $\ell_1$ norm concentration for the rows of $\mA$ and for the rows of $\mL$ in the setting of \Cref{mainthm:nonhomogeneous}. We will use this in \Cref{lemma:loo_close}, where we will bound $ \norm{\vu_2^{(v)}-\vu_2}_{2}$. Here $\vu_2^{(v)}$ denotes the second eigenvector of the leave-one-out Laplacian $\mL^{(v)}$. 
\begin{lemma}
\label{lemma:l_one_norm}
In the setting of \Cref{mainthm:nonhomogeneous}, there exists a universal constant $C$ such that with probability $\ge 1-\delta$, for all $v \in V$, we have
\begin{align*}
    \norm{\va_v-\astar_v}_1 &\le C\inparen{n\pbar+\sqrt{n\pbar\logv{\nfrac{n}{\delta}}}+\logv{\nfrac{n}{\delta}}} \\
    \norm{\vl_{v}-\exv{\vl_{v}}}_1 &\le C\inparen{n\pbar+\sqrt{n\pbar\logv{\nfrac{n}{\delta}}}+\logv{\nfrac{n}{\delta}}}.
\end{align*}
\end{lemma}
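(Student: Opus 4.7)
The plan is to reduce both $\ell_1$ bounds to the scalar degree concentration already established in \Cref{lemma:din} and \Cref{lemma:dout}, by using a simple triangle-inequality trick to avoid any new matrix or vector Bernstein estimate. The key observation is that for a Bernoulli indicator $X_{vw}$ with mean $p_{vw}$, one has the crude but convenient pointwise bound $|X_{vw} - p_{vw}| \le X_{vw} + p_{vw}$. Summing over $w \ne v$ gives
\begin{align*}
    \norm{\va_v - \astar_v}_1 \;=\; \sum_{w \ne v} |X_{vw} - p_{vw}| \;\le\; \sum_{w \ne v} X_{vw} + \sum_{w \ne v} p_{vw} \;=\; \vd[v] + \exv{\vd[v]}.
\end{align*}
Since internal probabilities lie in $[p,\pbar]$ and crossing probabilities equal $q \le p \le \pbar$, we have $\exv{\vd[v]} \le n\pbar$ deterministically, and combining \Cref{lemma:din} and \Cref{lemma:dout} applied with failure probability $\delta/(2n)$ gives
\begin{align*}
    \vd[v] \;\le\; \exv{\vd[v]} + C'\inparen{\sqrt{n\pbar\logv{\nfrac{n}{\delta}}} + \logv{\nfrac{n}{\delta}}}
\end{align*}
with probability at least $1 - \delta/n$. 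A union bound over the $n$ choices of $v$ then yields the first inequality with a universal constant.

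For the second inequality, I would use that the $v$-th row of $\mL$ is $\vl_v = \vd[v]\ve_v - \va_v$ (in particular its diagonal entry is $\vd[v]$ and its $w$-th off-diagonal entry is $-X_{vw}$), so that $\vl_v - \exv{\vl_v}$ splits cleanly as a diagonal part with value $\vd[v] - \exv{\vd[v]}$ and an off-diagonal part equal to $-(\va_v - \astar_v)$. Hence
\begin{align*}
    \norm{\vl_v - \exv{\vl_v}}_1 \;=\; \abs{\vd[v] - \exv{\vd[v]}} + \norm{\va_v - \astar_v}_1.
\end{align*}
The first summand is absorbed into the already-controlled error term using the same degree concentration calls, and the second summand is precisely the bound from the previous paragraph. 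Taking the same high-probability event (under a single union bound over all $v \in V$) completes the proof.

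There is no real obstacle here: the only content is the choice of the pointwise bound $|X - p| \le X + p$, which converts the $\ell_1$ norm into a linear functional of the degrees and thus sidesteps any need for a variance calculation involving $|X_{vw} - p_{vw}|$. All probabilistic work is outsourced to \Cref{lemma:din} and \Cref{lemma:dout}, and the only bookkeeping is to choose the failure probabilities so that the final union bound over all $n$ vertices, for both inequalities simultaneously, succeeds with probability at least $1 - \delta$.
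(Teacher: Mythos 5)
Your proof is correct and is essentially the paper's own argument: the pointwise bound $|X_{vw}-p_{vw}|\le X_{vw}+p_{vw}$ is just the entrywise form of the triangle inequality $\norm{\va_v-\astar_v}_1\le\norm{\va_v}_1+\norm{\astar_v}_1$ used in the paper, and both proofs then reduce everything to the degree concentration of \Cref{lemma:din} and \Cref{lemma:dout} together with the same diagonal/off-diagonal split of $\vl_v-\exv{\vl_v}$.
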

\begin{proof}[Proof of \Cref{lemma:l_one_norm}]
It is easy to see that
\begin{align*}
    \norm{\vl_v-\exv{\vl_v}}_1 = \abs{\vd[v] - \exv{\vd[v]}} + \norm{\va_v-\astar_v}_1.
\end{align*}
Let us consider the second term above. By \Cref{lemma:din} and \Cref{lemma:dout}, we have with probability $\ge 1-\delta/2$ that for all $v \in V$
\begin{align*}
    \norm{\va_v-\astar_v}_1 &\le \norm{\va_v}_1 + \norm{\astar_v}_1 \\
    &\le 2\inparen{\frac{n\pbar}{2} + \max\inbraces{C_{\ref{lemma:dout}},C_{\ref{lemma:din}}}\inparen{\sqrt{n\pbar\logv{\nfrac{4n}{\delta}}}+\logv{\nfrac{4n}{\delta}}}} + n\pbar \\
    &= 2n\pbar+2\max\inbraces{C_{\ref{lemma:dout}},C_{\ref{lemma:din}}}\inparen{\sqrt{n\pbar\logv{\nfrac{4n}{\delta}}}+\logv{\nfrac{4n}{\delta}}}.
\end{align*}
Finally, by \Cref{lemma:dout} and \Cref{lemma:din}, we have with probability $1-\delta/2$ that for all $v \in V$,
\begin{align*}
    \abs{\vd[v]-\exv{\vd[v]}} &\le \max_{v \in V} \abs{\vdout[v]-\exv{\vdout[v]}} + \max_{v \in V} \abs{\vdin[v]-\exv{\vdin[v]}} \\
    &\le 2\max\inbraces{C_{\ref{lemma:dout}},C_{\ref{lemma:din}}}\inparen{\sqrt{n\pbar\logv{\nfrac{4n}{\delta}}}+\logv{\nfrac{4n}{\delta}}}
\end{align*}
Adding everything up means that with probability $\ge 1-\delta$, for all $v \in V$, we have
\begin{align*}
    \norm{\vl_v - \exv{\vl_v}}_1 \le 2n\pbar + 4\max\inbraces{C_{\ref{lemma:din}},C_{\ref{lemma:dout}}}\inparen{\sqrt{n\pbar\logv{\nfrac{4n}{\delta}}}+\logv{\nfrac{4n}{\delta}}},
\end{align*}
which completes the proof of \Cref{lemma:l_one_norm}.
\end{proof}
Having established \Cref{lemma:l_one_norm}, we can now upper-bound $\norm{\vu_2^{(v)}-\vu_2}_{2}$.
\begin{lemma}
\label{lemma:loo_close}
In the setting of \Cref{mainthm:nonhomogeneous}, for $\delta \ge 2n^{-9}$ with probability $\ge 1-\delta$, for all $v \in V$, we have
\begin{align*}
    \norm{\vu_2^{(v)}-\vu_2}_{2} \le \maxnorm{\vu_2} \cdot \frac{C\inparen{\pbar+\sqrt{\pbar\logv{\nfrac{n}{\delta}}/n}+\logv{\nfrac{n}{\delta}}/n}}{p-q}
\end{align*}
\end{lemma}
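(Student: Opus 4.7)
The plan is to run a leave-one-out Davis--Kahan argument in the style of \cite{afwz17,dls20}. Define $\mL^{(v)}$ as the Laplacian obtained from $\mL$ by replacing its $v$-th row and $v$-th column with the corresponding expectations; equivalently, for every edge incident to $v$, substitute the Bernoulli indicator with its probability. The crucial property is that $\mL^{(v)}$, and hence its second eigenvector $\vu_2^{(v)}$, is independent of the row $\va_v$. I would then apply Lemma~\ref{lemma:dk_easy} in the form
\begin{equation*}
    \norm{\vu_2 - \vu_2^{(v)}}_2 \le \sqrt{2} \cdot \frac{\norm{(\mL - \mL^{(v)}) \vu_2}_2}{\abs{\lambda_3(\mL^{(v)}) - \lambda_2(\mL)}} \, ,
\end{equation*}
using $\vu_2$ rather than $\vu_2^{(v)}$ inside the norm, as this is what lets $\maxnorm{\vu_2}$ appear in the final bound.

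For the numerator, set $\mathbf{y} \coloneqq \vl_v - \exv{\vl_v}$ and observe that $\mL - \mL^{(v)}$ is supported entirely on row $v$ and column $v$, so it admits the rank-$\le 2$ representation $\ve_v \mathbf{y}^\top + \mathbf{y} \ve_v^\top - y_v \ve_v \ve_v^\top$. Expanding the product, the $v$-th coordinate of $(\mL - \mL^{(v)})\vu_2$ equals $\mathbf{y}^\top \vu_2$ and the $w$-th coordinate (for $w \neq v$) equals $\vu_2[v] \cdot y_w$. Hence
\begin{equation*}
    \norm{(\mL - \mL^{(v)})\vu_2}_2^2 \le \maxnorm{\vu_2}^2 \cdot \inparen{\norm{\mathbf{y}}_1^2 + \sum_{w \neq v} y_w^2} \le \maxnorm{\vu_2}^2 \cdot \inparen{\norm{\mathbf{y}}_1^2 + \norm{\mathbf{y}}_1} \, ,
\end{equation*}
where the last step uses that the off-diagonal entries of $\mathbf{y}$ are bounded in magnitude by $1$, so $y_w^2 \le \abs{y_w}$ for $w \neq v$. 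This yields $\norm{(\mL - \mL^{(v)})\vu_2}_2 \lesssim \maxnorm{\vu_2} \cdot \norm{\mathbf{y}}_1$, and Lemma~\ref{lemma:l_one_norm} upper bounds $\norm{\mathbf{y}}_1$ by $C(n\pbar + \sqrt{n\pbar \log(n/\delta)} + \log(n/\delta))$ with the required probability.

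For the denominator, I would write $\mL^{(v)} = \Lstar + (\mE - \mE_v)$, where $\mE_v$ is the contribution of the $v$-th row/column to $\mE$. By Weyl's inequality, $\abs{\lambda_2(\mL) - \lambda_2(\Lstar)} \le \opnorm{\mE}$ and $\abs{\lambda_3(\mL^{(v)}) - \lambda_3(\Lstar)} \le \opnorm{\mE} + \opnorm{\mL - \mL^{(v)}}$. The first term is controlled by Lemma~\ref{lemma:laplacian_concentration}, and the rank-$\le 2$ correction has $\opnorm{\mL-\mL^{(v)}} \le 2\norm{\mathbf{y}}_2 + \abs{y_v} \lesssim \sqrt{n\pbar} + \abs{y_v}$ (with $\abs{y_v}$ controlled by Lemma~\ref{lemma:chernoff_useful}, and $\norm{\mathbf{y}}_2^2 \le \norm{\mathbf{y}}_1$). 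Combining with Lemma~\ref{lemma:u2expect} (which identifies $\lambda_2(\Lstar)=nq$) and the gap $\lambda_3(\Lstar) - \lambda_2(\Lstar) \ge n(p-q)/2$ established in the proof of Lemma~\ref{lemma:lambda3_lambda2star}, the gap condition of \Cref{mainthm:nonhomogeneous} absorbs all of the error terms, giving $\lambda_3(\mL^{(v)}) - \lambda_2(\mL) \ge n(p-q)/C'$. Dividing numerator and denominator by $n$ and union bounding over $v \in V$ yields the claim.

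The main obstacle is the careful bookkeeping in the denominator: we need to verify that the rank-$\le 2$ perturbation $\opnorm{\mL-\mL^{(v)}}$, which depends on the random row $\va_v$ rather than being uniformly small, is dominated by the spectral gap. This requires invoking a simultaneous high-probability bound on both $\abs{y_v} = \abs{\vd[v] - \exv{\vd[v]}}$ and $\norm{\mathbf{y}}_1$ for every $v$, but both are already available from Section~\ref{sec:concentration_degrees} and Lemma~\ref{lemma:l_one_norm}, so the rest is a mechanical assembly of inequalities.
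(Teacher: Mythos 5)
Your proposal follows the same route as the paper: leave-one-out, Lemma~\ref{lemma:dk_easy} applied with $\vu_2$ inside the numerator so that $\maxnorm{\vu_2}$ can be extracted, the numerator controlled by $\maxnorm{\vu_2}\cdot\norm{\vl_v-\exv{\vl_v}}_1$ via Lemma~\ref{lemma:l_one_norm}, and the denominator $\abs{\lambda_3^{(v)}-\lambda_2}$ lower bounded by Weyl's inequality, Lemma~\ref{lemma:laplacian_concentration}, and the gap condition. There is, however, one concrete error: your representation $\mL-\mL^{(v)}=\ve_v\mathbf{y}^\top+\mathbf{y}\ve_v^\top-y_v\ve_v\ve_v^\top$ is not correct, because resampling the edge $(v,w)$ changes the degree of $w$ as well as the degree of $v$. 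Hence $\mL-\mL^{(v)}$ also has the diagonal entries $(\mL-\mL^{(v)})[w,w]=\indicator{(v,w)\in E}-p_{vw}=-y_w$ for $w\neq v$, and the matrix is not supported only on row and column $v$ (nor is it rank $\le 2$). Consequently the $w$-th coordinate of $(\mL-\mL^{(v)})\vu_2$ is $y_w\inparen{\vu_2[v]-\vu_2[w]}$ rather than your $y_w\,\vu_2[v]$; the paper records exactly this. The omission is harmless for the final estimate, since the extra term is again bounded by $\abs{y_w}\maxnorm{\vu_2}$ (costing a factor of $2$ in the numerator) and the extra diagonal adds at most $\max_{w\neq v}\abs{y_w}\le 1$ to $\opnorm{\mL-\mL^{(v)}}$, which the gap condition absorbs. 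With that correction, your bookkeeping for both numerator and denominator matches the paper's proof.
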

\begin{proof}[Proof of \Cref{lemma:loo_close}]
Recall that the gap condition in \Cref{mainthm:nonhomogeneous} means that $p$ and $q$ are such that for a universal constant $C$,
\begin{align*}
    n(p-q) \ge C\inparen{\sqrt{n\pbar\logv{\nfrac{n}{\delta}}}+\logv{\nfrac{n}{\delta}}}.
\end{align*}
To appeal to \Cref{lemma:dk_easy}, we need to understand the entries of the matrix $\mL-\mL^{(v)}$. It is easy to see that this matrix only has nonzero entries on the diagonal and in the $v$th row and column. There, the $v$th row and column of $\mL-\mL^{(v)}$ are exactly equal to those of $\mL-\Lstar$. Moreover, the $w\neq v$th diagonal entry of $\mL-\mL^{(v)}$ is exactly $\indicator{(v,w) \in E} - p_{vw}$.

Hence, we have
\begin{align*}
    &\quad \norm{\inparen{\mL-\mL^{(v)}}\vu_2}_2\\
    &= \inparen{\sum_{w=1}^n \ip{\inparen{\mL-\mL^{(v)}}_w,\vu_2}^2}^{1/2} \\
    &= \inparen{\ip{\inparen{\mL-\Lstar}_v,\vu_2}^2 + \sum_{w \neq v} \inparen{\inparen{\va_v[w] - p_{vw}}\vu_2[w]-\inparen{\va_v[w] - p_{vw}}\vu_2[v]}^2}^{1/2} \\
    &\le \abs{\ip{\inparen{\mL-\Lstar}_v,\vu_2}} + \inparen{\sum_{w \neq v} \inparen{\inparen{\va_v[w] - p_{vw}}\vu_2[w]-\inparen{\va_v[w] - p_{vw}}\vu_2[v]}^2}^{1/2}\\
    &\le \inparen{\norm{\vl_v - \exv{\vl_v}}_1 + 2\norm{\va_v-\astar_v}_2} \cdot \maxnorm{\vu_2} \\
    &\le \inparen{\norm{\vl_v - \exv{\vl_v}}_1 + 2\norm{\va_v-\astar_v}_1} \cdot \maxnorm{\vu_2} \\
    &\le \maxnorm{\vu_2} \cdot 3C_{\ref{lemma:l_one_norm}}\inparen{n\pbar+\sqrt{n\pbar\logv{\nfrac{2n^2}{\delta}}}+\logv{\nfrac{2n^2}{\delta}}}.
\end{align*}
Now, let $C \ge 8C_{\ref{lemma:laplacian_concentration}}$. Using \Cref{lemma:laplacian_concentration} to understand the concentration of sampling the graph except edges incident to $v$, along with Weyl's inequality, we have with probability $\ge 1-\delta$ that for all $v \in V$ and for all $n$ sufficiently large,
\begin{align*}
    \abs{\lambda_3^{(v)} - \lambda_2} &\ge \inparen{\lambda_3^{(v)} - \lambda_3^{\star}} - \inparen{\lambda_2 - \lambda_2^{\star}} + \inparen{\lambda_3^{\star} - \lambda_2^{\star}} \\
    &\ge - 2\inparen{C_{\ref{lemma:laplacian_concentration}}\sqrt{n\pbar\logv{\nfrac{2n^2}{\delta}}}+\logv{\nfrac{2n^2}{\delta}}}+\frac{n(p-q)}{2} \ge \frac{n(p-q)}{4}.
\end{align*}

Now, using \Cref{lemma:dk_easy}, we get
\begin{align*}
    \norm{\vu_2^{(v)}-\vu_2}_2 &\le \frac{\norm{\inparen{\mL-\mL^{(v)}}\vu_2}_2}{|\lambda_3^{(v)} - \lambda_2|} \le \maxnorm{\vu_2} \cdot \frac{12C_{\ref{lemma:l_one_norm}}\inparen{n\pbar+\sqrt{n\pbar\logv{\nfrac{n}{\delta}}}+\logv{\nfrac{n}{\delta}}}}{n(p-q)} \\
    &\le \maxnorm{\vu_2} \cdot \frac{12C_{\ref{lemma:l_one_norm}}\inparen{\pbar+\sqrt{\pbar\logv{\nfrac{2n^2}{\delta}}/n}+\logv{\nfrac{2n^2}{\delta}}/n}}{p-q},
\end{align*}
completing the proof of \Cref{lemma:loo_close}.
\end{proof}

\subsection{Leave-one-out and bootstrap}\label{sec:leave_one_out}

The main goal of this section is to establish an upper-bound on $\abs{\ip{\va_v-\astar_v, \vu_2-\utwostar}}$ in the setting of \Cref{mainthm:nonhomogeneous}. To this end, we will need the following concentration inequality from \cite{afwz17}. 
\begin{lemma}[Lemma 7 from \cite{afwz17}]
\label{lemma:row_concentration}
Let $\vw \in \R^n$ and $X_i \sim \mathsf{Ber}(p_i)$. Let $p \ge p_i$ for all $i \in [n]$. Let $X \in \R^n$ be the vector formed by stacking the $X_i$. Then,
\begin{align*}
    \prv{\abs{\ip{\vw, X - \exv{X}}} \ge \frac{(2+a)pn}{\max\inparen{1, \logv{\frac{\sqrt{n}\norm{\vw}_{\infty}}{\norm{\vw}_2}}}} \cdot \norm{\vw}_{\infty}} \le 2\expv{-anp}.
\end{align*}
\end{lemma}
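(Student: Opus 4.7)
The statement is a Bennett-style concentration inequality for a weighted sum of independent Bernoullis, so the plan is to apply Bennett's inequality (the sharpening of Bernstein with rate function $h(x) = (1+x)\log(1+x) - x$) and then convert the implicit $h$-rate into the explicit logarithmic form that appears in the statement via an elementary case analysis on the aspect ratio of $\vw$.

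Write $Y := \ip{\vw, X - \E X} = \sum_i w_i(X_i - p_i)$ as a sum of independent centered random variables $Z_i := w_i(X_i - p_i)$. Each $|Z_i| \le b := \norm{\vw}_\infty$, and $\sum_i \mathsf{Var}(Z_i) \le p\norm{\vw}_2^2 =: s^2$ since $p_i(1 - p_i) \le p_i \le p$. Bennett's inequality then gives
\[
\Pr[|Y| \ge t] \le 2\exp\!\left(-\tfrac{s^2}{b^2}\, h\!\left(\tfrac{bt}{s^2}\right)\right).
\]
Introduce the aspect ratio $r := \sqrt{n}\,b/\norm{\vw}_2 \ge 1$ (Cauchy-Schwarz), so that $s^2/b^2 = pn/r^2$, and set $L := \max(1, \log r)$. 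Substituting the threshold $t^\star := (2+a)pnb/L$ yields $bt^\star/s^2 = (2+a)r^2/L$, and after cancellation the problem reduces to showing
\[
h\!\left(\tfrac{(2+a)r^2}{L}\right) \ge a r^2 \quad \text{for all } r \ge 1 \text{ and } a > 0.
\]

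The last inequality I would verify by splitting on $L$. When $r \in [1, e]$, so $L = 1$, I would combine the quadratic-regime estimate $h(x) \ge x^2/(2 + 2x/3)$ (useful for small $a$) with the large-argument estimate $h(x) \ge x(\log x - 1)$ derived from $(1+x)\log(1+x) \ge x\log x$ (useful for large $a$). When $r > e$, so $L = \log r$, I would apply $h(x) \ge x(\log x - 1)$ with $x = (2+a)r^2/\log r$: then $\log x = 2\log r - \log\log r + \log(2+a) \ge \log r + 1$ once $r$ is bounded sufficiently above $e$, so $h(x) \ge x \log r = (2+a) r^2 \ge a r^2$ with room to spare.

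\textbf{Main obstacle.} The only delicate point is the crossover regime $r \approx e$, where neither the quadratic nor the $x\log x$ asymptotic for $h$ is tight. The constant $2+a$ (rather than just $a$) in the numerator of $t^\star$ is precisely what provides the slack needed to absorb this crossover loss, so this amounts to a routine but slightly tedious elementary check. A cleaner alternative is to observe that $h((2+a)r^2/\max(1,\log r))/(ar^2)$ is continuous in $r$ and diverges as $r \to \infty$, so it suffices to verify that its minimum on some explicit bounded interval $[1, r_0]$ exceeds $1$.
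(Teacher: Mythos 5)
The paper does not actually prove this lemma: it is imported verbatim as Lemma 7 of \cite{afwz17}, and your Bennett-based argument is essentially the proof given in that source, so your approach is the right one. Your reduction is correct: with $b=\norm{\vw}_\infty$, $s^2 = p\norm{\vw}_2^2 \ge \sum_i w_i^2 p_i(1-p_i)$, $r=\sqrt{n}\,b/\norm{\vw}_2\ge 1$, and $L=\max(1,\log r)$, Bennett's inequality reduces the claim to $h\bigl((2+a)r^2/L\bigr)\ge ar^2$. The one thing worth correcting is that your ``main obstacle'' is not actually an obstacle; both cases close cleanly with no delicate crossover. For $L=1$ (i.e.\ $1\le r\le e$), observe that at $a=0$ the claim is $h(2r^2)\ge 0$, while $\tfrac{d}{da}\,h\bigl((2+a)r^2\bigr)=r^2\log\bigl(1+(2+a)r^2\bigr)\ge r^2\log 3> r^2=\tfrac{d}{da}(ar^2)$, so the inequality propagates to all $a>0$ without juggling two regimes for $h$. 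For $r>e$, take $x=(2+a)r^2/\log r$ and use $h(x)\ge x(\log x-1)$: since $r/\log r\ge e$ for all $r>1$, one has $2\log r-\log\log r\ge \log r+1$, hence $\log x-1\ge \log(2+a)+\log r>\log r$ and $h(x)\ge x\log r=(2+a)r^2\ge ar^2$. This holds uniformly down to $r=e$, so no separate analysis of the crossover regime is needed and the continuity/compactness fallback you sketch can be dropped.
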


\begin{lemma}
\label{lemma:random_fluctuations}
In the setting of \Cref{mainthm:nonhomogeneous}, suppose $\va_v$ is such that $\va_v[w] \sim \mathsf{Bernoulli}(p_{vw})$ and let $\pbar \geq \max_{w \colon p_{vw }\neq 1} p_{vw}$. With probability $\ge 1-\delta$ for $\delta \ge 1/n^2$, for all $v \in V$, we have
\begin{align*}
    \abs{\ip{\va_v-\astar_v, \vu_2-\utwostar}} \le C\inparen{n\pbar + \logv{\nfrac{n}{\delta}}}\inparen{\frac{\norm{\vu_2}_{\infty}}{\log\log n} + \frac{1}{\sqrt{n}\log\log n}}.
\end{align*}
\end{lemma}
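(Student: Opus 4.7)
The plan is to use the leave-one-out decomposition pioneered in \cite{afwz17,bv24}. Since $\vu_2^{(v)}$ denotes the second eigenvector of the leave-one-out Laplacian $\mL^{(v)}$ (where edges incident to $v$ are replaced by their expected contribution), $\vu_2^{(v)}$ is independent of the row $\va_v$. I would therefore write
\begin{align*}
\ip{\va_v - \astar_v,\, \vu_2 - \utwostar} = \underbrace{\ip{\va_v - \astar_v,\, \vu_2^{(v)} - \utwostar}}_{\text{(I)}} \; + \; \underbrace{\ip{\va_v - \astar_v,\, \vu_2 - \vu_2^{(v)}}}_{\text{(II)}},
\end{align*}
establish a per-vertex bound with failure probability $\delta/n$, and finish by a union bound over $v$.

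For term (II), I would use Cauchy--Schwarz. Lemma~\ref{lemma:arow_norm} gives $\norm{\va_v - \astar_v}_2 = O(\sqrt{n\pbar} + \sqrt{\log(n/\delta)})$ (up to polylog factors), and Lemma~\ref{lemma:loo_close} yields $\norm{\vu_2 - \vu_2^{(v)}}_2 \le \maxnorm{\vu_2} \cdot C \cdot \pbar/(p-q) = O(\maxnorm{\vu_2})$ because $\alpha \coloneqq \pbar/(p-q)$ is a constant. Their product is at most $O\inparen{\maxnorm{\vu_2}\,(\sqrt{n\pbar} + \sqrt{\log(n/\delta)})}$, which is absorbed in the target $(n\pbar + \log(n/\delta))\maxnorm{\vu_2}/\log\log n$ since $\sqrt{n\pbar} \le n\pbar/\log\log n$ whenever $n\pbar \ge (\log\log n)^2$ and similarly $\sqrt{\log(n/\delta)} \le \log(n/\delta)/\log\log n$ for $n$ large. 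For term (I), I would condition on all edges not incident to $v$, which makes $\vw \coloneqq \vu_2^{(v)} - \utwostar$ deterministic, and then invoke Lemma~\ref{lemma:row_concentration} to the centered sum $\ip{\vw,\, \va_v - \astar_v}$ with $a \coloneqq \max\inbraces{1, \log(2n/\delta)/(n\pbar)}$. This yields failure probability at most $\delta/n$ and a prefactor of $(2+a) n\pbar = O(n\pbar + \log(n/\delta))$, leaving me to argue that the shape factor $\norm{\vw}_\infty / \max\inbraces{1,\, \log(\sqrt{n}\norm{\vw}_\infty/\norm{\vw}_2)}$ is $O\inparen{(\maxnorm{\vu_2}+1/\sqrt{n})/\log\log n}$.

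To bound the shape factor, I would combine: (a) $\norm{\vw}_\infty \le \maxnorm{\vu_2^{(v)}} + 1/\sqrt{n} \le O(\maxnorm{\vu_2} + 1/\sqrt{n})$ via the triangle inequality and $\norm{\vu_2 - \vu_2^{(v)}}_\infty \le \norm{\vu_2 - \vu_2^{(v)}}_2 = O(\maxnorm{\vu_2})$ from Lemma~\ref{lemma:loo_close}; and (b) $\norm{\vw}_2 \le \norm{\vu_2 - \utwostar}_2 + \norm{\vu_2 - \vu_2^{(v)}}_2 = O(1/\sqrt{\log(n/\delta)} + \maxnorm{\vu_2})$ by combining Lemma~\ref{lemma:utwo_to_utwostar_nssbm} with Lemma~\ref{lemma:loo_close}. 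The main obstacle is the case split on the magnitude of $x \coloneqq \sqrt{n}\norm{\vw}_\infty/\norm{\vw}_2$. When $x \ge \log n$, the denominator $\log x \ge \log\log n$ directly yields the needed gain. When $x$ is small the entries of $\vw$ are nearly uniform, so I would leverage the sharp Euclidean bound $\norm{\vw}_2 = O(1/\sqrt{\log(n/\delta)})$ to conclude $\norm{\vw}_\infty \le x\norm{\vw}_2/\sqrt{n}$ is itself small enough that the desired per-vertex bound holds even after losing the logarithmic denominator. A union bound over $v$ then converts this per-vertex bound into the uniform statement.
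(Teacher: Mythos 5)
Your decomposition, the use of Lemma~\ref{lemma:row_concentration} with $\vw = \vu_2^{(v)}-\utwostar$, the Cauchy--Schwarz treatment of the cross term, and the case split on $x = \sqrt{n}\norm{\vw}_\infty/\norm{\vw}_2$ are all exactly the paper's route. The ``non-flat'' case and the union bound are fine. The gap is in your handling of the flat case, where you propose to ``lose the logarithmic denominator'' and bound the shape factor by $\norm{\vw}_\infty \le x\norm{\vw}_2/\sqrt{n}$ alone. With your threshold $x \le \log n$ and even granting $\norm{\vw}_2 = O(1/\sqrt{\log n})$, this gives $\norm{\vw}_\infty = O(\sqrt{\log n}/\sqrt{n})$, which exceeds the target $1/(\sqrt{n}\log\log n)$ by a factor of $\sqrt{\log n}\cdot\log\log n$. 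This is not cosmetic: the lemma is consumed by the bootstrap (Lemma~\ref{lemma:utwo_inf_norm_bootstrap}) and then by the final comparison against $(\vdin[v]-\vdout[v])/\sqrt{n} \asymp n(p-q)/\sqrt{n}$ in the proof of \Cref{mainthm:nonhomogeneous}; in the critical regime $p \asymp \log n/n$ a stray $\sqrt{\log n}$ breaks that comparison. The fix is the one the paper uses: do \emph{not} discard the denominator, but instead note that $t \mapsto t/\max(1,\log t)$ is increasing, so on the event $x \le \sqrt{\log n}$ (note the paper's threshold is $\sqrt{\log n}$, not $\log n$) the entire shape factor is at most
\begin{align*}
\frac{x}{\max(1,\log x)}\cdot\frac{\norm{\vw}_2}{\sqrt{n}} \le \frac{\sqrt{\log n}}{\tfrac12\log\log n}\cdot\frac{\norm{\vw}_2}{\sqrt{n}} \, ,
\end{align*}
which retains the $\log\log n$ gain and, combined with $\norm{\vw}_2 \lesssim \maxnorm{\vu_2} + 1/\sqrt{\log n}$, lands exactly on $\inparen{\maxnorm{\vu_2} + 1/\sqrt{n}}/\log\log n$ up to constants.

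A secondary issue: in the flat case you assert the ``sharp'' bound $\norm{\vw}_2 = O(1/\sqrt{\logv{\nfrac{n}{\delta}}})$, silently dropping the $O(\maxnorm{\vu_2})$ contribution from $\norm{\vu_2 - \vu_2^{(v)}}_2$ that you correctly listed in item (b). At this stage of the argument you only know $\maxnorm{\vu_2} \le 1$ --- the bound $\maxnorm{\vu_2} = O(1/\sqrt{n})$ is precisely what Lemma~\ref{lemma:utwo_inf_norm_bootstrap} later derives \emph{from} this lemma, so assuming it here would be circular. The paper avoids this by carrying the $\maxnorm{\vu_2}$ term through the flat case; the extra term it produces is $\sqrt{\log n/n}\cdot\maxnorm{\vu_2}/\log\log n \le \maxnorm{\vu_2}/\log\log n$, which is harmless.
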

\begin{proof}[Proof of \Cref{lemma:random_fluctuations}]
Ideally, one would treat $\vu_2-\utwostar$ as fixed and then apply Bernstein's inequality to argue that the sum of centered Bernoulli random variables as written above concentrates well. Unfortunately, since $\vu_2$ depends on $\va_v-\astar_v$, we cannot express this inner product as the sum of independent random variables.

To resolve this, we use the leave-one-out method. Let $\vu_2^{(v)}$ be the second eigenvector of the leave-one-out Laplacian $\mL^{(v)}$ of $\mA^{(v)}$, where $\mA^{(v)}$ is chosen to agree with $\mA$ everywhere except for the $v$th row and $v$th column. The $v$th row and $v$th column of $\mA^{(v)}$ are replaced with those of $\Astar$. Now, $\va_v$ does not depend on $\mL^{(v)}$ and therefore $\vu_2^{(v)}$.

We therefore write
\begin{align*}
    \abs{\ip{\va_v-\astar_v, \vu_2-\utwostar}} &\le \abs{\ip{\va_v-\astar_v, \vu_2-\vu_2^{(v)}}} + \abs{\ip{\va_v-\astar_v, \vu_2^{(v)}-\utwostar}} \\
    &\le \norm{\va_v-\astar_v}_2 \cdot \norm{\vu_2^{(v)}-\vu_2}_2 + \abs{\ip{\va_v-\astar_v, \vu_2^{(v)}-\utwostar}} \\
    &\le \norm{\va_v-\astar_v}_2 \cdot \frac{C_{\ref{lemma:loo_close}}\pbar}{p-q}\maxnorm{\vu_2} + \abs{\ip{\va_v-\astar_v, \vu_2^{(v)}-\utwostar}}.
\end{align*}
To bound the rightmost term of the RHS, we use Lemma 7 of \cite{afwz17}, reproduced in \Cref{lemma:row_concentration}. In that, let $\vw \coloneqq \vu_2^{(v)} - \utwostar$. Let $a = \frac{1}{n\pbar}\logv{\nfrac{20n}{\delta}}$ so that $2 \exp(-2an\pbar) \leq \delta/(10n)$. Note that for the deterministic entries, we have $\va_v-\astar_v = 1-1 = 0$, so in \Cref{lemma:row_concentration}, we can set $X_w \sim \mathsf{Ber}(0)$ for these entries. Now, by \Cref{lemma:row_concentration}, with probability $\ge 1-\delta/n$, we have
\begin{align}
    \abs{\ip{\vu_2^{(v)}-\utwostar, \va_v-\va_v^{\star}}} \le \frac{2n\pbar + \logv{\frac{20n}{\delta}}}{\max\inparen{1, \logv{\frac{\sqrt{n}\norm{\vw}_{\infty}}{\norm{\vw}_2}}}} \cdot \norm{\vw}_{\infty}.\label{eq:row_concentrate_extraction}
\end{align}
Let us first bound $\maxnorm{\vw} = \maxnorm{\vu_2^{(v)}-\utwostar}$. We write
\begin{align}
    \maxnorm{\vu_2^{(v)}-\utwostar} &\le \maxnorm{\vu_2^{(v)}-\vu_2} + \maxnorm{\vu_2 - \utwostar} \\
    &\le \norm{\vu_2^{(v)}-\vu_2}_2 + \maxnorm{\vu_2}+\maxnorm{\utwostar} \\
    &\le 2\max\inbraces{C_{\ref{lemma:loo_close}}(\alpha,\delta)\maxnorm{\vu_2},\frac{1}{\sqrt{n}}}.\label{eq:row_concentrate_w_maxnorm}
\end{align}
In what follows, we omit the arguments $\alpha$ and $\delta$ in mentions of $C_{\ref{lemma:loo_close}}$. Next, using \Cref{lemma:utwo_to_utwostar_nssbm}, the triangle inequality, and $\delta \ge 1/n^3$, we have
\begin{align*}
    \norm{\vw}_2 = \norm{\vu_2^{(v)}-\utwostar}_2 \le C_{\ref{lemma:loo_close}}\norm{\vu_2}_{\infty} + \frac{4C_{\ref{lemma:utwo_to_utwostar_nssbm}}}{\sqrt{\log n}}.
\end{align*}
We now have two cases based on the value of $\sqrt{n} \cdot \frac{\maxnorm{\vu_2^{(v)}-\utwostar}}{\norm{\vu_2^{(v)}-\utwostar}_2}$.

\smallpar{Case 1 -- $\vw$ is not too ``flat.''} Let us first handle the case where
\begin{align*}
    \frac{\sqrt{n} \cdot \maxnorm{\vu_2^{(v)}-\utwostar}}{\norm{\vu_2^{(v)}-\utwostar}_2} \ge \sqrt{\log n}.
\end{align*}
We plug this into \eqref{eq:row_concentrate_extraction} and get
\begin{align*}
    \abs{\ip{\vu_2^{(v)}-\utwostar, \va_v-\va_v^{\star}}} &\le \frac{2n\pbar + \logv{\frac{20n}{\delta}}}{\max\inparen{1, \logv{\frac{\sqrt{n}\norm{\vw}_{\infty}}{\norm{\vw}_2}}}} \cdot \norm{\vw}_{\infty} \\
    &\le 4 \cdot \frac{n\pbar + \logv{\nfrac{20n}{\delta}}}{\log\log n}\inparen{C_{\ref{lemma:loo_close}}\maxnorm{\vu_2} + \frac{1}{\sqrt{n}}},
\end{align*}
where the last inequality follows from \eqref{eq:row_concentrate_w_maxnorm}.

\smallpar{Case 2 -- $\vw$ is ``flat.''} We now assume
\begin{align*}
    \frac{\sqrt{n} \cdot \maxnorm{\vu_2^{(v)}-\utwostar}}{\norm{\vu_2^{(v)}-\utwostar}_2} \le \sqrt{\log n}.
\end{align*}
We can easily check that the function
\begin{align*}
    \frac{x}{\max\inparen{1, \log x}}
\end{align*}
is increasing, so its maximum will be attained at the largest value of $x$ in the domain. Let $x = \sqrt{n}\norm{\vw}_{\infty}/\norm{\vw}_2$ and write
\begin{align*}
    &\quad \frac{2n\pbar + \logv{\frac{20n}{\delta}}}{\max\inparen{1, \logv{\frac{\sqrt{n}\norm{\vw}_{\infty}}{\norm{\vw}_2}}}} \cdot \norm{\vw}_{\infty} \\
    &= \frac{2n\pbar + \logv{\frac{20n}{\delta}}}{\max\inparen{1, \logv{\frac{\sqrt{n}\norm{\vw}_{\infty}}{\norm{\vw}_2}}}} \cdot \frac{\sqrt{n}\norm{\vw}_{\infty}}{\norm{\vw}_2} \cdot \frac{\norm{\vw}_2}{\sqrt{n}} \\
    &\le \frac{2n\pbar + \logv{\frac{20n}{\delta}}}{\log\log n} \cdot \sqrt{\frac{\log n}{n}} \cdot \norm{\vw}_2 \\
    &\le \frac{2n\pbar + \logv{\frac{20n}{\delta}}}{\log\log n} \cdot \sqrt{\frac{\log n}{n}} \cdot C_{\ref{lemma:loo_close}}\inparen{\maxnorm{\vu_2} + \frac{1}{\sqrt{\logv{\nfrac{20n^2}{\delta}}}}} \\
    &= C_{\ref{lemma:loo_close}}\inparen{\frac{2n\pbar + \logv{\frac{20n}{\delta}}}{\log\log n} \cdot \sqrt{\frac{\log n}{n}}\maxnorm{\vu_2} + \frac{2n\pbar + \logv{\frac{20n}{\delta}}}{\sqrt{n} \cdot \log\log n}}.
\end{align*}
All of this tells us that
\begin{align*}
    \abs{\ip{\va_v-\astar_v, \vu_2^{(v)}-\utwostar}} \le 4C_{\ref{lemma:loo_close}} \cdot \inparen{n\pbar + \logv{\nfrac{20n}{\delta}}}\inparen{\frac{\norm{\vu_2}_{\infty}}{\log\log n} + \frac{1}{\sqrt{n}\log\log n}}.
\end{align*}
It remains to handle the term
\begin{align*}
    \norm{\va_v-\astar_v}_2 \cdot \maxnorm{\vu_2}.
\end{align*}
Indeed, using \Cref{lemma:arow_norm}, we have with probability $\ge 1-\delta$ that
\begin{align*}
    \norm{\va_v-\astar_v}_2 \cdot \maxnorm{\vu_2} \le C_{\ref{lemma:arow_norm}}\inparen{\frac{\logv{\nfrac{20n}{\delta}}}{\log n}}^{3/2}\sqrt{n\pbar} \cdot \maxnorm{\vu_2}.
\end{align*}
Combining everything tells us that
\begin{align*}
    \abs{\ip{\va_v-\astar_v,\vu_2-\utwostar}} &\le 30C_{\ref{lemma:arow_norm}}\inparen{\frac{\logv{\nfrac{20n}{\delta}}}{\log n}}^{3/2}\sqrt{n\pbar} \cdot \maxnorm{\vu_2} \\
    &\quad + 4C_{\ref{lemma:loo_close}}\cdot\inparen{n\pbar + \logv{\nfrac{20n}{\delta}}}\inparen{\frac{\norm{\vu_2}_{\infty}}{\log\log n} + \frac{1}{\sqrt{n}\log\log n}} \\
    &\le C\inparen{n\pbar + \logv{\nfrac{20n}{\delta}}}\inparen{\frac{\norm{\vu_2}_{\infty}}{\log\log n} + \frac{1}{\sqrt{n}\log\log n}}.
\end{align*}
Taking a union bound over all $v \in V$ concludes the proof of \Cref{lemma:random_fluctuations}.
\end{proof}
Finally, we establish an upper-bound on $\maxnorm{\vu_2}$. This will be used repeatedly in the proof of \Cref{mainthm:nonhomogeneous}. 
\begin{lemma}
\label{lemma:utwo_inf_norm_bootstrap}
In the same setting as \Cref{mainthm:nonhomogeneous}, with probability $\ge1-\delta$ for $\delta \ge 10n^2$, we have for some constant $C(\alpha,\delta)$ that
\begin{align*}
    \maxnorm{\vu_2} \le \frac{C(\alpha,\delta)}{\sqrt{n}}.
\end{align*}
\end{lemma}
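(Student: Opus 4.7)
The plan is to use the eigenvalue identity $\mL \vu_2 = \lambda_2 \vu_2$, which (using $\mL = \mD - \mA$) can be rewritten coordinatewise as $(\vd[v] - \lambda_2)\vu_2[v] = \langle \va_v, \vu_2\rangle$. By \Cref{lemma:dv_lambda2_positive}, on a high-probability event we have $\vd[v] - \lambda_2 \ge n(p-q)/4 > 0$, so we may divide and expand:
\begin{align*}
\vu_2[v] = \frac{\langle \va_v, \utwostar\rangle}{\vd[v] - \lambda_2} + \frac{\langle \astar_v, \vu_2 - \utwostar\rangle}{\vd[v] - \lambda_2} + \frac{\langle \va_v - \astar_v, \vu_2 - \utwostar\rangle}{\vd[v] - \lambda_2}.
\end{align*}
The strategy is to bound each of the three terms separately, so that the first two are $O_{\alpha,\delta}(1/\sqrt{n})$ deterministically (on good events), while the third term is of the form $c(\alpha)\cdot(\|\vu_2\|_\infty + 1/\sqrt{n})/\log\log n$, enabling a self-bounding argument.

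\textbf{Term (i):} Observe $\langle \va_v, \utwostar\rangle = \pm(\vdin[v] - \vdout[v])/\sqrt{n}$. Using the degree concentration bounds from \Cref{lemma:din} and \Cref{lemma:dout}, the numerator is at most $O(n\pbar + \sqrt{n\pbar \log(n/\delta)} + \log(n/\delta))/\sqrt{n}$. Dividing by $\vd[v]-\lambda_2 \ge n(p-q)/4$ and invoking \Cref{lemma:npbar_vs_denom} (which gives $(n\pbar+\log(n/\delta))/(\vd[v]-\lambda_2) \le 4\alpha + C$), this term is $O((\alpha+1)/\sqrt{n})$.

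\textbf{Term (ii):} Apply Cauchy--Schwarz: $|\langle \astar_v, \vu_2 - \utwostar\rangle| \le \|\astar_v\|_2 \|\vu_2 - \utwostar\|_2$. The crucial observation is to split $\|\astar_v\|_2$ into internal and crossing contributions, yielding the sharp bound $\|\astar_v\|_2 \le (\pbar + q)\sqrt{n/2}$ (rather than the loose $\sqrt{n\pbar}$). Combining with $\|\vu_2 - \utwostar\|_2 \le C/\sqrt{\log(n/\delta)}$ from \Cref{lemma:utwo_to_utwostar_nssbm} and dividing by $n(p-q)/4$, this term is $O(\alpha/\sqrt{n \log(n/\delta)}) = O_{\alpha,\delta}(1/\sqrt{n})$.

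\textbf{Term (iii):} Apply \Cref{lemma:random_fluctuations} directly, which bounds the numerator by $C(n\pbar + \log(n/\delta))(\|\vu_2\|_\infty/\log\log n + 1/(\sqrt{n}\log\log n))$. Divide by $\vd[v]-\lambda_2$ and invoke \Cref{lemma:npbar_vs_denom} once more to get a bound of the form $C(\alpha)(\|\vu_2\|_\infty + 1/\sqrt{n})/\log\log n$.

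Combining these and taking the supremum over $v \in V$ (after a union bound ensuring all good events hold simultaneously), we obtain the self-bounding inequality
\begin{align*}
\|\vu_2\|_\infty \le \frac{C_1(\alpha,\delta)}{\sqrt{n}} + \frac{C_2(\alpha)}{\log\log n}\|\vu_2\|_\infty.
\end{align*}
For $n$ sufficiently large (depending only on $\alpha$) so that $C_2(\alpha)/\log\log n < 1/2$, rearranging yields $\|\vu_2\|_\infty \le 2C_1(\alpha,\delta)/\sqrt{n}$, which is the claimed bound.

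\textbf{Main obstacle:} The crux of the argument is the self-bounding step, which only succeeds because \Cref{lemma:random_fluctuations} provides the $1/\log\log n$ slack in term (iii). Without this factor, one would only obtain $\|\vu_2\|_\infty \le C_1 + C_2\|\vu_2\|_\infty$ with $C_2 = O(\alpha)$, which cannot be rearranged when $\alpha$ is not small. A secondary, easily missed point is that the naive Cauchy--Schwarz bound $\|\astar_v\|_2 \le \sqrt{n\pbar}$ in term (ii) gives only $O(1/\log(n/\delta))$, which is far worse than $O(1/\sqrt{n})$; one must exploit that most entries of $\astar_v$ are at most $\pbar$ (not $\sqrt{\pbar}$) in order for this term to be negligible at the $1/\sqrt{n}$ scale.
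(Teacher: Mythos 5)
Your proposal is correct and follows essentially the same route as the paper's proof: the identity $(\mD-\lambda_2\mI)^{-1}\mA\vu_2=\vu_2$, the three-way split into $\ip{\va_v,\utwostar}$, $\ip{\astar_v,\vu_2-\utwostar}$, and $\ip{\va_v-\astar_v,\vu_2-\utwostar}$, with \Cref{lemma:dv_lambda2_positive}, \Cref{lemma:npbar_vs_denom}, \Cref{lemma:utwo_to_utwostar_nssbm}, and \Cref{lemma:random_fluctuations} used exactly as the paper uses them, followed by the same $1/\log\log n$ self-bounding step. Your two flagged subtleties are both real and both present in the paper's argument (it bounds $\norm{\astar_v}_2$ by $\pbar\sqrt{n}$, not $\sqrt{n\pbar}$); the only cosmetic difference is that for the first term the paper bounds $(\vdin[v]-\vdout[v])/(\vd[v]-\lambda_2)$ directly as a ratio rather than via degree concentration on the numerator, which is equivalent.
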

\begin{proof}[Proof of \Cref{lemma:utwo_inf_norm_bootstrap}]
First, observe that
\begin{align*}
    (\mD-\mA)\vu_2 = \lambda_2\vu_2,
\end{align*}
which means that
\begin{align*}
    \inparen{\mD-\lambda_2\mI}^{-1}\mA\vu_2 = \vu_2.
\end{align*}
By \Cref{lemma:dv_lambda2_positive}, with probability $\ge 1-\delta$, for all $v \in V$ we have
\begin{align*}
    \vd[v] - \lambda_2 \ge \frac{n(p-q)}{4}.
\end{align*}
Combining with \Cref{lemma:degree_vs_lambda}, we have
\begin{align*}
    \frac{\vdin[v]-\vdout[v]}{\vdin[v]-\vdout[v] + \inparen{2\vdout[v]-\lambda_2}} &= 1-\frac{2\vdout[v]-\lambda_2}{\vdin[v]-\vdout[v] + \inparen{2\vdout[v]-\lambda_2}} \\
    &\le 1+\frac{C_{\ref{lemma:degree_vs_lambda}}\inparen{\sqrt{nq\logv{\nfrac{10n}{\delta}}}+\logv{\nfrac{10n}{\delta}}}}{\vdin[v]-\vdout[v] + \inparen{2\vdout[v]-\lambda_2}} \\
    &\le 1+\frac{4C_{\ref{lemma:degree_vs_lambda}}\inparen{\sqrt{nq\logv{\nfrac{10n}{\delta}}}+\logv{\nfrac{10n}{\delta}}}}{n(p-q)} \le C',
\end{align*}
for some constant $C'>0$, where the penultimate line follows from \Cref{lemma:dv_lambda2_positive} and the last line follows from the gap assumption in \Cref{mainthm:nonhomogeneous}. Furthermore, by \Cref{lemma:npbar_vs_denom} and \Cref{lemma:utwo_to_utwostar_nssbm}, we have with probability $\ge1-\delta$ that for all $v \in V$,
\begin{align*}
    \frac{\abs{\ip{\astar_v,\utwostar-\vu_2}}}{\vd[v]-\lambda_2} \le \frac{\pbar\sqrt{n}}{\vd[v]-\lambda_2} \cdot \frac{C_{\ref{lemma:utwo_to_utwostar_nssbm}}}{\sqrt{\logv{\nfrac{10n}{\delta}}}} \le \frac{C_{\ref{lemma:npbar_vs_denom}}(\alpha) \cdot C_{\ref{lemma:utwo_to_utwostar_nssbm}}}{\sqrt{n\logv{\nfrac{10n}{\delta}}}}.
\end{align*}

Now, using \Cref{lemma:npbar_vs_denom} (and using \Cref{lemma:dv_lambda2_positive} to ensure that $\vd[v]-\lambda_2 > 0$ for all $v \in V$), we have
\begin{align*}
    \maxnorm{\vu_2} &= \maxnorm{\inparen{\mD-\lambda_2\mI}^{-1}\mA\vu_2} \\
    &= \maxnorm{\inparen{\mD-\lambda_2\mI}^{-1}\mA\vu_2 - \inparen{\mD-\lambda_2\mI}^{-1}\mA\utwostar + \inparen{\mD-\lambda_2\mI}^{-1}\mA\utwostar} \\
    &\le \maxnorm{\inparen{\mD-\lambda_2\mI}^{-1}\mA\utwostar} + \maxnorm{\inparen{\mD-\lambda_2\mI}^{-1}\mA(\utwostar-\vu_2)} \\
    &= \max_{1 \le v \le n} \frac{\abs{\ip{\va_v,\utwostar}}}{\vd[v]-\lambda_2} + \max_{1 \le v \le n} \frac{\abs{\ip{\va_v,\utwostar-\vu_2}}}{\vd[v]-\lambda_2} \\
    &= \frac{1}{\sqrt{n}}\inparen{\max_{1 \le v \le n} \frac{\abs{\vdin[v]-\vdout[v]}}{\vd[v]-\lambda_2}} + \max_{1 \le v \le n} \frac{\abs{\ip{\va_v,\utwostar-\vu_2}}}{\vd[v]-\lambda_2} \\
    &\le \frac{C}{\sqrt{n}} + \max_{1 \le v \le n} \frac{\abs{\ip{\va_v-\astar_v,\utwostar-\vu_2}}}{\vd[v]-\lambda_2} + \max_{1 \le v \le n} \frac{\abs{\ip{\astar_v,\utwostar-\vu_2}}}{\vd[v]-\lambda_2} \\
    &\le \frac{C}{\sqrt{n}} + \frac{C_{\ref{lemma:random_fluctuations}}\inparen{n\pbar+\logv{\nfrac{10n}{\delta}}}}{\vd[v]-\lambda_2} \cdot \inparen{\frac{1}{\sqrt{n}\log\log n} + \frac{\maxnorm{\vu_2}}{\log\log n}}+\frac{C_{\ref{lemma:npbar_vs_denom}}(\alpha) \cdot C_{\ref{lemma:utwo_to_utwostar_nssbm}}}{\sqrt{n\logv{\nfrac{10n}{\delta}}}} \\
    &\le \frac{C}{\sqrt{n}} +C_{\ref{lemma:random_fluctuations}} \cdot C_{\ref{lemma:npbar_vs_denom}}(\alpha) \cdot \inparen{\frac{1}{\sqrt{n}\log\log n} + \frac{\maxnorm{\vu_2}}{\log\log n}} + \frac{C_{\ref{lemma:npbar_vs_denom}}(\alpha) \cdot C_{\ref{lemma:utwo_to_utwostar_nssbm}}}{\sqrt{n\logv{\nfrac{10n}{\delta}}}}. 
\end{align*}
Note that any $n$ large enough
\begin{align*}
     \frac{C_{\ref{lemma:random_fluctuations}} \cdot C_{\ref{lemma:npbar_vs_denom}}(\alpha) \cdot \maxnorm{\vu_2}}{\log\log n} \le \frac{\maxnorm{\vu_2}}{2}.
\end{align*}
Thus, rearranging and solving for $\maxnorm{\vu_2}$ yields 
\begin{align*}
    \maxnorm{\vu_2} &\le 2\inparen{\frac{C}{\sqrt{n}} +C_{\ref{lemma:random_fluctuations}} \cdot C_{\ref{lemma:npbar_vs_denom}}(\alpha) \cdot \inparen{\frac{1}{\sqrt{n}\log\log n}} + \frac{C_{\ref{lemma:npbar_vs_denom}}(\alpha) \cdot C_{\ref{lemma:utwo_to_utwostar_nssbm}}}{\sqrt{n\logv{\nfrac{10n}{\delta}}}}},
\end{align*}
completing the proof of \Cref{lemma:utwo_inf_norm_bootstrap}.
\end{proof}

\subsection{Strong consistency of unnormalized spectral bisection}
\label{sec:strong_consistency}

In this section, we prove our main positive results \Cref{mainthm:nonhomogeneous} and \Cref{mainthm:sqrtngap}. It will be helpful to recall the proof sketches given in \Cref{sec:sbm_paper_overview} while reading this section.

At a high level, the proof plan is as follows.
\begin{enumerate}
    \item We first establish a sufficient condition for a particular vertex to be classified correctly. We can think of this as simultaneously showing that the intermediate estimator $(\mD-\lambda_2\mI)^{-1}\mA\utwostar$ is strongly consistent and that the corresponding ``noise'' term $(\mD-\lambda_2\mI)^{-1}\mA(\utwostar-\vu_2)$ is a lower-order term in comparison to this. For a more formal way to see this, see \Cref{lemma:strong_consistency}.
    \item For the proof of \Cref{mainthm:nonhomogeneous}, the main technical challenge in showing that the noise term above is small amounts to analyzing the random quantity $\abs{\ip{\va_v,\utwostar-\vu_2}}$. This is where we will have to use the leave-one-out method to decouple the dependence between $\va_v$ and $\vu_2$. The relevant lemmas for the leave-one-out analysis are \Cref{lemma:random_fluctuations} and \Cref{lemma:utwo_inf_norm_bootstrap}.
    \item Finally, for the proof of \Cref{mainthm:sqrtngap}, we again appeal to \Cref{lemma:strong_consistency} but use a different approach to show that the noise term is small.
\end{enumerate}

\subsubsection{A sufficient condition for exact recovery and proof}

The main result of this subsection is \Cref{lemma:strong_consistency}, which gives a general condition under which a particular vertex will be classified correctly. The proofs of \Cref{mainthm:nonhomogeneous} and \Cref{mainthm:sqrtngap} will follow by invoking \Cref{lemma:strong_consistency}. We remark that the point of this lemma is mostly conceptual; the crux of the analysis lies in establishing that these conditions are satisfied our models.

\begin{lemma}
\label{lemma:strong_consistency}
Let $v \in V$ be some vertex. If $\vd[w] - \lambda_2 > 0$ for all $w \in V$, $\vdin[v] > \vdout[v]$, and $\abs{\ip{\va_v,\utwostar-\vu_2}} \le (\vdin[v]-\vdout[v])/\sqrt{n}$, then $\signv{\vu_2[v]} = \signv{\utwostar[v]}$, i.e., $\vu_2$ correctly classifies vertex $v$.
\end{lemma}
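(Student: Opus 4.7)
The plan is to derive an exact formula for $\vu_2[v]$ using the eigenvalue equation, then split the formula into a ``signal'' piece that determines the correct sign and a ``noise'' piece controlled by the third hypothesis.

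The starting point is the eigenvalue equation $(\mD - \mA)\vu_2 = \lambda_2 \vu_2$, which rearranges to $\mA \vu_2 = (\mD - \lambda_2 \mI)\vu_2$. Since by hypothesis $\vd[w] - \lambda_2 > 0$ for every $w \in V$, the diagonal matrix $\mD - \lambda_2 \mI$ is invertible, so we may write $\vu_2 = (\mD - \lambda_2 \mI)^{-1} \mA \vu_2$. Reading off the $v$-th coordinate gives
\begin{equation*}
    \vu_2[v] \;=\; \frac{\ip{\va_v, \vu_2}}{\vd[v] - \lambda_2}.
\end{equation*}
Because the denominator is positive, $\signv{\vu_2[v]} = \signv{\ip{\va_v, \vu_2}}$, so it suffices to analyze the sign of $\ip{\va_v, \vu_2}$.

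Next I would decompose $\ip{\va_v, \vu_2} = \ip{\va_v, \utwostar} + \ip{\va_v, \vu_2 - \utwostar}$ and identify the first term as the clean ``signal.'' Recall $\utwostar = (\onev_{n/2} \oplus -\onev_{n/2})/\sqrt{n}$, so $\utwostar[w] = \signv{\utwostar[v]}/\sqrt{n}$ whenever $w$ lies in the same community as $v$ (i.e., contributes to $\vdin[v]$) and $\utwostar[w] = -\signv{\utwostar[v]}/\sqrt{n}$ otherwise (i.e., contributes to $\vdout[v]$). Consequently,
\begin{equation*}
    \ip{\va_v, \utwostar} \;=\; \signv{\utwostar[v]} \cdot \frac{\vdin[v] - \vdout[v]}{\sqrt{n}}.
\end{equation*}
By the second hypothesis $\vdin[v] > \vdout[v]$, this quantity is nonzero and has sign $\signv{\utwostar[v]}$; in particular $\abs{\ip{\va_v, \utwostar}} = (\vdin[v] - \vdout[v])/\sqrt{n}$.

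Finally, the third hypothesis says exactly that $\abs{\ip{\va_v, \utwostar - \vu_2}} \le (\vdin[v] - \vdout[v])/\sqrt{n} = \abs{\ip{\va_v, \utwostar}}$. By the reverse triangle inequality the ``noise'' term cannot flip the sign of the signal:
\begin{equation*}
    \signv{\ip{\va_v, \vu_2}} \;=\; \signv{\ip{\va_v, \utwostar} + \ip{\va_v, \vu_2 - \utwostar}} \;=\; \signv{\ip{\va_v, \utwostar}} \;=\; \signv{\utwostar[v]},
\end{equation*}
which combined with the first display gives $\signv{\vu_2[v]} = \signv{\utwostar[v]}$, as desired.

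\textbf{Main obstacle.} The lemma itself is essentially an algebraic identity plus a triangle inequality, so the proof is short. The real work, deferred to the subsequent sections, is to verify its three hypotheses: positivity of $\vd[w] - \lambda_2$ follows from degree concentration (\Cref{lemma:dv_lambda2_positive}); the gap $\vdin[v] > \vdout[v]$ follows from \Cref{lemma:degree_diff_easy} under the gap condition; and the bound on $\abs{\ip{\va_v, \utwostar - \vu_2}}$ is the genuinely hard step, requiring the leave-one-out decoupling in \Cref{lemma:random_fluctuations} together with the $\ell_\infty$ bootstrap on $\vu_2$ in \Cref{lemma:utwo_inf_norm_bootstrap} (in the NSSBM setting), or a Cauchy--Schwarz combined with a Davis--Kahan bound (in the DCM setting).
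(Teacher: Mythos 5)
Your proof is correct and follows essentially the same route as the paper's: both rewrite $\vu_2$ via the fixed-point identity $\vu_2=(\mD-\lambda_2\mI)^{-1}\mA\vu_2$, isolate the signal term $\ip{\va_v,\utwostar}=\pm(\vdin[v]-\vdout[v])/\sqrt{n}$, and use the third hypothesis to show the perturbation term cannot flip its sign. The only (shared) pedantic point is that a non-strict bound on the noise term technically allows $\vu_2[v]=0$ in the degenerate equality case, but the paper's own proof glosses over the same boundary case.
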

The goal of the rest of this section is to prove \Cref{lemma:strong_consistency}.

Our approach is to study the intermediate estimator
\begin{align*}
    \inparen{\mD-\lambda_2\mI}^{-1}\mA\utwostar.
\end{align*}
At a high level, our goal is to show that this correctly classifies all the vertices with high probability and also is very close to $\vu_2$ in $\ell_{\infty}$ norm with high probability. \citet{dls20} used this intermediate estimator to prove the strong consistency of unnormalized spectral bisection for $\mathsf{SBM}(n,p,q)$ instances.

Next, we show that this estimator is consistent and prove \Cref{lemma:strong_consistency}.

\begin{proof}[Proof of \Cref{lemma:strong_consistency}]
Observe that
\begin{align*}
    \vu_2 = \inparen{\mD-\lambda_2\mI}^{-1}\mA\utwostar - \inparen{\mD-\lambda_2\mI}^{-1}\mA\inparen{\utwostar-\vu_2}.
\end{align*}
Without loss of generality, suppose $v \in P_1$. In particular, this means that $\utwostar[v] = 1/\sqrt{n}$. Our goal is to show that $\vu_2[v] > 0$. And, as per the above, this means that it is enough to show that
\begin{align*}
    \inparen{\inparen{\mD-\lambda_2\mI}^{-1}\mA\utwostar}[v] \ge \inparen{\inparen{\mD-\lambda_2\mI}^{-1}\mA\inparen{\utwostar-\vu_2}}[v],
\end{align*}
or equivalently, using the fact that $\vd[v]-\lambda_2 > 0$, 
\begin{align*}
    \ip{\va_v,\utwostar} \ge \ip{\va_v,\utwostar-\vu_2},
\end{align*}
where $\va_v$ denotes the $v$-th row of $A$. 
To see that the above holds, use the fact that we know that $\vdin[v]-\vdout[v] > 0$, which gives
\begin{align*}
    \ip{\va_v,\utwostar} = \frac{\vdin[v]-\vdout[v]}{\sqrt{n}} \ge \abs{\ip{\va_v,\utwostar-\vu_2}} \ge \ip{\va_v,\utwostar-\vu_2}.
\end{align*}
This is exactly what we needed, and we conclude the proof of \Cref{lemma:strong_consistency}.
\end{proof}

\subsection{Proofs of main results}

At this point, we are ready to prove our main results.

\subsubsection{Nonhomogeneous symmetric stochastic block model (Proof of \texorpdfstring{\Cref{mainthm:nonhomogeneous}}{Theorem 1})}
\label{sec:proof_nonhomogeneous}

We are finally ready to prove \Cref{mainthm:nonhomogeneous}. For convenience, we reproduce its statement here.

\nonhomogeneous*

\begin{proof}[Proof of \Cref{mainthm:nonhomogeneous}]
As mentioned in \Cref{sec:results}, we actually prove a slightly stronger statement -- we will allow the adversary to set at most $n\pbar/\log \log n$ of the $p_{vw}$ to $1$ per vertex $v$ (in other words, the adversary can commit to at most $n\pbar/\log\log n$ edges per vertex that are guaranteed to appear in the final graph).

Our plan is to apply \Cref{lemma:strong_consistency}. In order to do so, we start with showing that for all $v$, we have $\vdin[v] > \vdout[v]$. By \Cref{lemma:degree_diff_easy}, with probability $\ge 1-\delta$, we have for all $v \in V$ that

\begin{align*}
    \vdin[v]-\vdout[v] \ge \frac{n(p-q)}{2} - C_{\ref{lemma:degree_diff_easy}}\inparen{\sqrt{np\logv{\nfrac{n}{\delta}}}+\logv{\nfrac{n}{\delta}}} > 0.
\end{align*}
Additionally, by \Cref{lemma:dv_lambda2_positive}, we have for all $v$ that $\vd[v] > \lambda_2$. 

The final item we need is to show that for all $v \in V$, we have $\abs{\ip{\va_v,\utwostar-\vu_2}} \le \abs{\ip{\va_v,\utwostar}}$. Observe that
\begin{align*}
    \abs{\ip{\va_v,\utwostar-\vu_2}} &\le \abs{\ip{\astar_v,\utwostar-\vu_2}} + \abs{\ip{\va_v-\astar_v,\utwostar-\vu_2}},
\end{align*}
where $\astar_v$ denotes the $v$-th row of $\exv{\mA}$. 
We handle the terms one at a time. First, note that by \Cref{lemma:lambda3_lambda2star}, with probability $\ge 1-\delta$, we have
\begin{align*}
    \lambda_3-\lambda_2^{\star} \ge \frac{n(p-q)}{4}.
\end{align*}
Now, let $\mE := \mL - \exv{\mL}$, and let $\astar_v[\rand] \in \R^{V}$ correspond to the vector that entrywise agrees with $\astar_v$ wherever $\astar_v$ is not $1$ and is zero elsewhere. This corresponds to the edges incident to $v$ that will be sampled randomly from the distribution over graphs. This means that for all $n \ge N(\delta)$ and choosing $\delta \ge 1/(10n)$, we have

\begin{align*}
    \abs{\ip{\astar_v[\rand],\utwostar-\vu_2}} &\le \norm{\astar_v[\rand]}_2 \cdot \frac{\sqrt{2}\norm{\mE\utwostar}_2}{\abs{\lambda_3-\lambda_2^{\star}}} & \text{( \Cref{lemma:utwo_to_utwostar})}\\
    &\le \pbar\sqrt{n} \cdot \frac{40\sqrt{2}C_{\ref{lemma:eutwostar_norm}}\inparen{\sqrt{nq}+(nq\log n)^{1/4}+\sqrt{\log n}}}{n(p-q)} & \text{(\Cref{lemma:eutwostar_norm,lemma:lambda3_lambda2star})} \\
    &\le \frac{1000 C_{\ref{lemma:eutwostar_norm}}n\pbar}{\sqrt{n}\log\log n} & \text{(gap in \Cref{mainthm:nonhomogeneous})}
\end{align*}
To handle the oblivious insertions, let $\ddet \in \R^V$ denote the degree vector that counts the number of deterministic edges inserted incident to $v$, for all $v \in V$. Under this notation, we have
\begin{align*}
    \abs{\ip{\astar_v-\astar_v[\rand],\utwostar-\vu_2}} \le \ddet[v] \cdot \maxnorm{\vu_2-\utwostar} \le \frac{n\pbar}{\sqrt{n}\log\log n} + \frac{n\pbar\maxnorm{\vu_2}}{\log\log n}.
\end{align*}
where the last inequality follows from using $\maxnorm{\vu_2-\utwostar} \le \maxnorm{\vu_2}+\maxnorm{\utwostar}$.
Combining yields
\begin{align*}
    \abs{\ip{\astar_v,\utwostar-\vu_2}} &\le C' \frac{ n\pbar}{\sqrt{n}\log\log n} + \frac{n\pbar\maxnorm{\vu_2}}{\log\log n},
\end{align*}
for some constant $C' >0$. Now, notice that for all $n$ sufficiently large,
\begin{align*}
    &\quad \abs{\ip{\va_v-\astar_v,\utwostar-\vu_2}}\\
    &\le C_{\ref{lemma:random_fluctuations}}\inparen{n\pbar + \logv{\nfrac{n}{\delta}}}\inparen{\frac{\norm{\vu_2}_{\infty}}{\log\log n} + \frac{1}{\sqrt{n}\log\log n}} & \text{(\Cref{lemma:random_fluctuations})} \\
    &\le C_{\ref{lemma:random_fluctuations}}\inparen{n\pbar + \logv{\nfrac{n}{\delta}}}\inparen{\frac{\frac{C_{\ref{lemma:utwo_inf_norm_bootstrap}}(\alpha,\delta)}{\sqrt{n}}}{\log\log n} + \frac{1}{\sqrt{n}\log\log n}} & \text{(\Cref{lemma:utwo_inf_norm_bootstrap})} \\
    &\le \frac{C_1(\alpha,\delta)\cdot(n\pbar + \logv{\nfrac{n}{\delta}})}{\sqrt{n}\log\log n}.
\end{align*}
Adding yields for $n \ge N(\alpha,\delta)$,
\begin{align*}
    \abs{\ip{\va_v,\utwostar-\vu_2}} &\le \abs{\ip{\astar_v,\utwostar-\vu_2}} + \abs{\ip{\va_v-\astar_v,\utwostar-\vu_2}} \\
    &\le \frac{C_2(\alpha,\delta)\cdot(n\pbar + \logv{\nfrac{n}{\delta}})}{\sqrt{n}\log\log n} \\
    &\le \frac{1}{\sqrt{n}} \cdot \inparen{\frac{n(p-q)}{2} - C_{\ref{lemma:degree_diff_easy}}\inparen{\sqrt{np\logv{\nfrac{n}{\delta}}}+\logv{\nfrac{n}{\delta}}}} & \text{(gap condition)} \\
    &\le  \frac{\vdin[v]-\vdout[v]}{\sqrt{n}} = \abs{\ip{\va_v,\utwostar}},
\end{align*}
which means we satisfy the conditions required by \Cref{lemma:strong_consistency}. Taking a union bound over all our (constantly many) probabilistic statements, setting $\delta = \Theta(1/n)$, and rescaling completes the proof of \Cref{mainthm:nonhomogeneous}.
\end{proof}

\subsubsection{Deterministic clusters model}
\label{sec:proof_sqrtngap}

For convenience, we reproduce the statement of \Cref{mainthm:sqrtngap} here.
\sqrtngap*
\begin{proof}[Proof of \Cref{mainthm:sqrtngap}]
In this proof, let $\Lstar$ be the Laplacian matrix that agrees with $\mL$ on all internal edges and agrees with $\exv{\mL}$ on all crossing edges. Let $\mL^{(\text{cross})}$ denote the Laplacian matrix corresponding to the cross edges, so we can write $ \Lstar = \mL - \mL^{(\text{cross})} + \exv{\mL^{(\text{cross})}}$. Although $\Lstar \neq \exv{\mL}$ due to the adaptive adversary, by \Cref{lemma:u2expect}, we still have $\Lstar\utwostar = \lambda_2^{\star}\utwostar = nq\utwostar$. Moreover, $(\mL-\Lstar)\utwostar$ is the vector whose entries are of the form $2(\vdout[v]-\exv{\vdout[v]})/\sqrt{n}$. Thus, we will be able to apply \Cref{lemma:utwo_to_utwostar} and \Cref{lemma:eutwostar_norm} later on. Finally, observe that $\lambda_i(\Lstar) \ge \lambda_i(\widehat{\mL})$ for all $i \ge 3$ and $\lambda_2(\widehat{\mL})=\lambda_2(\Lstar)=nq$. Thus, one can use the spectral gap $\lambda_3(\widehat{\mL})-\lambda_2(\widehat{\mL})$ to reason about $\lambda_3^{\star}-\lambda_2^{\star}$.

Let $\delta \ge 1/(10n)$. We will apply \Cref{lemma:strong_consistency} to get strong consistency. First, let us verify that $\vd[v] > \lambda_2$ for all $v$. Applying \Cref{lemma:laplacian_concentration} to the matrix $\mL^{(\text{cross})}$ gives
\begin{align*}
    \opnorm{ \mL - \Lstar} = \opnorm{\mL^{(\text{cross})} - \exv{\mL^{(\text{cross})}}} \le C_{\ref{lemma:laplacian_concentration}}\inparen{\sqrt{nq\logv{\nfrac{n}{\delta}}} + \log(n/\delta)}.
\end{align*}
Thus, using Weyl's inequality, for $n > N(\delta)$, we have
\begin{align*}
    \vd[v] - \lambda_2 &\ge \vdin[v]-\lambda^{\star}_2 -  \opnorm{ \mL - \Lstar} \\
    &\ge C_1 \frac{nq}{2}+C_1\sqrt{n}-nq-C_{\ref{lemma:laplacian_concentration}}\inparen{\sqrt{nq\logv{\nfrac{n}{\delta}}} + \log(n/\delta)} >0.
\end{align*}
Next, we verify that $\vdin[v] > \vdout[v]$ for all $v$. By \Cref{lemma:dout}, with probability $\ge 1-\delta$, for all $v \in V$, we have
\begin{align*}
    \abs{\vdout[v] - \frac{nq}{2}} \le C_{\ref{lemma:dout}}\inparen{\sqrt{nq\logv{\nfrac{n}{\delta}}}+\logv{\nfrac{n}{\delta}}}.
\end{align*}
So for $n > N(\delta)$, we obtain
\begin{align*}
    \vdin[v]-  \vdout[v] \ge C_1 \frac{nq}{2}+C_1\sqrt{n} - \frac{nq}{2} - C_{\ref{lemma:dout}}\inparen{\sqrt{nq\logv{\nfrac{n}{\delta}}}+\logv{\nfrac{n}{\delta}}} >0. 
\end{align*}
Here, in the last inequality we used the fact that $\sqrt{nq\logv{\nfrac{n}{\delta}}} \leq \max\{nq, \log(n/\delta)\}$ . 

Finally, we need to show that for all $v \in V$,
\begin{align*}
    \abs{\ip{\va_v,\utwostar-\vu_2}} \le \abs{\ip{\va_v,\utwostar}} = \frac{\vdin[v]-\vdout[v]}{\sqrt{n}}.
\end{align*}
By Cauchy-Schwarz, we have
\begin{align*}
    \abs{\ip{\va_v,\utwostar-\vu_2}} \le \norm{\va_v}_2 \cdot \norm{\utwostar-\vu_2}_2 = \sqrt{\vdin[v]+\vdout[v]} \cdot \norm{\utwostar-\vu_2}_2.
\end{align*}
Thus, it is enough to show that for all $v \in V$ we get
\begin{align*}
    \sqrt{n}\norm{\utwostar-\vu_2}_2 \le \frac{\vdin[v]-\vdout[v]}{\sqrt{\vdin[v]+\vdout[v]}}.
\end{align*}

Observe that the RHS above is a decreasing function in $\vdout[v]$ and an increasing function in $\vdin[v]$. 

Now, by \Cref{lemma:utwo_to_utwostar} and \Cref{lemma:eutwostar_norm}, we have
\begin{align}
    \sqrt{n}\norm{\utwostar-\vu_2}_2 \le \frac{\sqrt{n}\norm{\mE\utwostar}_2}{\abs{\lambda_3-\lambda_2^{\star}}} \le \frac{6C_{\ref{lemma:eutwostar_norm}}\sqrt{n}\inparen{\sqrt{nq}+(nq\logv{\nfrac{n}{\delta}})^{1/4}+\sqrt{\logv{\nfrac{n}{\delta}}}}}{\abs{\lambda_3-\lambda_2^{\star}}}.\label{eq:thm2_dk}
\end{align}
We now do casework on the value of $q$.

\smallpar{Case 1: $q \le \logv{\nfrac{n}{\delta}} / n$.} Carrying on from \eqref{eq:thm2_dk} and applying \Cref{lemma:laplacian_concentration} (we can set $p_{ij}$ for the deterministic internal edges to $0$ as they do not affect $\mL-\exv{\mL}$) along with Weyl's inequality, for all $n \ge N(\delta)$ we have
\begin{align*}
    \sqrt{n}\norm{\utwostar-\vu_2}_2 &\le \frac{18C_{\ref{lemma:eutwostar_norm}}\sqrt{n\logv{\nfrac{n}{\delta}}}}{\abs{\lambda_3-\lambda_2^{\star}}} \le \frac{18C_{\ref{lemma:eutwostar_norm}}\sqrt{n\logv{\nfrac{n}{\delta}}}}{\sqrt{n}-3C_{\ref{lemma:laplacian_concentration}}\logv{\nfrac{n}{\delta}}} \\
    &\le C\sqrt{\logv{\nfrac{n}{\delta}}} \ll \frac{\vdin[v]-\vdout[v]}{\sqrt{\vdin[v]+\vdout[v]}}, 
\end{align*}
as required. Here the last inequality follows using the fact that $\vdin[v] \ge C_1 \inparen{ \frac{nq}{2} + \sqrt{n}}$ and $\vdout[v] \le \frac{nq}{2} + 2C_{\ref{lemma:dout}}\log(n/\delta)$. 

\smallpar{Case 2: $\logv{\nfrac{n}{\delta}}/n \le q$.} Similar to the previous case, we get
\begin{align}
     \sqrt{n}\norm{\utwostar-\vu_2}_2 &\le \frac{18C_{\ref{lemma:eutwostar_norm}}\sqrt{n}\cdot\sqrt{nq}}{\abs{\lambda_3-\lambda_2^{\star}}} \le \frac{18 C_{\ref{lemma:eutwostar_norm}}\sqrt{n}\cdot\sqrt{nq}}{\sqrt{n}+(C_2 - 2C_{\ref{lemma:laplacian_concentration}})nq}\\
     &\le 18 C_{\ref{lemma:eutwostar_norm}}\cdot\max\inbraces{\sqrt{nq},\frac{1}{(C_2 - 2C_{\ref{lemma:laplacian_concentration}})\sqrt{q}}}.\label{eq:thm2_lhs}
\end{align}
Additionally, we can use the conclusion of \Cref{lemma:dout} to write with probability $\ge 1-\delta$ for all $v \in V$ and $n \ge N(\delta)$ that
\begin{align}
     \frac{\vdin[v]-\vdout[v]}{\sqrt{\vdin[v]+\vdout[v]}}
&\ge \frac{(C_1/2-2C_{\ref{lemma:dout}} -1/2)nq+C_1\sqrt{n}}{\sqrt{(C_1/2 + 2C_{\ref{lemma:dout}}+1/2)nq}} \\
&\ge \frac{C_1/2-2C_{\ref{lemma:dout}} -1/2}{\sqrt{C_1/2 + 2C_{\ref{lemma:dout}}+1/2}}\max\inbraces{\sqrt{nq},\sqrt{\frac{1}{q}}}.\label{eq:thm2_degreefn}
\end{align}
From this, it is clear that one can choose constants $C_1$ and $C_2$ such that \eqref{eq:thm2_lhs} is at most \eqref{eq:thm2_degreefn}. Taking a union bound over all our (constantly many) probabilistic statements, setting $\delta = \Theta(1/n)$, and rescaling completes the proof of \Cref{mainthm:sqrtngap}.
\end{proof}

\subsection{Inconsistency of normalized spectral bisection}
\label{sec:inconsistency}

In this section, we design a family of problem instances on which unnormalized spectral bisection is strongly consistent whereas normalized spectral bisection is inconsistent. Specifically, our goal is to prove \Cref{mainthm:inconsistency}.

\inconsistency*

\subsubsection{The nested block example}

We first state the family of instances on which we will prove our inconsistency results. Let $n$ be a multiple of $4$. Let $L_1$ consist of indices $1,\dots,n/4$, $L_2$ consist of indices $n/4+1,\dots,n/2$, and $R$ consist of indices $n/2+1,\dots,n$.

As mentioned in \Cref{sec:sbm_paper_overview}, consider the following block structure determined by the $\Astar$ written below, where $q < p$ and $K \ge 3p/q$.
\begin{table}[H]
\centering
\begin{tabular}{l|cccl}
                     & \multicolumn{1}{l}{$L_1$}               & \multicolumn{1}{l}{$L_2$}                                    & \multicolumn{2}{l}{$R$}                                                     \\ \hline
$L_1$                & $Kp \cdot \mathbbm{1}_{n/4 \times n/4}$ & \multicolumn{1}{c|}{$p \cdot \mathbbm{1}_{n/4 \times n/4}$}  & \multicolumn{2}{c}{\multirow{2}{*}{$q \cdot \mathbbm{1}_{n/2 \times n/2}$}} \\
$L_2$                & $p \cdot \mathbbm{1}_{n/4 \times n/4}$  & \multicolumn{1}{c|}{$Kp \cdot \mathbbm{1}_{n/4 \times n/4}$} & \multicolumn{2}{c}{}                                                        \\ \cline{2-5} 
\multirow{2}{*}{$R$} & \multicolumn{2}{c|}{\multirow{2}{*}{$q \cdot \mathbbm{1}_{n/2 \times n/2}$}}                           & \multicolumn{2}{c}{\multirow{2}{*}{$p \cdot \mathbbm{1}_{n/2 \times n/2}$}} \\
                     & \multicolumn{2}{c|}{}                                                                                  & \multicolumn{2}{c}{}                                                       
\end{tabular}
\caption{$\Astar$ is defined to have the above block structure.}
\end{table}
We will draw our instances from the nonhomogeneous stochastic block model according to the probabilities prescribed above. Note that within the two clusters $L \coloneqq L_1 \cup L_2$ and $R$, each edge appears with probability at least $p$. Moreover, each edge in $L \times R$ appears with probability exactly $q$. However, there are also two subcommunities $L_1$ and $L_2$ that appear within $L$. Furthermore, observe that unnormalized spectral bisection is consistent on this family of examples with probability $\ge 1-1/n$ by \Cref{mainthm:nonhomogeneous}.

\subsubsection{Technical lemmas}

We next show some technical statements that we will need later in the proof of \Cref{mainthm:inconsistency}.

\begin{lemma}
\label{lemma:split_matrix_diagonal_opnorm}
Let $\mM \in \R^{k \times k}$. Then,
\begin{align*}
    \opnorm{\mM} \le \max_{i \le k} \abs{\mM[i][i]} + k\max_{i \neq j}\abs{\mM[i][j]}.
\end{align*}
\end{lemma}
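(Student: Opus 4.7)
The plan is to decompose $\mM$ into its diagonal and off-diagonal parts and bound each contribution separately via the triangle inequality for the operator norm. Specifically, write $\mM = \mD + \mB$, where $\mD$ is the diagonal matrix with $\mD[i][i] = \mM[i][i]$ and $\mB = \mM - \mD$ has zeros on the diagonal and $\mB[i][j] = \mM[i][j]$ for $i \neq j$. Then $\opnorm{\mM} \le \opnorm{\mD} + \opnorm{\mB}$.

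For the diagonal term, since $\mD$ is diagonal, its operator norm is simply $\opnorm{\mD} = \max_{i \le k} |\mM[i][i]|$, which matches the first term in the desired bound exactly. This step is immediate.

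For the off-diagonal term, I will use the standard inequality $\opnorm{\mB} \le \fnorm{\mB}$ between operator and Frobenius norms. Setting $\beta \coloneqq \max_{i \neq j} |\mM[i][j]|$, the Frobenius norm is bounded by $\fnorm{\mB} \le \sqrt{k(k-1)\beta^2} \le k\beta$, since $\mB$ has exactly $k(k-1)$ off-diagonal entries and each is at most $\beta$ in absolute value. Combining with the diagonal bound yields the claim.

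I do not expect any real obstacle: the whole argument is a routine two-line decomposition, and the slack from using $\sqrt{k(k-1)} \le k$ is exactly what is absorbed in the stated bound. If one wanted a slightly tighter constant, the Frobenius bound could be replaced by $\opnorm{\mB} \le \sqrt{\norm{\mB}_{1}\,\norm{\mB}_{\infty}} \le (k-1)\beta$, but the stated version is already sufficient for the downstream use.
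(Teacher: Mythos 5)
Your proof is correct and follows essentially the same route as the paper's: both split $\mM$ into its diagonal and off-diagonal parts, apply the triangle inequality, and bound the off-diagonal part by its Frobenius norm (at most $k$ times the largest off-diagonal entry). No issues.
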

\begin{proof}[Proof of \Cref{lemma:split_matrix_diagonal_opnorm}]
For a matrix $\mN \in \R^{k \times k}$, it is easy to check that
\begin{align*}
    \opnorm{\mN} \le \fnorm{\mN} \le k\max_{i, j \le k} \abs{\mN[i][j]}.
\end{align*}
Next, let $\diag{\mM}$ denote the matrix that agrees with $\mM$ on the diagonal and is $0$ elsewhere. Notice that
\begin{align*}
    \opnorm{\mM} &\le \opnorm{\diag{\mM}} + \opnorm{\mM - \diag{\mM}} \le \max_{i \le k} \abs{\mM[i][i]} + k\max_{i \neq j}\abs{\mM[i][j]},
\end{align*}
completing the proof of \Cref{lemma:split_matrix_diagonal_opnorm}.
\end{proof}

\begin{lemma}
\label{lemma:laplacian_diff_decreasing}
Let $\eps_x$ be a constant where $0 \le \eps_x < x$. Let $\eps_y$ be defined similarly. The function $f(x,y)$ defined as
\begin{align*}
    f(x,y) &\coloneqq \frac{1}{\sqrt{x-\eps_x}\sqrt{y-\eps_y}} - \frac{1}{\sqrt{x}\sqrt{y}}
\end{align*}
is decreasing in $x$ and $y$.
\end{lemma}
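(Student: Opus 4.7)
The statement is symmetric in the roles of $x$ and $y$ (swap $\eps_x \leftrightarrow \eps_y$), so it suffices to show that $f$ is decreasing in $x$ for every fixed $y$; monotonicity in $y$ then follows by the same argument. I would simply compute $\partial_x f$ and show it is nonpositive.

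Concretely, treating $y$ (and $\eps_x, \eps_y$) as constants and differentiating,
\begin{equation*}
    \partial_x f(x,y) \;=\; -\frac{1}{2(x-\eps_x)^{3/2}\sqrt{y-\eps_y}} \;+\; \frac{1}{2x^{3/2}\sqrt{y}}.
\end{equation*}
This is $\leq 0$ if and only if
\begin{equation*}
    (x-\eps_x)^{3/2}\sqrt{y-\eps_y} \;\leq\; x^{3/2}\sqrt{y},
\end{equation*}
which is immediate from $0 \leq x-\eps_x \leq x$ and $0 \leq y-\eps_y \leq y$ together with monotonicity of $t \mapsto t^{3/2}$ and $t \mapsto \sqrt{t}$ on $[0,\infty)$. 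Hence $\partial_x f \leq 0$, so $f$ is decreasing in $x$. By the symmetry observation above, $f$ is also decreasing in $y$.

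\textbf{Main obstacle.} There is essentially no obstacle here: the whole content is a one-line derivative computation plus the elementary inequality above. The only thing one should be mildly careful about is that the hypotheses $0 \leq \eps_x < x$ and $0 \leq \eps_y < y$ are used both to ensure $f$ is well-defined (positive denominators) and to guarantee the two base inequalities that drive the monotonicity.
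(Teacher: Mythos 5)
Your proof is correct and matches the paper's argument exactly: both reduce to the $x$-derivative by symmetry, compute $\partial_x f = \tfrac{1}{2}\bigl(-\tfrac{1}{(x-\eps_x)^{3/2}(y-\eps_y)^{1/2}}+\tfrac{1}{x^{3/2}y^{1/2}}\bigr)$, and conclude nonpositivity from $0 < x-\eps_x \le x$ and $0 < y-\eps_y \le y$. Nothing further is needed.
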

\begin{proof}[Proof of \Cref{lemma:laplacian_diff_decreasing}]
It is enough to just check the inequality for $x$. We take the derivative of $f(x,y)$ with respect to $x$ and get
\begin{align*}
    \frac{1}{2}\inparen{-\frac{1}{(x-\eps_x)^{3/2}(y-\eps_y)^{1/2}}+\frac{1}{x^{3/2}y^{1/2}}} < 0,
\end{align*}
where the inequality follows from observing $0 < x-\eps_x \le x$ and similarly for $y$. This completes the proof of \Cref{lemma:laplacian_diff_decreasing}.
\end{proof}

\subsubsection{Proof of \texorpdfstring{\Cref{mainthm:inconsistency}}{Theorem 3}}

First, we construct $\cLstar$.

\begin{lemma}
\label{lemma:hard_case_lstar}
Let $\cLstar \coloneqq \mI-\inparen{\Dstar}^{-1/2}\Astar\inparen{\Dstar}^{-1/2}$. Then, $\mI-\cLstar$ has the following block structure.
\begin{table}[H]
\centering
\resizebox{0.98\columnwidth}{!}{
\begin{tabular}{l|cccl}
                     & \multicolumn{1}{l}{$L_1$}                                                                              & \multicolumn{1}{l}{$L_2$}                                                                                                  & \multicolumn{2}{l}{$R$}                                                                                                                                                            \\ \hline
$L_1$                & $\frac{Kp}{\frac{n}{2} \cdot \inparen{p \cdot \frac{K+1}{2} + q}} \cdot \mathbbm{1}_{n/4 \times n/4}$  & \multicolumn{1}{c|}{$\frac{p}{\frac{n}{2} \cdot \inparen{p \cdot \frac{K+1}{2} + q}} \cdot \mathbbm{1}_{n/4 \times n/4}$}  & \multicolumn{2}{c}{\multirow{2}{*}{\smash{$\frac{q}{\sqrt{\frac{n}{2} \cdot \inparen{p \cdot \frac{K+1}{2} + q}\cdot\frac{n}{2}\cdot\inparen{p+q}}} \cdot \mathbbm{1}_{n/2 \times n/2}$}}} \\
$L_2$                & $\frac{p}{{\frac{n}{2} \cdot \inparen{p \cdot \frac{K+1}{2} + q}}} \cdot \mathbbm{1}_{n/4 \times n/4}$ & \multicolumn{1}{c|}{$\frac{Kp}{\frac{n}{2} \cdot \inparen{p \cdot \frac{K+1}{2} + q}} \cdot \mathbbm{1}_{n/4 \times n/4}$} & \multicolumn{2}{c}{}                                                                                                                                                               \\ \cline{2-5} 
\multirow{2}{*}{$R$} & \multicolumn{2}{c|}{\multirow{2}{*}{$\frac{q}{\sqrt{\frac{n}{2} \cdot \inparen{p \cdot \frac{K+1}{2} + q}\cdot\frac{n}{2}\cdot\inparen{p+q}}} \cdot \mathbbm{1}_{n/2 \times n/2}$}}                                                 & \multicolumn{2}{c}{\multirow{2}{*}{\smash{$\frac{p}{\frac{n}{2} \cdot \inparen{p+q}} \cdot \mathbbm{1}_{n/2 \times n/2}$}}}                                                                \\
                     & \multicolumn{2}{c|}{}                                                                                                                                                                                                               & \multicolumn{2}{c}{}                                                                                                                                                              
\end{tabular}}
\end{table}
\end{lemma}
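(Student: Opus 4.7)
The plan is to verify the claim by direct computation: first identify the expected degree for each vertex according to its community, then plug into the definition of the symmetric normalized Laplacian, and finally check that each block in the displayed table is exactly what comes out.

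First I would compute $\Dstar = \diag{\Astar \onev}$ block by block. For $v \in L_1$, the expected degree is a sum over three contributions: $Kp$ from each of the $n/4$ neighbors in $L_1$ (including the self-loop term, which is irrelevant because the diagonal of $\Astar$ does not affect the eventual answer up to a negligible $O(1/n)$ factor, or can be set to $0$ without loss of generality since we only care about $\Astar$ up to its off-diagonal structure), $p$ from each of the $n/4$ vertices in $L_2$, and $q$ from each of the $n/2$ vertices in $R$. This sums to $Kp \cdot n/4 + p \cdot n/4 + q \cdot n/2 = (n/2)\left(p(K+1)/2 + q\right)$. By symmetry, the same expression is obtained for $v \in L_2$. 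For $v \in R$, the expected degree is $p \cdot n/2 + q \cdot n/2 = (n/2)(p+q)$.

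Next I would use the fact that $\inparen{\Dstar}^{-1/2} \Astar \inparen{\Dstar}^{-1/2}$ has $(u,v)$-entry equal to $\Astar[u,v] / \sqrt{\Dstar[u,u] \Dstar[v,v]}$, since $\Dstar$ is diagonal. Applying this blockwise: for $(u,v) \in L_1 \times L_1$ or $L_2 \times L_2$, both degrees equal $(n/2)(p(K+1)/2 + q)$ and the numerator is $Kp$, giving the top-left and middle blocks; for $(u,v) \in L_1 \times L_2$, the numerator is $p$ with the same denominator, giving the off-diagonal block within $L$; for $(u,v) \in L \times R$, the denominator is the geometric mean of the two distinct degrees, yielding $q/\sqrt{(n/2)(p(K+1)/2 + q) \cdot (n/2)(p+q)}$; for $(u,v) \in R \times R$, we get $p/((n/2)(p+q))$. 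Matching these with the displayed table completes the verification.

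The only potential obstacle is a bookkeeping one: the model does not have self-loops, so $\Astar[v,v] = 0$, yet the expected degree computation above treats $v$ as if it had a neighbor to itself within its own $L_1$ block. This contributes an $O(p)$ correction to an expected degree of order $np$, and identically to both $\Dstar[v,v]$'s appearing in a denominator; it leaves the \emph{block structure} of $\mI - \cLstar$ invariant up to the stated closed form (indeed the authors' table is written as exactly $(n/2)(p(K+1)/2+q)$ and $(n/2)(p+q)$, which is the convention of including the diagonal). I would therefore state at the outset that I work with the convention $\Astar[v,v] = Kp$ (resp.\ $p$) for $v \in L$ (resp.\ $v \in R$), noting that this convention affects only a negligible diagonal perturbation of $\cLstar$, and then the blockwise verification above goes through verbatim.
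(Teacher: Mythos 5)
Your proposal is correct and follows essentially the same route as the paper: compute $\vd^{\star}[v] = \tfrac{n}{2}(p\cdot\tfrac{K+1}{2}+q)$ for $v\in L$ and $\tfrac{n}{2}(p+q)$ for $v\in R$, then read off each block as $\astar_i[j]/\sqrt{\vd^{\star}[i]\vd^{\star}[j]}$. Your remark about the self-loop convention is also consistent with the paper, which adopts the diagonal-inclusive convention here and defers the (negligible) correction to \Cref{lemma:normalized_delete_sl}.
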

\begin{proof}[Proof of \Cref{lemma:hard_case_lstar}]
It is easy to see that for any $v \in L$, we have $\vd^{\star}[v] = \frac{n}{2} \cdot \inparen{p \cdot \frac{K+1}{2} + q}$ and for any $v \in R$, we have $\vd^{\star}[v] = \frac{n}{2} \cdot \inparen{p + q}$. \Cref{lemma:hard_case_lstar} follows by noting that every element of $\mI-\cLstar$ is of the form $\astar_i[j]/\sqrt{\vd^{\star}[i]\vd^{\star}[j]}$.
\end{proof}

Next, we analyze the eigenvalues and eigenvectors of $\cLstar$.

\begin{lemma}
\label{lemma:hard_case_lstar_evects}
Up to normalization and sign, the eigenvector-eigenvalue pairs of $\mI-\cLstar$ corresponding to the nonzero eigenvalues of $\mI-\cLstar$ are
\begin{align*}
    (\lambda_1^{\star},\vu_1^{\star}) &= \inparen{1, \insquare{\onev_{n/4} \oplus \onev_{n/4} \oplus y_{+} \cdot \onev_{n/4} \oplus y_{+} \cdot \onev_{n/4}}} \\
    (\lambda_2^{\star}, \utwostar) &= \inparen{\frac{(K-1)p}{2\inparen{p \cdot \frac{K+1}{2}+q}} ,\insquare{\onev_{n/4} \oplus -\onev_{n/4} \oplus 0_{n/4} \oplus 0_{n/4}}} \\
    (\lambda_3^{\star}, \vu_3^{\star}) &= \inparen{-1+p\inparen{\frac{1}{p+q}+\frac{K+1}{p(K+1)+2q}}, \insquare{\onev_{n/4} \oplus \onev_{n/4} \oplus y_{-} \cdot \onev_{n/4} \oplus y_{-} \cdot \onev_{n/4}}}
\end{align*}
where $y_{+}$ and $y_{-}$ are chosen according to the formulas
\begin{align*}
    y_{+} &= \sqrt{\frac{2(p+q)}{p(K+1)+2q}} & y_{-} &= -\sqrt{\frac{p(K+1)+2q}{2(p+q)}}.
\end{align*}
Moreover, we have $\lambda_1^{\star} > \lambda_2^{\star} > \lambda_3^{\star} > 0$ and
\begin{align*}
    \lambda_2^{\star}-\lambda_3^{\star} \ge 1 - \frac{p^2(K+3)+4pq}{p^2(K+3)+4pq+2q^2}.
\end{align*}
\end{lemma}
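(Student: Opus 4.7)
The plan is to exploit the low-rank block structure of $\mI - \cLstar = \Dstar^{-1/2}\Astar\Dstar^{-1/2}$. Since $\Astar$ has only three distinct rows (one for each of $L_1$, $L_2$, $R$), its rank is $3$, so $\mI - \cLstar$ has exactly three nonzero eigenvalues and every corresponding eigenvector lies in the three-dimensional subspace spanned by the indicator vectors of $L_1$, $L_2$, $R$. It therefore suffices to exhibit three eigenvectors in this subspace with the claimed eigenvalues.

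I would split by symmetry. For the antisymmetric part, a direct entrywise check, using the block entries from \Cref{lemma:hard_case_lstar}, verifies that $\vu_2^{\star}$ is an eigenvector: writing $\vd^{\star}_L = (n/2)(p(K+1)/2 + q)$ for the common degree on $L$, for $v \in L_1$ the $L_1 \times L_1$ contribution $(n/4) Kp / \vd^{\star}_L$ and the $L_1 \times L_2$ contribution $-(n/4) p / \vd^{\star}_L$ combine to $\lambda_2^{\star} = (K-1)p/(p(K+1)+2q)$, while for $v \in R$ the $L_1$ and $L_2$ contributions cancel by antisymmetry.

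For the two remaining eigenvectors, I would use the ansatz $\vu = \onev_{L_1} + \onev_{L_2} + y \cdot \onev_R$, which reduces the eigenvalue problem to a $2 \times 2$ system in $(1, y)$ once the two distinct row actions are computed. Eliminating $\lambda$ yields a quadratic of the form $y^2 + c y - 1 = 0$, whose roots $y_{\pm}$ satisfy $y_+ y_- = -1$ by Vieta's formulas (consistent with $\ip{\vu_1^{\star}, \vu_3^{\star}} = 0$); back-substitution recovers the claimed formulas for $y_{\pm}$ and the eigenvalues $\lambda_1^{\star} = 1$ (the trivial eigenvalue, as $\vu_1^{\star}$ is proportional to the Perron eigenvector $(\Dstar)^{1/2}\onev$ of $\mI - \cLstar$) and $\lambda_3^{\star} = p/(p+q) + p(K+1)/(p(K+1)+2q) - 1$.

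For the ordering: $\lambda_1^{\star} > \lambda_2^{\star}$ is immediate; $\lambda_3^{\star} > 0$ reduces to $p^2(K+1) > 2q^2$, which follows from $K \ge 3p/q$ and $p > q$; and $\lambda_2^{\star} > \lambda_3^{\star}$ follows from the clean simplification $\lambda_2^{\star} - \lambda_3^{\star} = q/(p+q) - 2p/(p(K+1) + 2q)$ under the same hypotheses. The main obstacle is the quantitative lower bound $\lambda_2^{\star} - \lambda_3^{\star} \ge 2q^2 / (p^2(K+3) + 4pq + 2q^2)$: after clearing denominators, this reduces to positivity of the polynomial $f(K) := -2p^4(K+3) + p^3 q (K^2 + 2K - 11) + 4 p^2 q^2 (K-1)$. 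I would establish this by computing $f'(K) = 2 p^2 (pq(K+1) + 2q^2 - p^2) > 0$ under $K \ge 3p/q$ (so $f$ is strictly increasing in $K$), and then checking the boundary value $f(3p/q) = p^2(3 p^3/q + pq - 4 q^2)$, which is positive because $p > q$ gives $3 p^3 + p q^2 > 3 q^3 + q^3 = 4 q^3$.
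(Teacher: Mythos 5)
Your proposal is correct and follows the same overall strategy as the paper: exploit the rank-$3$ block structure to restrict to block-constant vectors, exhibit the three eigenpairs, and then do algebra for the ordering and the gap. Two execution details differ. First, the paper verifies all three eigenpairs by direct entrywise computation, whereas you derive $\vu_1^{\star},\vu_3^{\star}$ from the reduced $2\times 2$ system and Vieta's formulas; both work, and your route has the small advantage of explaining where $y_{\pm}$ come from (and why $y_+y_-=-1$, i.e.\ orthogonality). Second, and more substantively, for the quantitative bound $\lambda_2^{\star}-\lambda_3^{\star}\ge 2q^2/(p^2(K+3)+4pq+2q^2)$ the paper avoids your polynomial $f(K)$ entirely: it writes $\lambda_2^{\star}-\lambda_3^{\star}=1-\frac{p^2(K+3)+4pq}{p^2(K+1)+pq(K+3)+2q^2}$ and observes that the denominator dominates $p^2(K+3)+4pq+2q^2$ exactly when $pq(K-1)\ge 2p^2$, which is immediate from $K\ge 3p/q$. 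That one-line denominator comparison replaces your cross-multiplication, the monotonicity check $f'(K)>0$, and the boundary evaluation at $K=3p/q$. Your longer computation is nonetheless correct: $f(K)=-2p^4(K+3)+p^3q(K^2+2K-11)+4p^2q^2(K-1)$ is indeed the cleared-denominator difference, and $f(3p/q)=p^2(3p^3/q+pq-4q^2)>0$ as you claim. The remaining pieces, namely the simplification $\lambda_2^{\star}-\lambda_3^{\star}=\frac{q}{p+q}-\frac{2p}{p(K+1)+2q}$ and the criterion $\lambda_3^{\star}>0\iff p^2(K+1)>2q^2$, match the paper's intermediate computations.
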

\begin{proof}[Proof of \Cref{lemma:hard_case_lstar_evects}]
As we can see from \Cref{lemma:hard_case_lstar}, $\mI-\cLstar$ is a matrix whose rank is at most $3$, since it can be constructed by carefully repeating $3$ distinct column vectors. Thus, it can have at most $3$ nonzero eigenvalues. In what follows, we consider the case where $K > 1$ so that there are exactly $3$ nonzero eigenvalues. 

The next step is to confirm that the stated eigenvalue-eigenvector pairs are in fact valid. We begin with $\vu_1^{\star}$. Every entry in the first $n/2$ entries of $(\mI-\cLstar)\vu_1^{\star}$ can be expressed as
\begin{align*}
    &\quad\frac{n}{4} \cdot \frac{Kp}{\frac{n}{2} \cdot \inparen{p \cdot \frac{K+1}{2}+q}} + \frac{n}{4} \cdot \frac{p}{\frac{n}{2} \cdot \inparen{p \cdot \frac{K+1}{2}+q}} + \frac{n}{2} \cdot \inparen{\frac{q\cdot\sqrt{\frac{2(p+q)}{p(K+1)+2q}}}{\sqrt{\frac{n}{2} \cdot \inparen{p \cdot \frac{K+1}{2} + q}\cdot\frac{n}{2}\cdot\inparen{p+q}}}} \\
    &= \frac{(K+1)p}{(K+1)p+2q} + \frac{q\cdot\sqrt{\frac{2(p+q)}{p(K+1)+2q}}}{\sqrt{\inparen{p \cdot \frac{K+1}{2} + q}\inparen{p+q}}} = \frac{(K+1)p}{(K+1)p+2q} + \frac{q\cdot\sqrt{\frac{2}{p(K+1)+2q}}}{\sqrt{\inparen{p \cdot \frac{K+1}{2} + q}}} \\
    &= \frac{(K+1)p}{(K+1)p+2q} + \frac{2q}{(K+1)p+2q} = 1,
\end{align*}
and every entry in the second $n/2$ entries of $(\mI-\cLstar)\vu_1^{\star}$ can be expressed as
\begin{align*}
    &\quad \frac{n}{2} \cdot \frac{q}{\sqrt{\frac{n}{2} \cdot \inparen{p \cdot \frac{K+1}{2} + q}\cdot\frac{n}{2}\cdot\inparen{p+q}}} + \frac{n}{2} \cdot \frac{p}{\frac{n}{2} \cdot (p+q)} \cdot \sqrt{\frac{2(p+q)}{p(K+1)+2q}} \\
    &= \frac{q}{\sqrt{\inparen{p \cdot \frac{K+1}{2} + q}\inparen{p+q}}} +\frac{p}{(p+q)} \cdot \sqrt{\frac{2(p+q)}{p(K+1)+2q}} \\
    &= \frac{q}{\sqrt{\inparen{p \cdot \frac{K+1}{2} + q}\inparen{p+q}}} + p\cdot \sqrt{\frac{1}{\inparen{p \cdot \frac{K+1}{2} + q}\inparen{p+q}}} \\
    &= \frac{\sqrt{p+q}}{\sqrt{p \cdot \frac{K+1}{2} + q}} = \sqrt{\frac{2(p+q)}{p(K+1)+2q}} = y_{+}.
\end{align*}
For $\vu_2^{\star}$, we can use the block structure and easily verify
\begin{align*}
    \inparen{\mI-\cLstar}\utwostar = \frac{n}{4} \cdot \frac{(K-1)p}{\frac{n}{2} \cdot \inparen{p \cdot \frac{K+1}{2}+q}} \insquare{\onev_{n/4} \oplus -\onev_{n/4} \oplus 0_{n/4} \oplus 0_{n/4}} = \lambda_2^{\star}\utwostar.
\end{align*}
We now address $\vu_3^{\star}$. The first $n/2$ entries of $(\mI-\cLstar)\vu_3^{\star}$ are
\begin{align*}
    &\quad\frac{n}{4} \cdot \frac{Kp}{\frac{n}{2} \cdot \inparen{p \cdot \frac{K+1}{2}+q}} + \frac{n}{4} \cdot \frac{p}{\frac{n}{2} \cdot \inparen{p \cdot \frac{K+1}{2}+q}} + \frac{n}{2} \cdot \inparen{\frac{q\cdot-\sqrt{\frac{p(K+1)+2q}{2(p+q)}}}{\sqrt{\frac{n}{2} \cdot \inparen{p \cdot \frac{K+1}{2} + q}\cdot\frac{n}{2}\cdot\inparen{p+q}}}} \\
    &= \frac{(K+1)p}{(K+1)p+2q} + \inparen{\frac{q\cdot-\sqrt{\frac{1}{p+q}}}{\sqrt{p+q}}} = \frac{(K+1)p}{(K+1)p+2q} - \frac{q}{p+q} = \lambda_3^{\star},
\end{align*}
and the second $n/2$ entries of $(\mI-\cLstar)\vu_3^{\star}$ are
\begin{align*}
    &\quad \frac{n}{2} \cdot \frac{q}{\sqrt{\frac{n}{2} \cdot \inparen{p \cdot \frac{K+1}{2} + q}\cdot\frac{n}{2}\cdot\inparen{p+q}}} + \frac{n}{2} \cdot \frac{p}{\frac{n}{2}\cdot\inparen{p+q}} \cdot -\sqrt{\frac{p(K+1)+2q}{2(p+q)}} \\
    &= \frac{q}{\sqrt{\inparen{p \cdot \frac{K+1}{2} + q}\inparen{p+q}}} - \frac{p}{\inparen{p+q}} \cdot \sqrt{\frac{p(K+1)+2q}{2(p+q)}} \\
    &= -\sqrt{\frac{p(K+1)+2q}{2(p+q)}}\inparen{\frac{-2q}{p(K+1)+2q}+\frac{p}{p+q}} = y_{-} \cdot \lambda_3^{\star}.
\end{align*}
Finally, it remains to check that $1 > \lambda_2^{\star} > \lambda_3^{\star} > 0$. The fact that $\lambda_2^{\star} < 1$ easily follows from using $p + q > 0$. To prepare to bound $\lambda_2^{\star}-\lambda_3^{\star}$, we first use $p \ge q$ to establish
\begin{align*}
    p^2-pq+2q^2 = p(p-q)+2q^2 \ge 2q^2.
\end{align*}
This implies
\begin{align*}
   pq(K-1)+2q^2 \ge 3p^2-pq+2q^2 = 2p^2 + (p^2-pq+2q^2) \ge 2p^2+2q^2,
\end{align*}
which rearranges to
\begin{align*}
    p^2(K+1)+pq(K+3)+2q^2 \ge p^2(K+3)+4pq + 2q^2.
\end{align*}
Next, we write
\begin{align*}
    \lambda_2^{\star} - \lambda_3^{\star} &= \inparen{\frac{(K-1)p}{2\inparen{p \cdot \frac{K+1}{2}+q}}} - \inparen{-1+p\inparen{\frac{1}{p+q}+\frac{K+1}{p(K+1)+2q}}} \\
    &= 1 - \frac{p}{p+q}-\frac{2p}{p(K+1)+2q} = 1 - \inparen{\frac{p^2(K+1)+2pq+2p^2+2pq}{(p+q)(p(K+1)+2q)}} \\
    &= 1 - \frac{p^2(K+3)+4pq}{p^2(K+1)+pq(K+3)+2q^2} \ge 1 - \frac{p^2(K+3)+4pq}{p^2(K+3)+4pq+2q^2} > 0.
\end{align*}
Finally, to show $\lambda_3^{\star} > 0$, we write
\begin{align*}
    \lambda_3^{\star}+1 = \frac{p}{p+q} + \frac{p(K+1)}{p(K+1)+2q} > \frac{2p}{p+q} > 1,
\end{align*}
which allows us to complete the proof of \Cref{lemma:hard_case_lstar_evects}.
\end{proof}

Next, we argue that studying $\cL^{\star}$, which is formed by taking into account the weighted self-loops, gives us an understanding that is not too far from that of $\cL^{\star}_{\mathsf{nl}}$, which is formed by setting $p_{vv} = 0$ for all $v \in V$.

\begin{lemma}
\label{lemma:normalized_delete_sl}
Let $\mP$ be the diagonal matrix where $\mP[v,v]=p_{vv}$. Let $\cL^{\star}_{\mathsf{nl}}$ be the normalized Laplacian of the graph formed by $\Astar - \mP$. Then, we have
\begin{align*}
    \opnorm{\cL^{\star}-\cL^{\star}_{\mathsf{nl}}} \le \frac{6K}{n-2}. 
\end{align*}
\end{lemma}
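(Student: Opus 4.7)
The plan is to decompose $\cL^\star - \cL^\star_{\mathsf{nl}}$ into a diagonal part $\mathbf{D}$ and an off-diagonal part $\mathbf{T}$, bound each in operator norm, and combine by the triangle inequality. Direct entrywise expansion of $(\Dstar_{\mathsf{nl}})^{-1/2}(\Astar - \mP)(\Dstar_{\mathsf{nl}})^{-1/2} - (\Dstar)^{-1/2}\Astar(\Dstar)^{-1/2}$ gives $\mathbf{D}[v,v] = -p_{vv}/\vd^\star[v]$ (because $(\Astar - \mP)[v,v] = 0$) and, for $u \neq v$,
\begin{equation*}
    \mathbf{T}[u,v] \;=\; \astar[u,v]\,\Delta_{uv}, \qquad \Delta_{uv} := \frac{1}{\sqrt{\vd^\star_{\mathsf{nl}}[u]\vd^\star_{\mathsf{nl}}[v]}} - \frac{1}{\sqrt{\vd^\star[u]\vd^\star[v]}} \ge 0.
\end{equation*}

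The diagonal bound is immediate: $\opnorm{\mathbf{D}} = \max_v p_{vv}/\vd^\star[v]$, and substituting the explicit degrees from \Cref{lemma:hard_case_lstar} yields $p_{vv}/\vd^\star[v] \le 4K/(n(K+1)) \le 4/n$ for $v \in L$ and $\le 2/n$ for $v \in R$; so $\opnorm{\mathbf{D}} \le 4/n$.

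For $\mathbf{T}$, I bound the operator norm by the maximum $\ell_1$ row sum, which suffices by symmetry of $\mathbf{T}$. The key per-entry estimate, obtained from the algebraic identity $a'b' - ab = (a'-a)b' + a(b'-b)$ (with $a = 1/\sqrt{\vd^\star[u]}$, $a' = 1/\sqrt{\vd^\star_{\mathsf{nl}}[u]}$, and analogously $b, b'$) combined with the scalar bound $1/\sqrt{d-\epsilon} - 1/\sqrt{d} \le \epsilon/(2(d-\epsilon)^{3/2})$, is
\begin{equation*}
    \Delta_{uv} \;\le\; \frac{1}{2\sqrt{\vd^\star_{\mathsf{nl}}[u]\vd^\star_{\mathsf{nl}}[v]}}\left(\frac{p_{uu}}{\vd^\star_{\mathsf{nl}}[u]} + \frac{p_{vv}}{\vd^\star_{\mathsf{nl}}[v]}\right).
\end{equation*}
\Cref{lemma:laplacian_diff_decreasing} is the monotonicity that lets me pass to worst-case block values of the denominators. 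For $u \in L_1$, the combined $L_1 \cup L_2$ contribution to $\sum_{v \neq u}|\mathbf{T}[u,v]|$ is at most $(n/4)(K+1)p \cdot \Delta_{LL}$, where $\Delta_{LL} = Kp/(D_L(D_L-Kp))$; using $D_L \ge n(K+1)p/4$ and $D_L - Kp \ge nKp/4$ (the latter valid once $n \ge 4K$), the $(K+1)$ factors cancel and this telescopes to $Kp/(D_L-Kp) \le 4/n$. The $L \times R$ contribution $(n/2)q\Delta_{LR}$ is handled by the inequality above together with $q \le p$ and the lower bound $\sqrt{(D_L-Kp)(D_R-p)} \gtrsim p\sqrt{nK(n-2)}$, which gives an $O(n^{-3/2})$ lower-order term. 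A parallel calculation for $u \in R$ yields a row sum of at most $(n/2)p\Delta_{RR} + O(n^{-3/2}) \le 2/(n-2) + O(n^{-3/2})$.

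Combining, $\opnorm{\cL^\star - \cL^\star_{\mathsf{nl}}} \le \opnorm{\mathbf{D}} + \opnorm{\mathbf{T}} \le 4/n + 4/n + O(n^{-3/2}) = 8/n + O(n^{-3/2})$. Since the block setup forces $K \ge 3p/q > 3$, one checks that $8/n \le 6K/(n-2)$ for any $n$ past a small constant, and the lower-order $O(n^{-3/2})$ term is absorbed for $n$ sufficiently large. The main obstacle I anticipate is making the $L$-block sum collapse to $4/n$ rather than to $4K/n$: this relies crucially on the tighter lower bound $D_L - Kp \ge nKp/4$ (which is precisely what forces the mild $n \ge 4K$ requirement), and it is what provides all the slack relative to the target $6K/(n-2)$. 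No low-rank or spectral structure of $\Astar$ beyond the explicit block values is needed.
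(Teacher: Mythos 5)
Your proposal is correct and follows the same basic strategy as the paper: split $\cL^{\star}-\cL^{\star}_{\mathsf{nl}}$ into its diagonal and off-diagonal parts, bound each entrywise using the explicit block degrees, and combine. Where you diverge is in the aggregation of the off-diagonal part: the paper bounds its operator norm crudely by $n$ times the maximum entry (via \Cref{lemma:split_matrix_diagonal_opnorm}, after bounding every off-diagonal entry by $4K/(n^2-2n)$ using \Cref{lemma:laplacian_diff_decreasing}), which yields $4K/(n-2)$, whereas you bound it by the maximum $\ell_1$ row sum with per-block estimates, which buys a $K$-free bound of order $1/n$ — strictly sharper, at the cost of the mild extra hypothesis $n \ge 4K$ (harmless here, since $K$ is a constant and the lemma is only invoked for $n$ large). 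One bookkeeping slip: the $L\times R$ cross contribution $\tfrac{n}{2}q\,\Delta_{LR}$ is not $O(n^{-3/2})$. Since $\sqrt{(D_L-Kp)(D_R-p)} \asymp pn\sqrt{K}$ and the bracketed sum $\tfrac{Kp}{D_L-Kp}+\tfrac{p}{D_R-p} = O(1/n)$, the product is $\Theta\bigl(1/(\sqrt{K}\,n)\bigr)$ — the factors of $p$ cancel, so no extra power of $n$ appears. This does not endanger the conclusion (the corrected total is still roughly $10/n \le 18/(n-2) \le 6K/(n-2)$ using $K \ge 3$), but it should be stated as a constant-factor comparison rather than an asymptotically negligible term.
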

\begin{proof}[Proof of \Cref{lemma:normalized_delete_sl}]
Recall $\Lstar \coloneqq \mD^{\star}-\Astar$. Let $\mD^{\star}_{\mathsf{nl}}$ be defined analogously to $\cL^{\star}_{\mathsf{nl}}$. Observe that we have
\begin{align*}
    \cL^{\star} &= \inparen{\mD^{\star}}^{-1/2}\Lstar\inparen{\mD^{\star}}^{-1/2} \\
    \cL^{\star}_{\mathsf{nl}} &= \inparen{\mD^{\star}_{\mathsf{nl}}}^{-1/2}\Lstar\inparen{\mD^{\star}_{\mathsf{nl}}}^{-1/2}.
\end{align*}
From this, we see that writing down the $v,w$th entry of the difference gives
\begin{align*}
    \inparen{\cL^{\star}_{\mathsf{nl}}-\cL^{\star}}[v,w] &= \Lstar[v,w]\inparen{\frac{1}{\sqrt{(\vd^{\star}[v]-p_{vv})(\vd^{\star}[w]-p_{ww})}}-\frac{1}{\sqrt{\vd^{\star}[v]\vd^{\star}[w]}}}.
\end{align*}
This resolves to different forms based on whether $v=w$. When $v=w$, evaluating the formula gives
\begin{align*}
    \inparen{\cL^{\star}_{\mathsf{nl}}-\cL^{\star}}[v,v] = \frac{\vd^{\star}[v]-p_{vv}}{\vd^{\star}[v]-p_{vv}} - \frac{\vd^{\star}[v]-p_{vv}}{\vd^{\star}[v]} = \frac{p_{vv}}{\vd^{\star}[v]}.
\end{align*}
When $v\neq w$, we apply \Cref{lemma:laplacian_diff_decreasing} and get
\begin{align*}
    \abs{\inparen{\cL^{\star}_{\mathsf{nl}}-\cL^{\star}}[v,w]} &= p_{vw}\inparen{\frac{1}{\sqrt{(\vd^{\star}[v]-p_{vv})(\vd^{\star}[w]-p_{ww})}}-\frac{1}{\sqrt{\vd^{\star}[v]\vd^{\star}[w]}}} \\
    &\le Kp\inparen{\frac{1}{np/2-p}-\frac{1}{np/2}} = \frac{4K}{n^2-2n}.
\end{align*}
Using this analysis and applying \Cref{lemma:split_matrix_diagonal_opnorm} gives
\begin{align*}
    \opnorm{\cL^{\star}_{\mathsf{nl}}-\cL^{\star}} &\le \max_{v \in V} \frac{p_{vv}}{\vd^{\star}[v]} + n\max_{v\neq w} p_{vw}\inparen{\frac{1}{\sqrt{(\vd^{\star}[v]-p_{vv})(\vd^{\star}[w]-p_{ww})}}-\frac{1}{\sqrt{\vd^{\star}[v]\vd^{\star}[w]}}} \\
    &\le \frac{2K}{n} + n\max_{v\neq w} p_{vw}\inparen{\frac{1}{\sqrt{(\vd^{\star}[v]-p_{vv})(\vd^{\star}[w]-p_{ww})}}-\frac{1}{\sqrt{\vd^{\star}[v]\vd^{\star}[w]}}} \\
    &\le \frac{2K}{n} + \frac{4K}{n-2} \le \frac{6K}{n-2},
\end{align*}
completing the proof of \Cref{lemma:normalized_delete_sl}.
\end{proof}

This gives \Cref{lemma:normalized_delete_sl_u2}, which means we can use $\utwostar$ as a suitable proxy for $\signv{\vu_2(\cL_{\mathsf{nl}}^{\star})}$.

\begin{lemma}
\label{lemma:normalized_delete_sl_u2}
There exists a constant $C(\alpha,K)$ depending on $\alpha$ and $K$ such that we have
\begin{align*}
    \norm{\vu_2(\cL_{\mathsf{nl}}^{\star}) - \utwostar}_{\infty} \le \frac{C(\alpha,K)}{n}.
\end{align*}
This implies that for all $n$ sufficiently large, we have $\signv{\vu_2(\cL_{\mathsf{nl}}^{\star})} = \signv{\utwostar}$.
\end{lemma}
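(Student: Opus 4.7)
The approach is to apply a Davis-Kahan style perturbation bound using \Cref{lemma:normalized_delete_sl} for the operator-norm distance and \Cref{lemma:hard_case_lstar_evects} for the spectral gap. Throughout, $\utwostar$ refers to the unit-norm second eigenvector of $\cL^{\star}$ exhibited by \Cref{lemma:hard_case_lstar_evects}, namely the vector obtained by normalizing $[\onev_{n/4} \oplus -\onev_{n/4} \oplus 0_{n/4} \oplus 0_{n/4}]$, so its entries have magnitude $\sqrt{2/n}$ on $L_1 \cup L_2$ and are zero on $R$.

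First I would invoke a Davis-Kahan bound for symmetric matrices. The proof of \Cref{lemma:dk_easy} only uses that both matrices are self-adjoint and that one has a gap around its second eigenvalue, so the same argument applies verbatim to the symmetric matrices $\cL^{\star}$ and $\cL^{\star}_{\mathsf{nl}}$, yielding
\begin{equation*}
    \norm{\vu_2(\cL^{\star}_{\mathsf{nl}}) - \utwostar}_2 \le \sqrt{2} \cdot \frac{\opnorm{\cL^{\star} - \cL^{\star}_{\mathsf{nl}}}}{\abs{\lambda_3(\cL^{\star}) - \lambda_2(\cL^{\star})}}.
\end{equation*}
The numerator is at most $6K/(n-2)$ by \Cref{lemma:normalized_delete_sl}. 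For the denominator, using the notation of \Cref{lemma:hard_case_lstar_evects}, the eigenvalues of $\cL^{\star}$ in ascending order are $0, \, 1-\lambda_2^{\star}, \, 1-\lambda_3^{\star}, \, 1, \dots, 1$, so the relevant gap satisfies
\begin{equation*}
    \lambda_3(\cL^{\star}) - \lambda_2(\cL^{\star}) = \lambda_2^{\star} - \lambda_3^{\star} \ge \frac{2q^2}{p^2(K+3)+4pq+2q^2},
\end{equation*}
which is a positive constant depending only on $\alpha$ and $K$ in our regime (where $p/q$ and $K$ are $O(1)$). Combining these two estimates and using the trivial bound $\norm{\cdot}_{\infty} \le \norm{\cdot}_2$ yields the required $\ell_{\infty}$ estimate with a constant $C(\alpha,K)$ depending only on $\alpha$ and $K$.

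For the sign conclusion, observe that $\utwostar$ has entries of magnitude $\sqrt{2/n}$ on $L_1 \cup L_2$, and $\sqrt{2/n} \gg C(\alpha,K)/n$ for all $n$ sufficiently large. Hence for each $v \in L_1 \cup L_2$, the entry $\vu_2(\cL^{\star}_{\mathsf{nl}})[v]$ is within $C(\alpha,K)/n$ of $\utwostar[v]$ and therefore has the same strict sign. Strict entrywise sign equality on $R$ is more delicate since $\utwostar$ vanishes there; however, the entries of $\vu_2(\cL^{\star}_{\mathsf{nl}})$ on $R$ are at most $C(\alpha,K)/n$ in magnitude, so the bipartition produced by thresholding at zero cleanly separates $L_1$ from $L_2$ irrespective of how the small-magnitude entries on $R$ are signed, which is what is actually used in the proof of \Cref{mainthm:inconsistency}.

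The main obstacle I anticipate is minor bookkeeping: verifying that the Davis-Kahan argument transfers to the symmetric normalized Laplacians with the correct constants, and reconciling the spectrum of $\mI - \cL^{\star}$ computed in \Cref{lemma:hard_case_lstar_evects} with the ordering of the eigenvalues of $\cL^{\star}$ needed for the Davis-Kahan denominator. Since the proof of \Cref{lemma:dk_easy} only manipulates the eigendecomposition of self-adjoint matrices and \Cref{lemma:hard_case_lstar_evects} has already performed the explicit gap computation, I do not expect either step to cause real difficulty.
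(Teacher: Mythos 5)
Your proposal is correct and follows essentially the same route as the paper: Davis--Kahan with the operator-norm perturbation from \Cref{lemma:normalized_delete_sl} in the numerator, the constant gap $\lambda_2^{\star}-\lambda_3^{\star}$ from \Cref{lemma:hard_case_lstar_evects} in the denominator (the paper inserts the small Weyl correction to pass to the mixed gap, which you flag as bookkeeping), and then $\norm{\cdot}_{\infty} \le \norm{\cdot}_2$ together with the $\Theta(1/\sqrt{n})$ entry magnitudes of $\utwostar$. If anything, you are more careful than the paper about the sign claim on $R$, where $\utwostar$ vanishes and only the (sufficient) separation of $L_1$ from $L_2$ is actually needed downstream.
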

\begin{proof}[Proof of \Cref{lemma:normalized_delete_sl_u2}]
By \Cref{lemma:hard_case_lstar_evects}, Weyl's inequality, and \Cref{lemma:normalized_delete_sl}, we know that for all  $n$ sufficiently large,
\begin{align*}
    \lambda_2^{\star} - \lambda_3(\cL_{\mathsf{nl}}^{\star}) &= \inparen{\lambda_2^{\star}-\lambda_3^{\star}} + (\lambda_3^{\star}-\lambda_3(\cL_{\mathsf{nl}}^{\star})) \\
    &\ge \inparen{1 - \frac{p^2(K+3)+4pq}{p^2(K+3)+4pq+2q^2}} - \frac{C_{\ref{lemma:normalized_delete_sl}}}{n} \ge C_1(\alpha,K).
\end{align*}
Combining this with \Cref{lemma:normalized_delete_sl} again, the Davis-Kahan inequality tells us that
\begin{align*}
    \norm{\vu_2(\cL_{\mathsf{nl}}^{\star}) - \utwostar}_{\infty} \le \norm{\vu_2(\cL_{\mathsf{nl}}^{\star}) - \utwostar}_{2} \le \frac{C_2(\alpha,K)}{n},
\end{align*}
and then using the fact that $\maxnorm{\utwostar} = 1/\sqrt{n}$ (arising from \Cref{lemma:hard_case_lstar_evects}) completes the proof of \Cref{lemma:normalized_delete_sl_u2}.
\end{proof}

We are now ready to prove the inconsistency of normalized spectral bisection on the nested block examples. 

\begin{proof}[Proof of \Cref{mainthm:inconsistency}]

Let $G$ be a graph drawn from the nested block example. We choose $p$ and $q$ such that $p \gtrsim \log n / n$ and $p/q = \alpha \ge 2$ where $\alpha$ is some constant and such that $p$ and $q$ both satisfy the conditions of \Cref{mainthm:nonhomogeneous}. Let $K \ge 3\alpha$. Observe that the true communities are $L$ and $R$. We will show that bisection based on $\vu_2$ of $\mI-\cL$ (corresponding to the eigenvector associated with the second smallest eigenvalue of $\cL$) will attain a large misclassification rate. In particular, based on our calculation in \Cref{lemma:hard_case_lstar_evects}, we expect that $\vu_2$ will output a bisection that places $L_1$ and $L_2$ into separate clusters. On the other hand, by \Cref{mainthm:nonhomogeneous}, for all $n$ large enough, the unnormalized spectral bisection algorithm will be strongly consistent.

First, observe that it is enough to prove the inconsistency result just for the symmetric normalized Laplacian. Indeed, observe that if $\vu_2$ is an eigenvector of $\mI-\cL = \mD^{-1/2}\mA\mD^{-1/2}$, then we have
\begin{align*}
    \lambda_2\mD^{-1/2}\vu_2 = \mD^{-1}\mA\mD^{-1/2}\vu_2 = \mD^{-1}\mA(\mD^{-1/2}\vu_2),
\end{align*}
which shows that $\mD^{-1/2}\vu_2$ must be the eigenvector of the random-walk normalized Laplacian $\mI-\mD^{-1}\mA$ corresponding to eigenvalue $\lambda_2$. Since $\mD$ is a positive diagonal matrix, it does not change the signs of $\vu_2$ and therefore the output of the normalized spectral bisection algorithm is the same.

Our general approach to prove the inconsistency is to use the Davis-Kahan Theorem, a bound on $\opnorm{\cL-\cLstar_{\mathsf{nl}}}$, and a bound on the gap $\lambda_2^{\star}-\lambda_3$. Let $\vd_{\min}$ be the minimum degree of the graph given by adjacency matrix $\mA$ and let $\vd_{\min}^{\star}$ be the minimum weighted degree of the graph given by the adjacency matrix $\mA^{\star}$. First, using \cite[Theorem 3.1]{dls20}, we have with probability $1-n^{-r}$ for some constant $r \ge 1$ and constants $C(r)$ and $C$ (the latter of which does not depend on $r$), for all $n$ sufficiently large,
\begin{align*}
    \opnorm{\cL-\cLstar_{\mathsf{nl}}} &\le \frac{C(r)\inparen{n\max_{(i,j)} p_{ij}}^{5/2}}{\min\inbraces{\vd_{\min},\vd_{\min}^{\star}}^3} \\
    &\le \frac{C(r)\inparen{n \cdot Kp}^{5/2}}{\min\inbraces{n(p+q)/3, n(p+q)/3-C\sqrt{n(p+q)\log n}}^3} \\
    &\le \frac{C_1(r,\alpha)K^{5/2}(np)^{5/2}}{\inparen{np}^3} = \frac{C_1(r,\alpha)K^{5/2}}{\sqrt{np}}.
\end{align*}
Next, we invoke \Cref{lemma:normalized_delete_sl} to write
\begin{align*}
    \lambda_2(\cL_{\mathsf{nl}}^{\star}) - \lambda_3 &= \inparen{\lambda_2^{\star}-\lambda_3^{\star}} + (\lambda_3^{\star}-\lambda_3) + \inparen{\lambda_2(\cL_{\mathsf{nl}}^{\star}) - \lambda_2^{\star}} \\
    &\ge \inparen{1 - \frac{p^2(K+3)+4pq}{p^2(K+3)+4pq+2q^2}} - \frac{C_2(r,\alpha)K^{5/2}}{\sqrt{np}} - \frac{C_{\ref{lemma:normalized_delete_sl}}}{n} \ge C_g(\alpha,K),
\end{align*}
where the last line denotes a positive constant depending on $q$ and $K$ (this constant will always be positive for sufficiently large $n$, as we showed that $\lambda_2^{\star}-\lambda_3^{\star} > 0$ in \Cref{lemma:hard_case_lstar_evects}).

Putting everything together, we get by the Davis-Kahan theorem that some signing of $\vu_2$ satisfies
\begin{align*}
    \norm{\vu_2-\vu_2(\cL_{\mathsf{nl}}^{\star})}_2 \le \frac{\opnorm{\cL-\cLstar_{\mathsf{nl}}}}{\min\inbraces{\abs{\lambda_2(\cLstar_{\mathsf{nl}})-\lambda_3},1-\lambda_2(\cLstar_{\mathsf{nl}})}} \le \frac{C_3(r)K^{5/2}}{C_g'(\alpha,K)\sqrt{np}} \le \frac{C_4(r,\alpha,K)}{\sqrt{np}}.
\end{align*}
Now, consider the subset of coordinates of $\vu_2$ belonging to $L_1$. Suppose $m$ of these coordinates do not agree in sign with $\utwostar$. To maximize $m$, each of these coordinates in $\vu_2$ should be $0$, so using this reasoning and applying \Cref{lemma:normalized_delete_sl_u2} means the total $\ell_2$ error can be bounded (using \Cref{lemma:normalized_delete_sl}) as
\begin{align*}
    m\inparen{\frac{1}{\sqrt{\nfrac{n}{2}}}-\frac{C_{\ref{lemma:normalized_delete_sl}}}{n}}^2 \le \norm{\vu_2-\vu_2(\cL_{\mathsf{nl}}^{\star})}_2^2 \le \frac{C_4(r,\alpha,K)^2}{np}.
\end{align*}
This means the number of coordinates $m$ on which $\vu_2$ and $\utwostar$ disagree on is at most
\begin{align*}
    \frac{n \cdot C_5(r,\alpha,K)^2}{2np},
\end{align*}
and therefore the misclassification rate of $\vu_2$ with respect to the true labeling induced by $L$ and $R$ must be at least
\begin{align*}
    \frac{\frac{n}{4}-\frac{n \cdot C_5(r,\alpha,K)^2}{2np}}{n} = \frac{1}{4} - \frac{C_5(r,\alpha,K)^2}{2np}.
\end{align*}
Since $p \gtrsim \log n / n$, this completes the proof of \Cref{mainthm:inconsistency}.
\end{proof}

\section{Additional experiments}
\label{sec:experiments_more}
In this section, we show more numerical trials that complement those discussed in \cref{sec:experiments}.

\subsection{Varying edge probabilities in an NSSBM}
\label{subsec:experimentnssbm}
 In \Cref{sec:experiments}, we investigated the behavior of an NSSBM model by fixing the values of $p,q$ and varying the largest edge probability $\pbar$. Here, we take an alternative approach, and instead fix $\pbar$ and vary the values of $p$ and $q$.

\smallpar{Setup.} Let us fix $n=2000$, $\pbar \in  \{1/2, 1\}$. For varying $p,q$ in the range $[1/n,9/20]$ such that $p > q$, we sample $t=3$ independent draws $G$ from the same benchmark distribution $\mathcal{D}_{p,\pbar,q}$ used in \cref{sec:experiments}. For each of them, we compute the agreement of the bipartition obtained by unnormalized spectral bisection with respect to the planted bisection. For each $(p,q)$, we plot the average agreement across the $t$ independent draws. The results are shown in \cref{fig:varying_pairs}, where in the left and right plot we ran the experiments with $\pbar=1/2$ and $\pbar=1$ respectively. The lower diagonal of these plots, where $p \le q$, is artificially set to $0$.

\begin{figure}
    \centering
    \includegraphics[width=0.45\columnwidth]{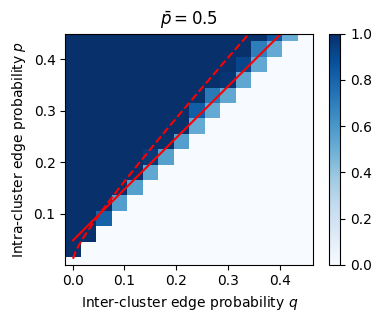}
    \includegraphics[width=0.45\columnwidth]{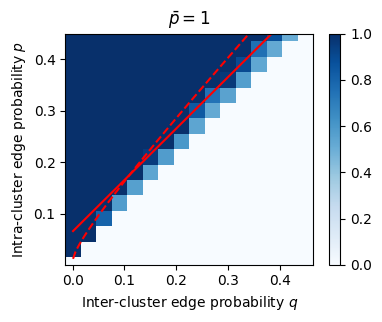}
    \caption{Agreement with the planted bisection of the bipartition obtained from unnormalized spectral bisection, for graphs generated from a distribution in $\mathsf{NSSBM}(n,p,\pbar,q)$ for fixed values of $n,\pbar$ and varying values of $p>q$. The left plot uses $\pbar=1/2$, the right plot uses $\pbar=1$. The solid red curves plot the function $p_{\mathsf{thr}}(q)$ (see~\eqref{eq:thr}), and the dashed red curves plot the function $p_{\mathsf{info}}(q)$ (see~\eqref{eq:info}).}
    \label{fig:varying_pairs}
\end{figure}

\smallpar{Theoretical framing.} According to \cref{mainthm:nonhomogeneous}, fixing the value of $\pbar \in \{1/2,1\}$, we obtain that unnormalized spectral bisection achieves exact recovery provided that for $q \in [1/n,9/20]$ one has $p \ge p_{\mathsf{thr}}(q)$ where
\begin{equation}
\label{eq:thr}
    p_{\mathsf{thr}}(q) = \frac{\sqrt{\pbar \log n}}{\sqrt{n}}+q
\end{equation}
is obtained by rearranging the precondition of \cref{mainthm:nonhomogeneous}, ignoring the constants, and disregarding the fact that $\alpha$ should be $O(1)$. The solid red curve in \cref{fig:varying_pairs} plots $p_{\mathsf{thr}}(q)$ as a function of $q$. For comparison, the information-theoretic threshold for SSBM \cite{abh16} demands that $p \ge p_{\mathsf{info}}(q)$ where
\begin{equation}
\label{eq:info}
    p_{\mathsf{info}}(q) = \inparen{\sqrt{2} \sqrt{\frac{\log n}{n}}+\sqrt{q}}^2 \, .
\end{equation}
The dashed red curve in \cref{fig:varying_pairs} plots $p_{\mathsf{info}}(q)$ as a function of $q$.

\smallpar{Empirical evidence.} From \cref{fig:varying_pairs}, one can see that our experiments reflect the behavior predicted by \cref{mainthm:nonhomogeneous} quite closely, although empirically we achieve $100\%$ agreement slightly above $p_{\mathsf{thr}}(q)$ (i.e. the solid red curve). However, this is likely due to the constant factors from \cref{mainthm:nonhomogeneous} that we ignored, and also $n=2000$ is plausibly too small to show asymptotic behaviors. Nevertheless, we do achieve $100\%$ agreement consistently as soon as we surpass the information-theoretic threshold $p_{\mathsf{info}}(q)$: in the regime of our experiment, it appears that the unnormalized Laplacian is robust all the way to the optimal threshold for exact recovery in the SSBM.

\subsection{Varying the size of  a planted clique in a DCM}
In some sense, the experiments from \Cref{sec:experiments} and \cref{subsec:experimentnssbm} can be thought of as experiments for the deterministic clusters model too. This is because each realization of the internal edges gives rise to a different DCM distribution (see \cref{sec:results}). We complement our previous discussion by illustrating the behavior of certain families of DCM distributions that are conceptually different than those considered in \Cref{sec:experiments}.

\smallpar{Benchmark distribution.} Let $n$ be divisible by $4$ and let $\{P_1,P-2\}$ be a partitioning of $V=[n]$ into two equally-sized subsets. Fix $ p \in [0,1]$. For some set $S \subseteq P_1$ such that $S=\{1,\dots,|S|\}$ (for simplicity), let $G_2 = (P_2,E_2)\sim \mathsf{ER}(n/2,p)$ be a graph drawn from the Erd\H{o}s-R\'enyi distribution with sampling rate~$p$, and let $G_1 = (P_1,E_1)\sim \mathsf{ERPC}(n/2,p,S)$ be also a graph drawn from the Erd\H{o}s-R\'enyi distribution with sampling rate $p$ where we additionally plant a clique on the vertices $S$. Fixing $G_1,G_2$, for $q \in [0,1]$ we consider the distribution \smash{$\mathcal{D}_{q}^{G_1,G_2}$} over graphs $G=(V,E)$ where $G[P_1]=G_1$, $G[P_2]=G_2$, and every edge $(u,v) \in P_1 \times P_2$ is sampled independently with probability $q$. One can see that $\mathcal{D}_{q}^{G_1,G_2}$ is in fact in the set $\mathsf{DCM}(n,d_{\mathsf{in}},q)$ for some $d_{\mathsf{in}}$.

\smallpar{Setup.} Let us fix $n=2000$, $p=9/\sqrt{n}$, $q = 1/\sqrt{n}$. For varying values of $|S|$ in the range $[|P_1|/10,|P_1|]$, we sample $G_1 = (P_1,E_1)\sim \mathsf{ERPC}(n/2,p,S)$ and $G_2 = (P_2,E_2)\sim \mathsf{ER}(n/2,p)$, and then draw $t=10$ independent samples $G$ from $\mathcal{D}_{q}^{G_1,G_2}$. For each sample $G$, we run spectral bisection (i.e. \cref{alg:spectral_general}) with matrices $\mL,\cL_{\mathsf{sym}},\cL_{\mathsf{rw}},\mA$. Then, we compute the {agreement} of the bipartition hence obtained with respect to the planted bisection, and average it out across the $t$ independent draws. The results are shown in the left plot of \cref{fig:varying_clique_cuts}. Again, another natural way to get a bipartition of $V$ from the eigenvector is a {sweep cut}, and the average agreements that this results in are shown in the right plot of \cref{fig:varying_clique_cuts}.

\begin{figure}
    \centering
    \includegraphics[width=0.45\columnwidth]{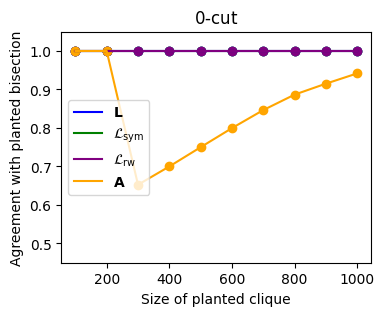}
    \includegraphics[width=0.45\columnwidth]{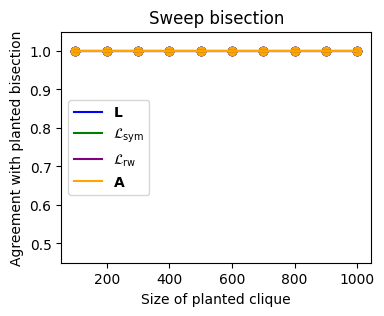}
    \caption{Agreement with the planted bisection of the bipartition obtained from several matrices associated with an input graph generated from a distribution $\mathcal{D}_{q}^{G_1,G_2} \in \mathsf{DCM}(n,d_{\mathsf{in}},q)$ for fixed values of $n,q$ and varying the size of the planted clique $S$. In the left plot, the bipartition is the $0$-cut of the second eigenvector, as in \cref{alg:spectral_general}. In the right plot, the bipartition is the sweep cut of the first $n/2$ vertices in the second eigenvector.}
    \label{fig:varying_clique_cuts}
\end{figure}

\smallpar{Theoretical framing.} Ignoring the constants, \cref{mainthm:sqrtngap} guarantees that exact recovery is achieved by unnormalized spectral bisection as long as \smash{$d_{\mathsf{in}} \ge nq+\sqrt{n}$} and \smash{$\lambda_3(\widehat{\mL})-\lambda_2(\widehat{\mL}) \ge \sqrt{n} + nq+\sqrt{nq \log n} +\log n$}, where \smash{$\widehat{\mL}$} is the expected Laplacian of $\mathcal{D}_{q}^{G_1,G_2}$. For each clique size that we consider, \cref{fig:varying_clique_params} shows the minimum in-cluster degree of the graphs $G_1,G_2$ that we draw (in the left plot), and the spectral gap \smash{$\lambda_3(\widehat{\mL})-\lambda_2(\widehat{\mL})$}. The red horizontal lines in the left and right plot respectively correspond to the value of $nq+\sqrt{n}$ and $\sqrt{n} + nq+\sqrt{nq \log n} +\log n$ on the $y$-axis, indicating the lower bound on $d_{\mathsf{in}}$ and \smash{$\lambda_3(\widehat{\mL})-\lambda_2(\widehat{\mL})$} demanded by \cref{mainthm:sqrtngap}.

\begin{figure}
    \centering
    \includegraphics[width=0.45\columnwidth]{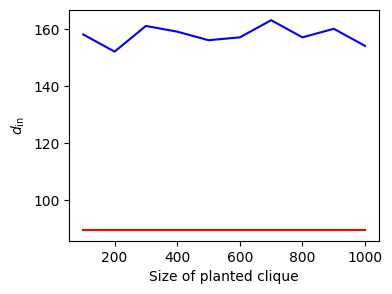}
    \includegraphics[width=0.45\columnwidth]{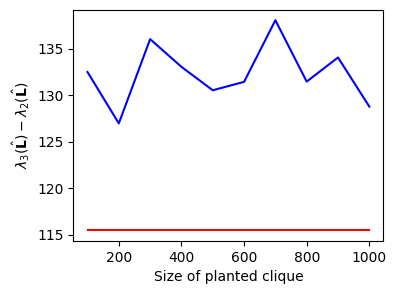}
    \caption{The minimum in-cluster degree $d_{\mathsf{in}}$ and the spectral gap $\lambda_3(\widehat{\mL})-\lambda_2(\widehat{\mL})$ of distributions \smash{$\mathcal{D}_{q}^{G_1,G_2} \in \mathsf{DCM}(n,d_{\mathsf{in}},q)$} with fixed values of $n,q$ and varying the size of the planted clique $S$. The red horizontal line on the left corresponds to the value $nq+\sqrt{n}$, the red horizontal line on the right corresponds to the value $\sqrt{n} + nq+\sqrt{nq \log n} +\log n$.}
    \label{fig:varying_clique_params}
\end{figure}

\smallpar{Empirical evidence: consistency.} From \cref{fig:varying_clique_params}, one can see that all the distributions $\mathcal{D}_{q}^{G_1,G_2}$ that we use roughly meet the requirement of \cref{mainthm:sqrtngap}. Indeed, in the left plot of \cref{fig:varying_clique_cuts} one sees that unnormalized spectral bisection consistently achieves exact recovery for all clique sizes. On the contrary, the bipartition obtained by running spectral bisection with the adjacency matrix $\mA$ misclassifies a fraction of the vertices for certain sizes of the planted clique. Nevertheless, the sweep cut obtained from all the matrices recovers the planted bisection exactly.

\smallpar{Empirical evidence: example embedding.} Let us fix the value $|S|=800$ for the size of the planted clique, for which we see in \cref{fig:varying_clique_cuts} that the adjacency matrix fails to recover the planted bisection. We generate a graph from a distribution $\mathcal{D}_{q}^{G_1,G_2}$ with clique size $|S|=800$, and plot how the vertices are embedded in the real line by the second eigenvector of all the matrices we consider. The result is shown in \cref{fig:embedding}, where the three horizontal dashed lines, from top to bottom, respectively correspond to the value of $1/\sqrt{n}, 0, -1/\sqrt{n}$ on the $y$-axis. Graphically, one can see that the embedding in the unnormalized Laplacian is indeed the one that moves the least away from the values $\pm 1/\sqrt{n}$, and in fact the vertices $\{1,\dots, 800\} \subseteq P_1$ where we plant the clique concentrate even more around $1/\sqrt{n}$. This is a phenomenon related to the one illustrated by \cref{fig:embedding_nssbm}. Finally, one can see from the embedding that splitting vertices around $0$ does result in misclassifying a fraction of the vertices for the adjacency matrix. However, taking a sweep cut that splits the vertices into two equally sized parts recovers the planted bisection for all matrices. This reflects the results shown in \cref{fig:varying_clique_cuts}.

\begin{figure}[H]
    \centering
    \includegraphics[width=0.45\columnwidth]{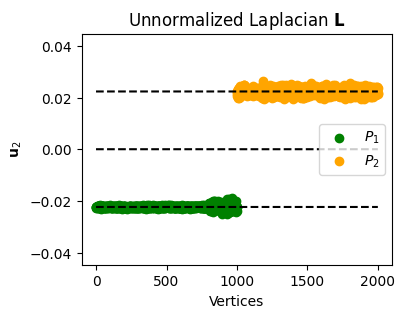}
    \includegraphics[width=0.45\columnwidth]{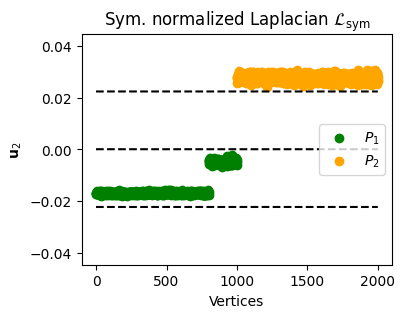}
    \includegraphics[width=0.45\columnwidth]{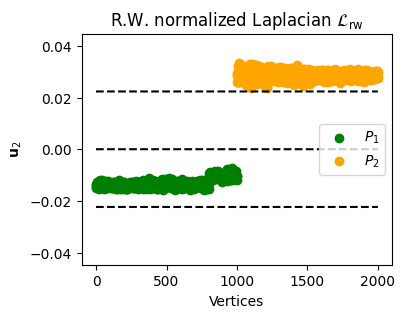}
    \includegraphics[width=0.45\columnwidth]{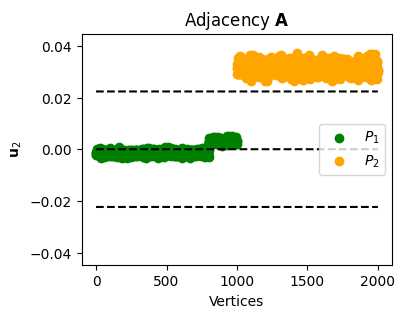}
    \caption{Embedding of the vertices given by the second eigenvector $\vu_2$ of several matrices associated with a graph sampled from a distribution \smash{$\mathcal{D}_{q}^{G_1,G_2} \in \mathsf{DCM}(n,d_{\mathsf{in}},q)$}, with the size of the planted clique set to $|S|=2/5 \cdot n$. Horizontal dashed lines, from top to bottom, correspond to $1/\sqrt{n}, 0, -1/\sqrt{n}$ respectively.}
    \label{fig:embedding}
\end{figure}
\end{document}